\newtheorem{definition}{Definition}
\newtheorem{lemma}[definition]{Lemma}
\newtheorem{remark}[definition]{Remark}
\newtheorem{corollary}[definition]{Corollary}
\newtheorem{problem}[definition]{Problem}
\newtheorem{proposition}[definition]{Proposition}
\newtheorem{theorem}[definition]{Theorem}
\title{
Efficient Search of the $k$ Shortest Non-homotopic Paths by Eliminating Non-$k$-Optimal Topologies
}
\author{Tong Yang, Li Huang, Yue Wang$^*$ and Rong Xiong
\thanks{Tong Yang, Yue Wang and Rong Xiong are with the State Key Laboratory of Industrial Control and Technology, Zhejiang University, P.R. China. Yue Wang is the corresponding author {\tt\small wangyue@iipc.zju.edu.cn}.}
\thanks{Li Huang is with the Institute of Advanced Digital Technologies and Instrumentation, Zhejiang University, P.R. China. }
}
\begin{document}

\maketitle
\thispagestyle{empty}
\pagestyle{empty}

\begin{abstract}
An efficient algorithm to solve the $k$ shortest non-homotopic path planning ($k$-SNPP) problem in a 2D environment is proposed in this paper. 
Motivated by accelerating the inefficient exploration of the homotopy-augmented space of the 2D environment, our fundamental idea is to identify the non-$k$-optimal path topologies as early as possible and terminate the pathfinding along them. 
This is a non-trivial practice because it has to be done at an intermediate state of the path planning process when locally shortest paths have not been fully constructed. 
In other words, the paths to be compared have not rendezvoused at the goal location, which makes the homotopy theory, modelling the spatial relationship among the paths having the same endpoint, not applicable. 

This paper is the first work that develops a systematic distance-based topology simplification mechanism to solve the $k$-SNPP task, whose core contribution is to assert the distance-based order of non-homotopic locally shortest paths before constructing them. 
If the order can be predicted, then those path topologies having more than $k$ better topologies are proven free of the desired $k$ paths and thus can be safely discarded during the path planning process. 
To this end, a hierarchical topological tree is proposed as an implementation of the mechanism, whose nodes are proven to expand in non-homotopic directions and edges (collision-free path segments) are proven locally shortest. 
With efficient criteria that observe the order relations between partly constructed locally shortest paths being imparted into the tree, the tree nodes that expand in non-$k$-optimal topologies will not be expanded. 
As a result, the computational time for solving the $k$-SNPP problem is reduced by near two orders of magnitude. 
\end{abstract}

\section{Introduction}\label{section:introduction}

Given the starting location and the goal location in a fully known 2D environment, the $k$ shortest non-homotopic path planning ($k$-SNPP) problem aims to collect not only the globally shortest path but $(k-1)$ alternative optimal paths, where the ``alternative" property translates to path non-homotopy, and the ``optimal" property indicates that every resultant path is the locally shortest one in its own homotopy class of paths. 
Requiring both the geometry awareness for distance optimality and topology awareness for path non-homotopy, the $k$-SNPP problem is essentially a combination of the shortest path planning (SPP) problem~\cite{Lavalle2006Planning} and the topological path planning (TPP) problem and is crucial~\cite{Bhattacharya2012Topological} as a fundamental sub-module for other robot tasks, such as minimising the conflicts between multiple robots~\cite{Kimmel2013Minimizing}, finding alternative routes~\cite{Werner2014Homotopy}, providing an initial value for non-homotopic trajectory generation~\cite{Rosmann2017Integrated}, multi-robot exploration~\cite{Kim2013Topological}. 

\begin{figure}[t]
\centering
\includegraphics[width=0.44\textwidth]{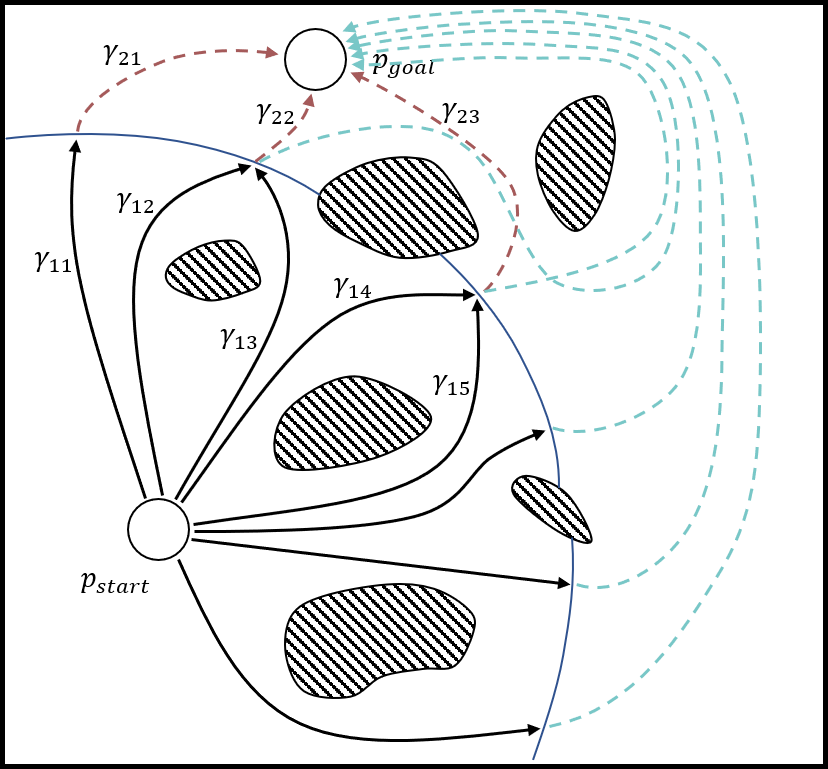}
\caption{Illustration of an intermediate state of a homotopy-aware pathfinding process for a $k$-SNPP task. 
The envelope of the explored part of the environment is visually depicted by the blue curve. 
Black arrows represent the part of paths that have been constructed by the planner, whilst dotted arrows are the path segments that remain to be constructed in the subsequent pathfinding process. 
On the one hand, it is noteworthy that the homotopy between partly constructed paths, such as $\gamma_{11}*\gamma_{21}$ and $\gamma_{12}*\gamma_{22}$, cannot be observed at the current planning state. 
Only after both $\gamma_{21}$ and $\gamma_{22}$ have been constructed can we identify their homotopy and discard one of them. 
On the other hand, the concrete implementation to stop the pathfinding along a homotopy, such as continuing the pathfinding along $\gamma_{12}$ whilst stopping it along $\gamma_{13}$, is unknown, because they may share the same path segment $\gamma_{22}$. 
Moreover, large amounts of path topologies, intuitively depicted in cyan dotted arrows for illustration, would not contain one of the $k$-SNPP resultant paths. 
However, there has not existed any method that instructs (eliminates) the topological directions for path searching from a distance-based perspective. 
}\label{fig:halfway}
\end{figure}

Existing works~\cite{Kim2014Path}~\cite{Bhattacharya2012Topological}~\cite{Pokorny2016High} solved the $k$-SNPP problem by dealing with topology awareness and geometry awareness separately. 
The \textit{homotopy-augmented graph} is required to fully characterise the topology information of the collision-free environment, whereby non-homotopic paths in the 2D environment, after being mapped into the homotopy-augmented graph, are proven to have distinct goals. 
Then, the geometry awareness is handled by performing distance-based pathfinding processes in the homotopy-augmented graph, such as the Dijkstra~\cite{Dijkstra1959Note} algorithm. 
In this regard, the $k$-SNPP problem in the 2D environment is equivalently solved as an infinite-goal shortest pathfinding in the homotopy-augmented graph, from the image of the 2D starting location to all of the images of the 2D goal location, and the pathfinding finishes after $k$ nearest goals are reached. 
Note that the pathfinding process could not know in prior which $k$ goals are the $k$ nearest, hence it inevitably looks for the resultant paths in all directions and would be extremely inefficient. 

One natural idea that motivates a significant improvement in algorithmic efficiency is to develop a mechanism that couples geometric information and topological information, leveraging the topological information to identify non-$k$-optimal path topologies whilst the pathfinding process is performing. 
As such, the pathfinding along non-$k$-optimal topologies can be safely terminated. 
A simple example could be easily appreciated to reflect the efficiency of a possible distance-based topology simplification mechanism: 
Let the A* algorithm be applied to find the globally shortest path in a symmetric ring-like collision-free environment, it will expand nodes symmetrically in all (two) directions, both clockwise (CW) and counter-clockwise (CCW), because which topology contains the globally shortest path is unknown.
If any assertion of the topological optimality exists, such as ``the globally shortest path bypasses the obstacle in CCW direction", then the A* searching space would be reduced to half (the CCW direction only) without loss of global optimality of the resultant path. 
Unfortunately, there has not existed any work that predicts the distance-based optimality of path topologies. 

Save from the toy example discussed above, a distance-based topology simplification mechanism is in general a module that \textit{predicts} the distance-based order among non-homotopic locally shortest paths. 
For any two path topologies, the one having a shorter locally shortest path is better than the other one. 
The pathfinding process along the $k$-best topologies must be necessary to collect the $k$ desirable resultant paths, whilst that of any other topologies would be unnecessary. 
Developing a topology simplification mechanism is however a non-trivial practice because of the following three reasons (see Fig.~\ref{fig:halfway} for illustration): 

\begin{enumerate}
\item The mechanism functions at an intermediate state of the homotopy-aware pathfinding process, when all locally shortest paths are still partly constructed and thus have not rendezvoused at the goal location. 
This means that the term ``homotopy" among partly constructed paths is undefined. 
Topological definitions such as $H$-signature~\cite{Bhattacharya2012Topological} and winding number~\cite{Pokorny2016High} may be legit in distinguishing different partly constructed paths which have the same endpoint, but they have not been developed to perform topology simplification. 
\item Since ``homotopy" is undefined, the implementation of terminating the pathfinding along ``a specific homotopy" is also undefined. 
\item Since locally shortest paths are only partly-constructed, their length is unknown, which makes comparing their length an unachievable task. 
This becomes further impossible because the length of locally shortest paths, closely related to the goal locations and the shape of obstacles in the environment, 
cannot be precisely predicted. 
\end{enumerate}

In this paper, a practical distance-based topology simplification mechanism to efficiently solve the $k$-SNPP problem in 2D environments is deeply investigated. 
Throughout this paper, we assume that the planner knows no prior information about the environments, neither topological nor geometrical. 
For example, when the planner detects two disconnected parts of the boundary of a single obstacle, it is unable to distinguish whether they belong to a single obstacle or there are two obstacles with a traversable corridor in between. 
In particular, the number of internal obstacles is unknown which was usually required for calculating topological invariants such as the $H$-signature. 
The main contributions of this paper are summarised as follows: 
\begin{enumerate}
\item We generalise the classic definition of ``homotopy" to a new definition ``distinguished homotopy" so that the intermediate state of the homotopy-aware path planning process can be analysed in depth. 
\item A goal location relaxation strategy is presented and shown necessary for proposing any distance-based topology simplification mechanism. (Section~\ref{section:goal_relaxation})
\item A hierarchical topological tree is proposed as an implementation of the mechanism, whose nodes are collision-free robot locations, edges are collision-free paths connecting nodes, and each node is assigned a sub-region in the environment. 
Topologically, the tree explores the environment in a complete manner, exhaustively in all topologies.  
Geometrically, the concatenation of edges is proven locally shortest, which is in contrast to all existing hierarchical topological structures~\cite{Ge2011Simultaneous}~\cite{Aldahak2013Frontier}.
The tree is essentially an exhaustive solver for $k$-SNPP without any topology simplification. (Section~\ref{section_node} and~\ref{section_theoretical_analysis})
\item Efficient criteria that compare the partly constructed locally shortest paths are imparted to the topological tree, which act as a tree branch pruning mechanism so that the nodes expanded in non-$k$-optimal topologies are discarded. 
As a result, the $k$-SNPP problem is solved in a reasonable computational time. (Section~\ref{section_related_optimality})
\end{enumerate}

The following sections are organised as follows: 
Section~\ref{section_problem_modelling} formally defines the $k$-SNPP problem. 
Section~\ref{section:goal_relaxation} deeply analyses the necessity of a goal location relaxation strategy for distance-based topology simplification. 
Section~\ref{section_node} presents the construction of a hierarchical topological tree. 
Section~\ref{section_theoretical_analysis} proves the completeness of pathfinding in tree node sub-regions and the local optimality of the concatenation of tree edges. 
Section~\ref{section_related_optimality} constructs the topology simplification mechanism upon the tree structure. 
Experiments are collected in Section~\ref{section_experiment}. 
Relation to existing works are summarised in Section~\ref{section_relatedworks}, with final concluding remarks gathered in Section~\ref{section_conclusion}. 

\section{Problem Modelling: $k$-SNPP}\label{section_problem_modelling}
In this section, we provide formal definitions and notations for the $k$-SNPP problem. 
\begin{definition}
(Homotopy~\cite{Rotman2013Introduction}) Given two collision-free paths $\gamma_1$ and $\gamma_2$ connecting the start point $p_{start}$ and the goal point $p_{goal}$ of the planning task, they are homotopic, denoted by $\gamma_1\simeq \gamma_2$, if one can be continuously deformed into the other within the collision-free part of the environment, with endpoints fixed. Each homotopy class is denoted by $[\cdot]$. 
By writing $[\gamma]$ we take path $\gamma$ as a representative of its homotopy class. 
\end{definition}

\begin{remark}
We formally denote $\varphi$ as the quotient mapping from a set of paths to the homotopy classes. 
Here the set of paths is given in the context. 
Then the subset of paths that belongs to $[\gamma]$ is denoted as $\varphi^{-1}([\gamma])$. 
\end{remark}

\begin{definition}
(Locally Shortest Path) Given one homotopy class of paths, the locally shortest path is the path with minimum length, 
\begin{equation}
\gamma^* = \mathop{\rm argmin}\limits_{\tilde{\gamma}\in\varphi^{-1}([\gamma])}g(\tilde{\gamma})
\end{equation}
where $g(\cdot)$ returns the length of path. 
In 2D planning scenarios, the locally shortest path in a single homotopy class is unique. 
\end{definition}

The $k$-\textit{SNPP} problem is to find the first $k$ locally shortest paths, formally defined as:

\begin{problem}\label{prob:ksnpp}
($k$ Shortest Non-homotopic Path Planning) 
Given the start point $p_{start}$, the goal $p_{goal}$ in a 2D environment, the $k$ shortest non-homotopic path planning ($k$-SNPP) problem is to find the resulting paths $\gamma_1^*, \cdots, \gamma_k^*$ from the set of all paths $\mathscr{P}$, such that: 
\begin{enumerate}
\item $\gamma_1^*$ is the shortest locally optimal path among all homotopy classes of paths, i.e., the globally optimal path. 
If there are multiple such paths, $\gamma_1^*$ is one of them. 
\item $\gamma_i^*, 2\leq i\leq k$ is the shortest locally optimal path in the left-out path space $\mathscr{P}\backslash \left(\varphi^{-1}([\gamma_1^*])\cup\cdots\cup\varphi^{-1}([\gamma_{i-1}^*])\right)$. 
If there are multiple such paths, $\gamma_i^*$ is one of them. 
\end{enumerate}
\end{problem}

\section{Goal Location Relaxation for Distance-based Topology Simplification}\label{section:goal_relaxation}
In this section, we first define the terminology \textit{distinguished homotopy} which is used to analyse partly constructed locally shortest paths. 
Then, a goal location relaxation strategy is shown necessary for a distance-based topology simplification mechanism. 

Note that in this section we assume that the planner discussed is a generic exploration process of the configuration space from the starting configuration to the goal configuration, such as A*~\cite{hart1968formal} and RRT*~\cite{Karaman2011Sampling} for solving SPP, and homotopy-aware Dijkstra~\cite{Bhattacharya2012Topological} and A*~\cite{Kim2015Path} for solving $k$-SNPP. 
The discussion is not applicable if either the topological information or the geometric information of the whole environment has been pre-abstracted prior to the path planner being adopted. 
Cases belonging to this category may be (a) the Voronoi graph~\cite{Aurenhammer1991Voronoi} is pre-calculated wherein different paths must be non-homotopic, and (b) the environment has been modelled into a polygonal environment and the Visibility graph~\cite{Lozano1979Algorithm}  is pre-calculated wherein all locally shortest path segments have been collected. 

\subsection{Distinguished Homotopy}

\begin{definition}
(Distinguished Homotopy) Let $\gamma$ be a collision-free path whose endpoints are $p_{start}$ and a generic point $p$. 
The distinguished homotopy class of $\gamma$ represents the set of paths such that 
\begin{enumerate}
\item The paths start from $p_{start}$ and visit $p$. 
\item The locally shortest path homotopic to $\gamma$ has been known by the planner. 
\end{enumerate}
Without abuse of symbols, we also use $[\gamma]$ as the notation of the distinguished homotopy represented by $\gamma$. 
And the set of such paths is still represented by $\varphi^{-1}([\gamma])$ for consistency. 
When the path from $p_{start}$ to $p$ is well-defined in the context, we also use $[p]$ to refer to the distinguished homotopy, and $\varphi^{-1}([p])$ to refer to the set of paths in $[p]$. 
\end{definition}

The distinguished homotopy is a generalisation of the homotopy because for a path whose endpoints are $p_{start}$ and $p_{goal}$, its distinguished homotopy is the same as its homotopy. 
And terminating the pathfinding along a distinguished homotopy $[\gamma]$ could be clearly defined as discarding all the paths in $\varphi^{-1}([\gamma])$ during the planning process. 

\subsection{Distance-based Topology Simplification}
The distance-based topology simplification is literally to remove the path topologies whose locally shortest path is too long to be one of the $k$-SNPP resultant paths.  
To this end, whether a locally shortest path is one of the $k$-SNPP resultant paths depends on its distance-based priority against all other locally shortest paths, i.e., the number of other locally optimal paths that have a shorter length. 
This can be also interpreted as the comparison between homotopies, referred to as \textit{relative optimality}, defined as follows: 
\begin{definition}\label{def:partial_order}
(Relative Optimality of Homotopies) Given the starting location $p_{start}$ and the goal location $p_{goal}$, the distance-based relative optimality between homotopy classes is defined by comparing the length of their locally shortest path, 
\begin{equation}\label{equ:def:homo_por}
[\gamma_1]\leq [\gamma_2]\Leftrightarrow \min\limits_{\gamma\in \varphi^{-1}([\gamma_1])} g(\gamma) \leq \min\limits_{\gamma\in \varphi^{-1}([\gamma_2])}g(\gamma)
\end{equation}
where $g(\cdot)$ returns the length of path. 
When strict inequality $[\gamma_1]< [\gamma_2]$ is obtained, we say $[\gamma_1]$ is relatively optimal (compared to $[\gamma_2]$), and $[\gamma_2]$ is relatively non-optimal (compared to $[\gamma_1]$). 
\end{definition}

The relative optimality of homotopies is relevant to all the distance-based planning problems, indicating the unnecessity of non-$k$-optimal path homotopies for planning. 
Its significance in solving SPP and $k$-SNPP are presented in the following corollary: 

\begin{corollary}
(Unnecessity of Homotopy)
For two homotopy classes of paths $[\gamma_1]$ and $[\gamma_2]$, if $[\gamma_1]<[\gamma_2]$, then 
\begin{enumerate}
\item The globally shortest path will not lie in $\varphi^{-1}([\gamma_2])$. 
\item If there have existed $(k-1)$ smaller elements of $[\gamma_2]$, i.e.,  $[\gamma_{2_1}]<[\gamma_2], \cdots, [\gamma_{2_{k-1}}]< [\gamma_2]$, then any path in $\varphi^{-1}([\gamma_2])$ will not be one of the paths of the $k$-SNPP solution. 
\item If $[\gamma_2]$ has been unnecessary, then any homotopy $[\gamma_3]$ that satisfies $[\gamma_2]\leq [\gamma_3]$ is also unnecessary. 
\end{enumerate}
\end{corollary}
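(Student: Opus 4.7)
The plan is to derive all three parts directly from Definition~\ref{def:partial_order} and the iterative selection in Problem~\ref{prob:ksnpp}, without any topological machinery; the only facts I will use are the monotonicity of the $k$-SNPP selection in path length and the definitional lower bound that ``locally shortest'' imposes on its class. Writing $\gamma_i^*$ for the locally shortest representative of $[\gamma_i]$, part 1 falls out in two lines: the hypothesis $[\gamma_1]<[\gamma_2]$ unfolds via \eqref{equ:def:homo_por} to $g(\gamma_1^*)<g(\gamma_2^*)$, and every $\gamma\in\varphi^{-1}([\gamma_2])$ satisfies $g(\gamma)\geq g(\gamma_2^*)>g(\gamma_1^*)$, so no element of $\varphi^{-1}([\gamma_2])$ attains the global minimum of $g$.

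For part 2 I would walk the iterative selection of Problem~\ref{prob:ksnpp} step by step. The hypothesised classes $[\gamma_{2_1}],\ldots,[\gamma_{2_{k-1}}]$ are pairwise distinct and each has locally shortest length strictly below $g(\gamma_2^*)$. By monotonicity, any class selected while those $k-1$ strictly-smaller candidates remain unconsumed must itself have locally shortest length below $g(\gamma_2^*)$, so the first $k-1$ output slots are filled without touching $[\gamma_2]$. For the $k$-th slot every surviving candidate has length at least $g(\gamma_2^*)$, and the tie-breaking clause (``if there are multiple such paths, $\gamma_i^*$ is one of them'') allows assigning this slot to some class other than $[\gamma_2]$, yielding a valid $k$-SNPP solution that excludes $\varphi^{-1}([\gamma_2])$.

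Part 3 then follows by transitivity of the preorder~\eqref{equ:def:homo_por}. From $[\gamma_2]\leq[\gamma_3]$ I get $\min_{\gamma\in\varphi^{-1}([\gamma_3])} g(\gamma)\geq \min_{\gamma\in\varphi^{-1}([\gamma_2])} g(\gamma)$, so every class strictly smaller than $[\gamma_2]$ is automatically strictly smaller than $[\gamma_3]$; the hypotheses that rendered $[\gamma_2]$ unnecessary transfer verbatim to $[\gamma_3]$, and reapplying part 1 or part 2 closes the argument.

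The hard part, I expect, lies in part 2 at the boundary between ranks $k-1$ and $k$. Having only $k-1$ strictly-smaller classes guarantees $[\gamma_2]$ ranks no earlier than position $k$, not strictly beyond it, so ruling $\gamma_2^*$ itself out of every $k$-SNPP output would be too strong; the corollary must be read in the existence sense (\emph{some} valid $k$-SNPP output excludes $\varphi^{-1}([\gamma_2])$), which is the operationally useful reading for a safe topology-simplification mechanism and is exactly what the tie-breaking clause in Problem~\ref{prob:ksnpp} authorises.
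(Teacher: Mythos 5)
The paper states this corollary without proof---it is offered as an immediate consequence of Definition~\ref{def:partial_order} and the iterative selection in Problem~\ref{prob:ksnpp}---so there is no reference argument to compare against; your job was to supply one, and parts~1 and~3 of your proof are exactly the straightforward unfoldings the authors had in mind. Part~1 is two lines of inequality chasing; part~3 is transitivity, though you should note explicitly that chaining the strict inequalities witnessing $[\gamma_2]$'s unnecessity with the non-strict $[\gamma_2]\leq[\gamma_3]$ still yields strict inequalities, so every witness transfers.

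Where you go astray is the closing paragraph, and the error is a misreading of the hypotheses of part~2 rather than a genuine gap in the corollary. You treat ``$(k-1)$ smaller elements'' as the \emph{total} supply of homotopy classes strictly better than $[\gamma_2]$, which forces $[\gamma_2]$ only as far down as rank $k$, and that is why you retreat to an existential statement propped up by the tie-breaking clause. But the corollary opens with a standing hypothesis $[\gamma_1]<[\gamma_2]$, and the $(k-1)$ classes $[\gamma_{2_1}],\ldots,[\gamma_{2_{k-1}}]$ are introduced \emph{in addition} to it (the same structure appears in Corollary~\ref{coro:partial_order}, where the SPP case $k=1$ already uses the standing $[\gamma_1]\prec[\gamma_2]$ with zero extra classes). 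Read that way---and assuming pairwise distinctness, as you already do for the $[\gamma_{2_i}]$---there are $k$ classes whose locally shortest paths are strictly shorter than $\gamma_2^*$. In the iterative selection of Problem~\ref{prob:ksnpp}, each of the first $k$ output slots must remove from the left-out space a class whose locally shortest length is $\leq g(\gamma_2^*)$, and since $k$ strictly-shorter classes are available, $[\gamma_2]$ cannot be the minimiser at any stage $i\leq k$. Hence $[\gamma_2]$ is excluded from \emph{every} $k$-SNPP solution, with no appeal to tie-breaking and no need to weaken the claim to ``some valid output excludes it.'' With that correction your proof is complete and matches the intended (unwritten) argument.
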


However, one logistic contradiction in the above definitions is that, if the length of the two locally shortest paths are known, then we have already constructed them. 
This means that even if the relatively non-optimal homotopy is identified, the computational load for the pathfinding along it has been paid. 
Therefore, although the above discussions are correct as definitions, they cannot be directly applied to path planner designing. 
In contrast, similar definitions based on distinguished homotopy are motivated. 

\begin{definition}\label{def:relative_optimality_of_distinguished_homotopy}
(Relative Optimality of Distinguished Homotopies) Given the starting location $p_{start}$, the distance-based relative optimality of distinguished homotopies is defined by 
\begin{equation}\label{eqn:relative_optimality}
[\gamma_1]\prec [\gamma_2]\Leftrightarrow \min\limits_{\gamma\in \varphi^{-1}([\gamma_1])} g(\gamma) < \min\limits_{\gamma\in \varphi^{-1}([\gamma_2])}g(\gamma)
\end{equation}
When $[\gamma_1]\prec [\gamma_2]$, we say $[\gamma_1]$ is relatively optimal (compared to $[\gamma_2]$), and $[\gamma_2]$ is relatively non-optimal (compared to $[\gamma_1]$). 
Here the partial order notation is adopted to indicate that not all distinguished homotopies are comparable. 
And we ignore the coincidental cases when two locally shortest paths are of the same length. 
\end{definition}

The main difference in \textbf{Definition~\ref{def:relative_optimality_of_distinguished_homotopy}} and \textbf{Definition~\ref{def:partial_order}} is that paths are unnecessary to be fully known (i.e., in \textbf{Definition~\ref{def:relative_optimality_of_distinguished_homotopy}} we may have $\gamma_1(1) \neq \gamma_2(1)$, $\gamma_1(1)\neq p_{goal}$, and $\gamma_2(1)\neq p_{goal}$. 
And the paths from $\gamma_1(1)$ and $\gamma_2(1)$ to $p_{goal}$ could be left unconstructed). 

The utilisation of the relative optimality of distinguished homotopies is presented as the following corollary: 

\begin{corollary}\label{coro:partial_order}
(Unnecessity of Distinguished Homotopy)
During the planning process, for two distinguished homotopies $[\gamma_1]$ and $[\gamma_2]$, once we observe $[\gamma_1]\prec[\gamma_2]$, then 
\begin{enumerate}
\item For SPP, the pathfinding along $[\gamma_2]$ can be safely terminated. 
\item For $k$-SNPP, if there have existed $(k-1)$ smaller elements of $[\gamma_2]$, i.e.,  $[\gamma_{2_1}]\prec[\gamma_2], \cdots, [\gamma_{2_{k-1}}]\prec [\gamma_2]$, then the pathfinding along $[\gamma_2]$ can be safely terminated. 
\item If a distinguished homotopy $[\gamma_2]$ has been observed unnecessary, then any distinguished homotopy $[\gamma_3]$ that satisfies $[\gamma_2]\prec[\gamma_3]$ can also be safely terminated. 
\end{enumerate}
\end{corollary}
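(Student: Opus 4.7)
The plan is to establish the three items by reducing each to the defining strict inequality \eqref{eqn:relative_optimality}, exploiting the monotonicity of path length under suffix extension, and invoking the goal location relaxation strategy motivated earlier in this section. All three items share a common skeleton: the strict inequality between locally shortest prefix lengths rules out certain path completions, either by direct comparison (item 1), by counting (item 2), or by transitivity (item 3).

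First I would prove item (1). For an arbitrary $\sigma \in \varphi^{-1}([\gamma_2])$ extended to $p_{goal}$, I decompose its total length into the prefix from $p_{start}$ to the endpoint $p_2$ of $\gamma_2$ plus a nonnegative suffix from $p_2$ to $p_{goal}$. The prefix length is bounded below by $\min_{\gamma\in\varphi^{-1}([\gamma_2])}g(\gamma)$, which by hypothesis strictly exceeds $\min_{\gamma\in\varphi^{-1}([\gamma_1])}g(\gamma)$. Since the locally shortest path in $[\gamma_1]$ is already constructed (by the second clause in the definition of distinguished homotopy), it constitutes a concrete competitor whose length, once the goal location relaxation aligns the comparison at intermediate endpoints, strictly beats every completion of $[\gamma_2]$. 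Hence the globally shortest path cannot pass through any $\sigma\in\varphi^{-1}([\gamma_2])$, and the pathfinding along $[\gamma_2]$ may be safely terminated.

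Item (2) then follows from a straightforward counting argument: by hypothesis there are $(k-1)$ distinguished homotopies strictly beneath $[\gamma_2]$ in the $\prec$-order, and each such homotopy contributes (upon completion) a candidate strictly shorter than any completion of $[\gamma_2]$. These $(k-1)$ candidates, taken together with the witness $[\gamma_1]\prec[\gamma_2]$ (which either coincides with one of them or supplies an additional shorter candidate), already account for the top $k$ slots of the $k$-SNPP solution, leaving no room for a completion of $[\gamma_2]$. Item (3) is a transitivity statement: since $\prec$ inherits strict transitivity from $<$ on real-valued path lengths, the very witnesses $[\gamma_1]$ (respectively $[\gamma_{2_1}],\ldots,[\gamma_{2_{k-1}}]$) that rendered $[\gamma_2]$ unnecessary continue to witness the unnecessity of $[\gamma_3]$ via $[\gamma_{2_j}]\prec[\gamma_2]\prec[\gamma_3]$.

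The delicate step, and the main obstacle, is item (1). The strict inequality \eqref{eqn:relative_optimality} compares only the locally shortest prefixes ending at the possibly distinct intermediate points $p_1$ and $p_2$, yet the desired conclusion concerns paths completed to $p_{goal}$. Without additional structure, one could imagine an environment in which a longer prefix to $p_2$ nonetheless admits a much shorter completion to $p_{goal}$ than a shorter prefix to $p_1$ does. This is precisely the gap that the goal location relaxation strategy of Section~\ref{section:goal_relaxation} is designed to close, by ensuring that intermediate-endpoint comparisons propagate consistently to the original goal. I expect the formal proof to cite this relaxation explicitly at the point where the prefix inequality is upgraded to a completed-path inequality.
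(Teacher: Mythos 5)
The paper offers no explicit proof of this corollary; it is presented as an immediate consequence of \textbf{Definition~\ref{def:relative_optimality_of_distinguished_homotopy}}, and under the definition's intended reading it really is immediate, which is why your proof's central worry is misplaced.

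The key issue is your reading of the minimisation in Eqn.~\eqref{eqn:relative_optimality}. You interpret $\min_{\gamma\in\varphi^{-1}([\gamma_i])} g(\gamma)$ as the length of the shortest \emph{prefix} from $p_{start}$ to the intermediate point $p_i$, and consequently perceive a gap between ``prefix inequality'' and ``completed-path inequality.'' But the set $\varphi^{-1}([\gamma_i])$, as used in the surrounding text and in the proof of \textbf{Theorem~\ref{thm:goal_location_relaxation}}, consists of \emph{full} paths from $p_{start}$ to $p_{goal}$ whose truncation at $p_i$ is homotopic to $\gamma_i$ (cf.\ the way $\gamma_2^*$ is treated there as a path from $p_{start}$ all the way into $\Omega$, hence to $p_{goal}$). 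With that reading, item (1) is a one-liner: the globally shortest path has length at most $\min_{\varphi^{-1}([\gamma_1])} g(\gamma) < \min_{\varphi^{-1}([\gamma_2])} g(\gamma)$, so it cannot lie in $\varphi^{-1}([\gamma_2])$, and no suffix decomposition or additional machinery is required. Your counting argument for item (2) and transitivity for item (3) are correct in spirit, modulo the implicit assumption that the $(k-1)$ smaller distinguished homotopies correspond to pairwise distinct full-path homotopy classes; note also that your parenthetical observation in item (2), that $[\gamma_1]$ ``either coincides with one of them or supplies an additional shorter candidate,'' points to a genuine off-by-one ambiguity, since if $[\gamma_1]$ coincides with one of the $[\gamma_{2_j}]$, only $(k-1)$ strictly shorter classes are guaranteed, which leaves $[\gamma_2]$ still eligible for rank $k$.

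The more fundamental concern is that your proposal inverts the paper's logical architecture. You cite the goal location relaxation of \textbf{Theorem~\ref{thm:goal_location_relaxation}} as the tool that closes the gap in item (1), but that theorem is stated \emph{after} this corollary and plays a different role: it is a sufficient condition for \emph{establishing} the premise $[\gamma_1]\prec[\gamma_2]$ from partial information available during planning, not a step in deducing the corollary's conclusions from that premise. The sentence following the corollary (``The only remaining problem is how to construct the relative optimality\ldots'') makes exactly this division of labour explicit. Your proof would introduce a forward reference and, more seriously, a circular dependence: Theorem~\ref{thm:goal_location_relaxation} concludes $[\gamma_1]\prec[\gamma_2]$, which is precisely the hypothesis of the corollary, so it cannot simultaneously be the lemma that makes the corollary work.
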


The only remaining problem is how to construct the relative optimality among distinguished homotopies whilst the planner is performing. 

\subsection{Goal Location Relaxation}
The length of locally shortest paths is theoretically sensitive to the goal location because locally shortest paths are always the concatenation of straight path segments and wall-following path segments. 
As a result, a small variation of the goal location may influence the distance-based priority of locally shortest paths thoroughly.
A simple example is that, say the goal location is in the vicinity of the saddle curve~\cite{Shnaps2014Online} formed by two path topologies, then the comparison of the two topologies fully depends on the side of the saddle curve that the goal lies in. 
See Fig.~\ref{fig:saddle_curve} for illustration. 
In this regard, the precise length of locally shortest paths cannot be predicted. 
In order to remove the goal location dependency when constructing the relative optimality, we are inspired to consider relaxing the goal location from a single point to a region in the environment.

\begin{theorem}\label{thm:goal_location_relaxation}
(Goal Location Relaxation)
Given two distinguished homotopies $[\gamma_1]$ and $[\gamma_2]$, if there exists a collision-free connected region $\Omega$ such that: 
\begin{enumerate}
\item $p_{start}\notin \Omega$ and  $p_{goal} \in \Omega$. 
\item Regarding any collision-free location $q$ on the boundary of the $\Omega$ region as a fake goal location for planning, the relative optimality between distinguished homotopies $[\gamma_1]\prec [\gamma_2]$ still holds. 
This means that, denoting the set of paths that connect $p_{start}$ and $q$ as $\mathscr{P}_q$, and letting $\varphi_q$ be the quotient mapping from $\mathscr{P}_q$ to its homotopy equivalence, we have 
\begin{equation}\label{equ:sufficient_condition}
\min\limits_{\gamma\in \varphi_q^{-1}([\gamma_1])} g(\gamma) < \min\limits_{\gamma\in \varphi_q^{-1}([\gamma_2])}g(\gamma)
\end{equation}
where $g(\cdot)$ returns the length of path. 
\end{enumerate}
then we observe $[\gamma_1]\prec [\gamma_2]$. 
\end{theorem}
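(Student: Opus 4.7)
The plan is to establish the relative optimality at the true goal $p_{goal}$ by constructing, starting from a locally shortest representative of $[\gamma_2]$, an explicitly shorter witness path in $[\gamma_1]$. The key observation powering the construction is that any path from $p_{start}$ to $p_{goal}$ must cross $\partial\Omega$ at least once; hypothesis \eqref{equ:sufficient_condition} then lets me replace the sub-path up to any chosen crossing point by a shorter one in the corresponding class for $[\gamma_1]$, while within $\Omega$ the remaining tail is topologically inert and so the replacement preserves the distinguished homotopy at $p_{goal}$.

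Concretely, I would take $\gamma_2^{\star}\in\varphi^{-1}([\gamma_2])$ realising $\min_{\gamma\in\varphi^{-1}([\gamma_2])}g(\gamma)$, and pick $q\in\partial\Omega$ to be the last point at which $\gamma_2^{\star}$ meets $\partial\Omega$ (which exists by continuity since $p_{start}\notin\Omega\ni p_{goal}$). This induces a decomposition $\gamma_2^{\star}=\alpha_2\ast\beta_2$, with $\alpha_2:p_{start}\to q$ and $\beta_2:q\to p_{goal}$ lying entirely inside $\Omega$. The prefix $\alpha_2$ naturally represents the distinguished homotopy at the fake goal $q$ corresponding to $[\gamma_2]$, so $\min_{\gamma\in\varphi_q^{-1}([\gamma_2])}g(\gamma)\leq g(\alpha_2)$. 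Applying hypothesis \eqref{equ:sufficient_condition} at this specific $q$ produces some $\alpha_1^{\star}\in\varphi_q^{-1}([\gamma_1])$ with $g(\alpha_1^{\star})<g(\alpha_2)$; the candidate witness is then $\alpha_1^{\star}\ast\beta_2$, whose length is $g(\alpha_1^{\star})+g(\beta_2)<g(\alpha_2)+g(\beta_2)=g(\gamma_2^{\star})$. Assuming the membership $\alpha_1^{\star}\ast\beta_2\in\varphi^{-1}([\gamma_1])$, the inequality \eqref{eqn:relative_optimality} at $p_{goal}$ follows at once, yielding $[\gamma_1]\prec[\gamma_2]$.

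The main obstacle is precisely this last membership claim: that splicing the new prefix $\alpha_1^{\star}$ with the old tail $\beta_2$ still lands the composed path inside $[\gamma_1]$ at $p_{goal}$. This hinges on $\Omega$ being collision-free and connected, which I would make explicit by treating it as a simply-connected pocket of obstacle-free space in the planar environment, so that any two paths from $q$ to $p_{goal}$ contained in $\Omega$ are homotopic rel.\ endpoints in the obstacle-free environment. Under this interpretation, the identification $\varphi_q^{-1}([\gamma_i])\leftrightarrow\varphi^{-1}([\gamma_i])$ that I used tacitly is also well-defined via transport through $\Omega$, and swapping the prefix preserves the distinguished homotopy at $p_{goal}$. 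Once the transport-through-$\Omega$ lemma is written out, the rest of the argument is a routine bookkeeping of path lengths, and the theorem follows.
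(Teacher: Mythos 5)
Your proposal follows essentially the same route as the paper's proof: take a length-minimising $\gamma_2^{\star}\in\varphi^{-1}([\gamma_2])$, locate a boundary crossing $q\in\partial\Omega$, replace the prefix up to $q$ by the shortest $[\gamma_1]$-prefix guaranteed by Eqn.~(\ref{equ:sufficient_condition}), re-attach the tail $\beta_2$, and compare lengths. Your refinements (fixing $q$ as the \emph{last} crossing, and writing down the intermediate bound $\min_{\varphi_q^{-1}([\gamma_2])}g\leq g(\alpha_2)$ rather than folding it silently into the comparison) are tidier but not different in kind.

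What you add — and this is worth flagging — is an honest acknowledgement of the weak link: the membership claim $\alpha_1^{\star}\ast\beta_2\in\varphi^{-1}([\gamma_1])$. The paper's proof simply asserts that the concatenated path ``is essentially the locally shortest path that not only belongs to $\varphi^{-1}([\gamma_1])$ but also visits $q$'' and moves on; you are right that this is an unstated lemma, not a triviality. However, your proposed patch over-reaches: demanding that $\Omega$ be simply connected is not among the theorem's hypotheses and would invalidate the paper's only use of the theorem — in \textbf{Theorem}~\ref{thm:omega} the region $\Omega=U_{[c_k^i]}=M_{\rm free}\setminus\bigcup_m Q^m$ typically contains yet-unexplored internal obstacles and is multiply connected. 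The resolution the paper relies on (tacitly) is not the topology of $\Omega$ but the structure of a distinguished homotopy: by the definition in Section~\ref{section:goal_relaxation}, membership of a full path in $\varphi^{-1}([\gamma_1])$ is governed by its prefix through $\gamma_1(1)$, so once $\alpha_1^{\star}$ carries the $[\gamma_1]$ tag at $q$, any continuation $\beta_2$ leaves the distinguished homotopy unchanged regardless of $\Omega$'s connectivity. That observation ought to appear as a short lemma following the definition rather than be improvised here, but it is the correct missing piece — not simple connectedness.
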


\begin{proof}
Denote the shortest path in $\varphi^{-1}([\gamma_2])$ as $\gamma_2^*$. 
Since $p_{start}\notin \Omega$ and $p_{goal}\in \Omega$, there must be an intersection of $\gamma_2^*$ and the boundary of $\Omega$, denoted as $q$. 
Since $\gamma_2^*$ is collision-free, $q$ is also collision-free. 

Then, we can find the locally shortest path in $\varphi_q^{-1}([\gamma_1])$. 
We concatenate it with the truncated part of $\gamma_2^*$ that starts at $q$ and ends at $p_{goal}$ and denote the concatenated path as $\gamma_{1q}^*$. 
By condition 2), $g(\gamma_{1q}^*) < g(\gamma_2^*)$. 
And notice that $\gamma_{1q}^*$ is essentially the locally shortest path that not only belongs to $\varphi^{-1}([\gamma_1])$ but also visits $q$. 
Hence we have 
\begin{equation}
\min\limits_{\gamma\in \varphi^{-1}([\gamma_1])} g(\gamma) \leq g(\gamma_{1q}^*) < g(\gamma_2^*) = \min\limits_{\gamma\in \varphi^{-1}([\gamma_2])}g(\gamma)
\end{equation}
which yields $[\gamma_1]\prec [\gamma_2]$.
\end{proof}

\begin{figure}[t]
\centering
\includegraphics[width=0.48\textwidth]{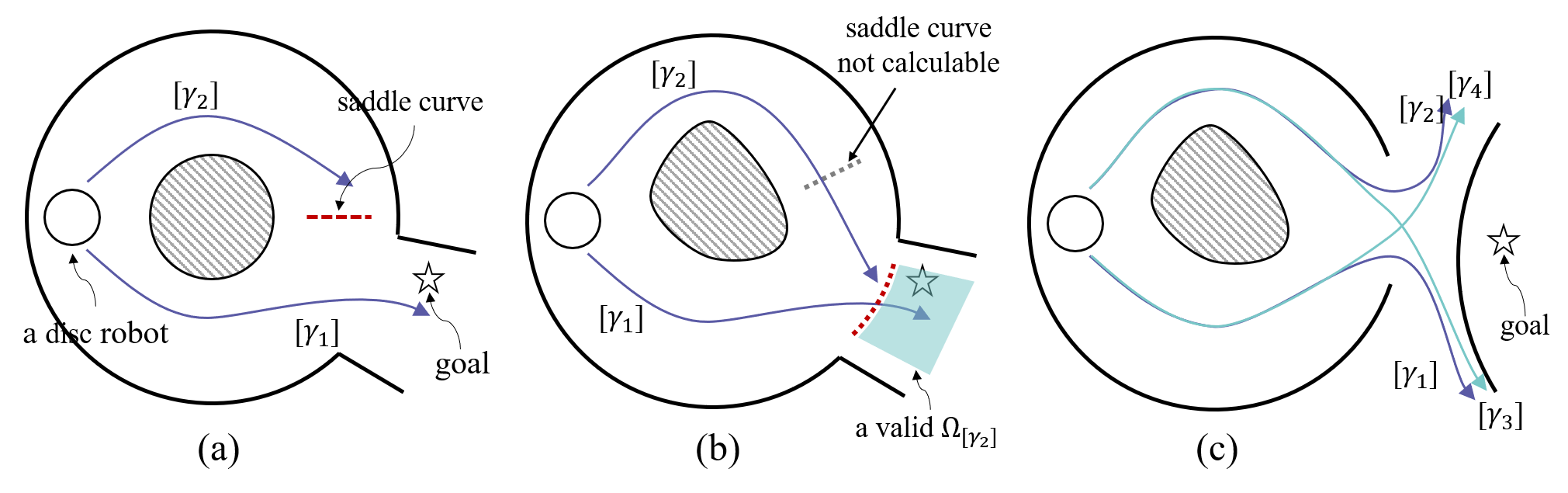}
\caption{
Illustration of the relative optimality of topologies in an SPP task. 
Topological directions are visualised as arrowed paths, represented by $[\gamma_1]\sim [\gamma_4]$. 
Theoretically, the points towards which the locally shortest path in $[\gamma_2]$ will be shorter than the one in $[\gamma_1]$ are conditioned by their saddle curve. 
In (a), under the assumption that both the robot and the internal obstacle are circular, the saddle curve can be predicted as a straight line segment. 
If the goal is given in the bottom half of the ring-like region, then ideally the pathfinding process should not be performed along topology $[\gamma_2]$, because even if a locally shortest path is constructed, it is not the SPP solution. 
The key problem is how to observe that $[\gamma_2]$ is relatively non-optimal than $[\gamma_1]$ because as long as obstacles are not ideal, such as the case in (b), the shape of the saddle curve is unknown. 
In a slightly more generalised environment it can be further seen that the ``topologies" will bifurcate with the path planner running, as shown in (c), where $[\gamma_1]$ bifurcates to $[\gamma_1]$ and $[\gamma_4]$, and $[\gamma_2]$ bifurcates to $[\gamma_2]$ and $[\gamma_3]$. 
Intuitively, the desired topology simplification should be ``preserving $[\gamma_1]$ and $[\gamma_2]$, terminating $[\gamma_3]$ and $[\gamma_4]$". 
However, there does not exist an algorithm that makes assertions about the distance-based optimality of topology. 
Solution to this case study will be presented in detail in Section~\ref{subsection:fig_3d_solution}. 
}\label{fig:saddle_curve}
\end{figure}

\textbf{Theorem~\ref{thm:goal_location_relaxation}} is a sufficient and unnecessary proposition to observe relative optimality between distinguished homotopies. 
This can be easily revealed by letting the $\Omega$ region be a single-point region, $\{p_{goal}\}$, where then the conditions become the same as Eqn.~(\ref{eqn:relative_optimality}) ($p_{goal}\in \{p_{goal}\}$ is obvious, and the boundary of $\Omega$ is a single point $p_{goal}$ which makes Eqn.~(\ref{eqn:relative_optimality}) and Eqn.~(\ref{equ:sufficient_condition}) exactly the same formula). 
It is noteworthy that although in generic an unnecessary proposition makes the result more difficult to be concluded, the merit of \textbf{Theorem~\ref{thm:goal_location_relaxation}} is that we can freely design the shape of $\Omega$ to verify the conditions in \textbf{Theorem~\ref{thm:goal_location_relaxation}} with high efficiency.

In summary, a $k$-SNPP planner with a distance-based topology simplification mechanism should perform as follows:  
Whilst the planner is constructing paths in different topologies, it keeps looking for possible construction of the above-mentioned region $\Omega$ between two distinguished homotopies say $[\gamma_1]$ and $[\gamma_2]$. 
Once $\Omega$ is verified to satisfy the conditions in \textbf{Theorem~\ref{thm:goal_location_relaxation}}, the relative optimality between $[\gamma_1]$ and $[\gamma_2]$ is observed, say $[\gamma_1]\prec [\gamma_2]$. 
Then, the unnecessity of $[\gamma_2]$ is checked as per \textbf{Corollary~\ref{coro:partial_order}}. 
If $[\gamma_2]$ is unnecessary, the pathfinding along $[\gamma_2]$ will be terminated. 
In this paper, the $\Omega$ region construction will be presented in \textbf{Theorem~\ref{thm:omega}} which is based on the geometric structures of the hierarchical topological tree (to be proposed in Section~\ref{section_node} and Section~\ref{section_theoretical_analysis}), whose verification of the conditions in \textbf{Theorem~\ref{thm:goal_location_relaxation}} would be extremely efficient, detailed in Section~\ref{section_related_optimality}. 

\begin{table}[t]
\centering
\caption{Nomenclature}\label{tab:nomenclature}
\begin{tabular}{lp{6cm}l}
\hline
\textbf{Symbols}& \textbf{Meanings} \\
\hline
$i, j, k, l$ & Generic indices\\
$\gamma$ & A path\\
$[\cdot]$ & Homotopy class of paths\\
$\varphi$ & The mapping from a set of paths to its homotopy classes. The set of paths depends on the context. \\
$\mathscr{P}_*$ & A set of paths. \\
$e(\theta_j)$ & The endpoint of the $j$-th ray of the node\\
$M_{\rm free}$ & The collision-free part of the environment\\
$p^i$ & The \textit{source point} of the $i$-th node\\
$p_{start}$ & The start point for $k$-SNPP\\
$p_{goal}$ & The goal point for $k$-SNPP\\
$Q^i$ & The sub-region of the $i$-th node\\
$R$ & The lethal radius of the robot\\
\multicolumn{2}{c}{\textbf{(Below are for the $k$-th corridor of the $i$-th node)}}\\
$\alpha_k^i$ & An edge of the topological tree\\
$\Delta_k^i$ & The \textit{gap sweeper}\\
$c_k^i$ & The \textit{critical point}\\
$D_k^i$ & The \textit{corridor}\\
$e_{{\rm f}k}^i$ & The obstacle that obstructs gap sweeper $\Delta_k^i$\\
$e_{{\rm n}k}^i$ & The obstacle that hit by the near ray forming the gap\\
$o_k^i$ & The endpoint of the \textit{gap sweeper}\\
\hline
\end{tabular}
\end{table}

\section{Construction of Topological Tree}\label{section_node}
In this section, the construction of a hierarchical topological tree is proposed. 
The procedure of constructing a node of the tree is alternately running three modules, 
\textit{\textbf{sparse raycasting}}, \textit{\textbf{planning in corridor}}, and \textit{\textbf{sweeping the gap}}. 
We assume that the robot is circular whose radius is $R$. 
Without loss of generality, we also assume the environment is represented by grids. 
Frequently used variables are listed in Table~\ref{tab:nomenclature}.

\subsection{Overview}
The reader is referred to the algorithm pseudocode given in \textbf{Algorithm}~\ref{alg:raystar}. 
The core role that the topological tree plays is to construct all locally shortest paths segment-by-segment instead of point-by-point. 
The tree edges will be proven a part of locally shortest paths. 
And in the opposite, each locally shortest path will be proven incrementally constructed by a branch of tree nodes. 
As such, each node represents a distinguished homotopy, and the branching of nodes represents the bifurcation of the corresponding distinguished homotopy.

\begin{figure}[t]
\centering
\includegraphics[width=0.48\textwidth]{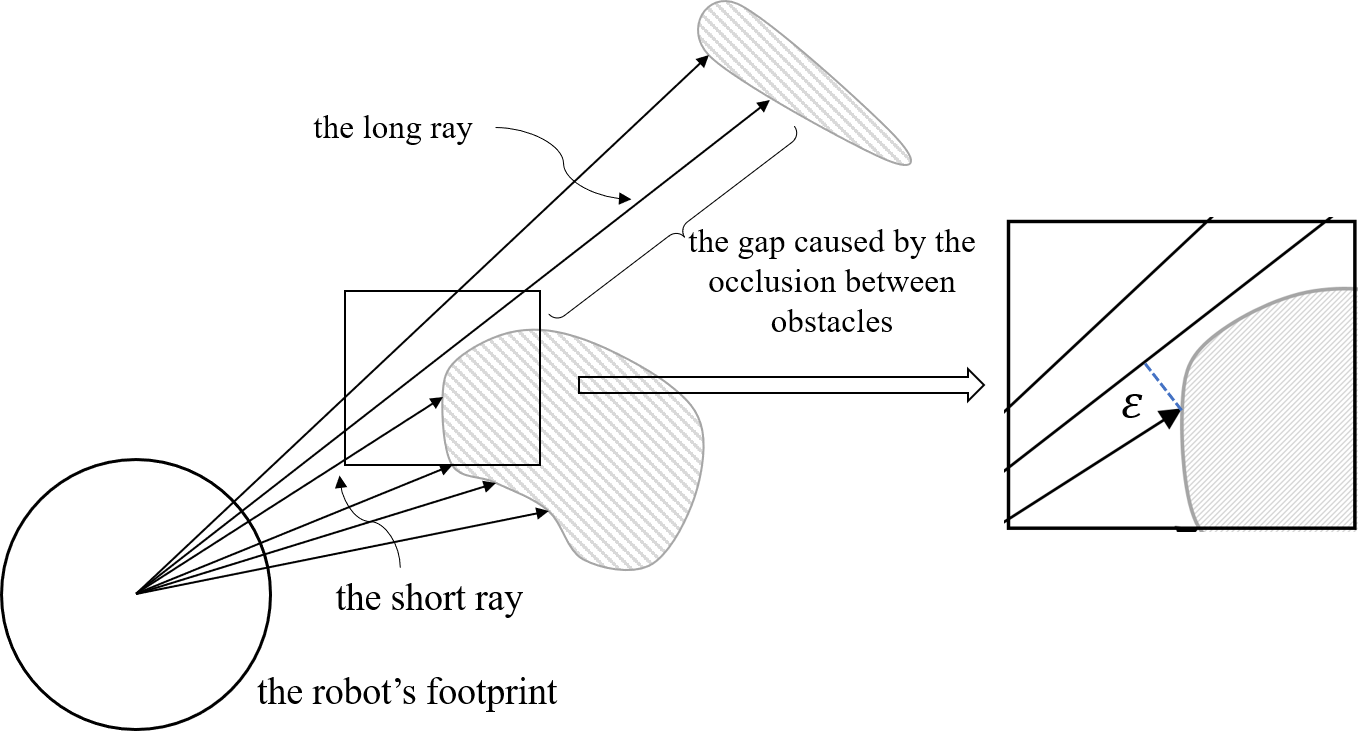}
\caption{Stopping criterion of inserting new rays.}\label{fig:epsilon}
\end{figure}

\subsection{Sparse Raycasting}\label{section:sparse_raycasting}
This module aims to find a list of obstacles that implicitly form connected C-space obstacles in a reasonable computational time. 
Given the radius of the robot's footprint as $R$, we need to find a sequence of obstacle points with clearance less than $2R$. 

Given a source point $p$ and an orientation $\theta_j$, rays depart from $p$ and extend until hitting an obstacle denoted by $e(\theta_j)$. 
We also use $p$ to represent ``the node whose source point is $p$" in short. 
The orientation of rays are such that
\begin{equation}\label{equ:range}
\theta_j \in \left\{
\begin{aligned}
&[0, 2\pi],\ &\mbox{ for the root node}\\
&[\theta_{\min}, \theta_{\max}],\ &\mbox{ for other nodes}
\end{aligned}
\right.
\end{equation}
where $\theta_{\min}$ and $\theta_{\max}$ are given by its parent node which will be presented at the end of this section.  
The length of the ray is denoted by $\Phi_p(\theta_j)$. 
There have been efficient implementations for raycasting, such as Bresenham's line algorithm~\cite{Bresenham1965Algorithm}, so we omit its details. 
For the root node, we initially create $5$ rays orienting to $0, \frac{\pi}{2}, \pi, \frac{3\pi}{2}, 2\pi$, whilst for other nodes, we create $2$ rays orienting to $\theta_{\min}$ and $\theta_{\max}$, as such the angle bisector of consecutive rays are meaningful. 
Storing all rays in angle increasing order as shown in Eqn.~(\ref{equ_raylists}), new rays are inserted into the bisector of consecutive rays.
\begin{equation}\label{equ_raylists}
\{\theta_j, e(\theta_j), \Phi_p(\theta_j)\}, j = 1, \cdots, n, 0\leq \theta_1 < \cdots < \theta_n \leq 2\pi
\end{equation}
There are clear stopping criteria for inserting rays between the $j$-th and $(j+1)$-th rays: 
\begin{enumerate}
\item If
\begin{equation}\label{equ_insert}
\parallel e(\theta_j) - e(\theta_{j+1})\parallel_2 < 2R
\end{equation}
then we stop inserting new rays between the $j$-th ray and the $(j+1)$-th ray, because a robot with lethal radius $R$ cannot go through the middle of $e(\theta_j)$ and $e(\theta_{j+1})$.
\item When the clearance between occluded obstacles is larger than $2R$, then Eqn.~(\ref{equ_insert}) cannot be satisfied. 
\footnote{The cases given $\Phi_p(\theta_j) > \Phi_p(\theta_{j+1})$ or $\Phi_p(\theta_j) < \Phi_p(\theta_{j+1})$ are totally symmetric, so hereafter we assume that $\Phi_p(\theta_{j+1}) - \Phi_p(\theta_{j}) > 2R$.}
If $e(\theta_j)$ is near the $(j+1)$-th ray enough, constrained by $\epsilon$ whose physical meaning is the minimal admissible distance between the robot's footprint and the obstacle, 
\begin{equation}\label{equ_epsilon}
\left(\Phi_p(\theta_{j})\right)^2 - \left(e(\theta_{j})\cdot\frac{ e(\theta_{j+1})}{\Phi_p(\theta_{j+1})}\right)^2 < \epsilon^2
\end{equation}
we also stop inserting new rays. 
See Fig.~\ref{fig:epsilon} for the physical meaning of $\epsilon$. 
In a practical setting $\epsilon$ as $0.1d$ is enough for grid-based applications, where $d$ is the grid size of the map. 
\end{enumerate}

Two consecutive rays violating Eqn.~(\ref{equ_insert}) but satisfying Eqn.~(\ref{equ_epsilon}) form a depth discontinuity of the node's surrounding region, which was also referred to as a \textit{gap}~\cite{Tovar2007Distance}. (line~\ref{alg:raycasting} and line~\ref{alg:raycasting_2} in \textbf{Algorithm}~\ref{alg:raystar})

\begin{figure}[t]
\centering
\includegraphics[width=0.3\textwidth]{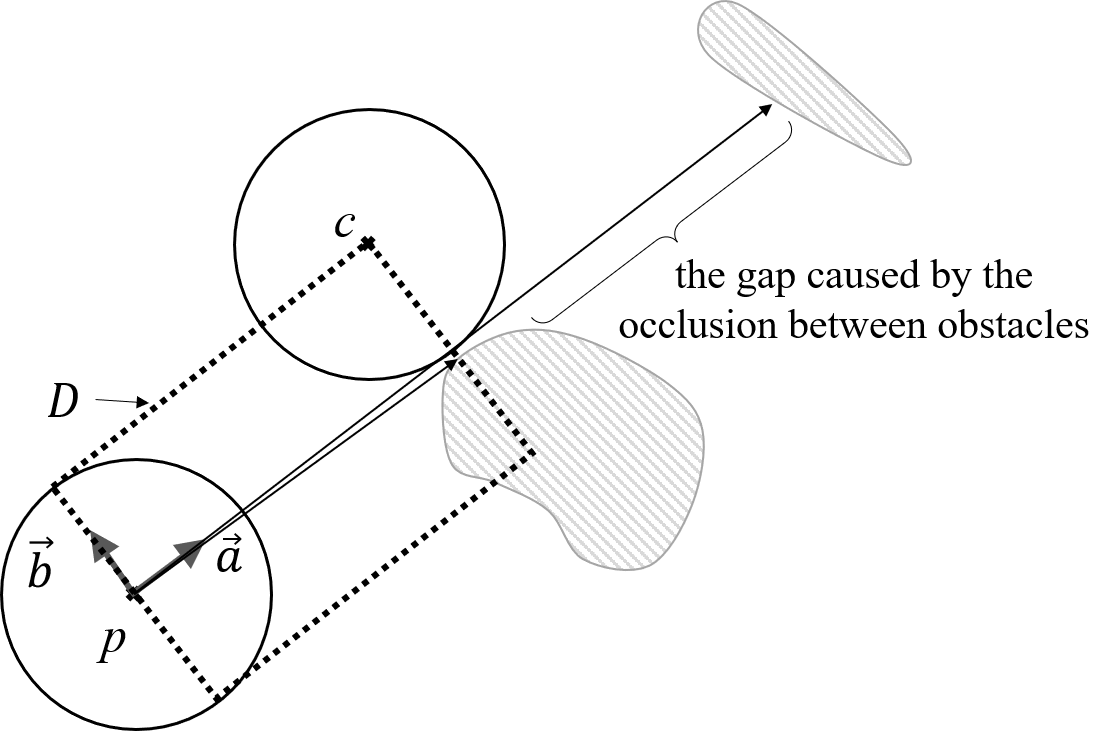}
\caption{Variables in the corridor corresponding to a gap. }\label{fig:corridor}
\end{figure}

\subsection{Planning in Corridor}
This module defines a corridor (note that we do not assume that corridors are obstacle-free as in~\cite{Ge2011Simultaneous}) for each gap, and path planning is carried out within the corridor. 
The corridor is a rectangle staying symmetrically on the long ray that forms the gap. 
See Fig.~\ref{fig:corridor} for visualisation. 
Let the $j$-th ray and the $(j+1)$-th ray satisfy Eqn.~(\ref{equ_epsilon}), then 
\begin{equation}
\Phi_{\rm near} \triangleq \Phi_p(\theta_j),\ \Phi_{\rm far} \triangleq \Phi_p(\theta_{j+1})
\end{equation}
Let $\vec{a}$ be the unit vector parallel to the long ray,
\begin{equation}
\vec{a} = \left(\cos\theta_{j+1}, \sin\theta_{j+1}\right)\label{equ_a}
\end{equation}
$\vec{b}$ be perpendicular to $\vec{a}$, 
\begin{equation}
\vec{b} = \left(\cos\left(\theta_{j+1}+ \frac{\pi}{2}\right), \sin\left(\theta_{j+1} + \frac{\pi}{2}\right)\right)
\end{equation}
$(\vec{a}, \vec{b})$ forms an orthonormal coordinate. 
Then the corridor $D$ is the collision-free part of a rectangular region spanned by $\vec{a}$ and $\vec{b}$, which can be parameterised as 
\begin{equation}\label{equ_D}
D \subset \{p + \phi\vec{a} + \varphi\vec{b}\}, \phi\in [0, \Phi_{\rm near}], \varphi\in [-R, R]
\end{equation}
After Eqn.~(\ref{equ_D}) we can represent every point in the modified frame by $(\phi, \varphi)$, such as the source point $p$ is $(0, 0)$.

Finally, the \textit{critical point} $c$ is defined as a corner point of $D$, 
\begin{equation}\label{equ:c}
c= p + \Phi_{\rm near}\vec{a} + R\vec{b}
\end{equation}

\begin{algorithm}[H]
    \caption{$k$-SNPP Solver}\label{alg:raystar}
    \begin{algorithmic}[1]
        \Require start point $p_{start}$, goal point $p_{goal}$ 
        \Ensure $k$ shortest non-homotopic paths $\alpha_1, \cdots, \alpha_k$
            \State Initialise the priority queue, $open = \{p_{start}\}$
			\State Initialise the edge list $edge = \varnothing$
			\State Initialise the gap sweeper list $gs = \varnothing$
            \While{in the $i$-th cycle}  
							\If{$open == \varnothing$}
								\State \Return{all collected result paths (\textbf{Thm~\ref{thm:completeness}})}
							\EndIf
							\State $p^i$ = currently best candidate in the $open$ list
							\State // Expand node $i$
							\State list of gaps $gaps = \varnothing$
							\State $gaps \leftarrow$ \textit{\textbf{Sparse Raycasting}} from $p^i$\label{alg:raycasting}
							\While{for each gap in $gaps$ (say the $l$-th)}
								\State list of new obstacle $newobs = \varnothing$
								\State Define corridor $D^i_l$
								\State $[newobs, \alpha_l^i] \leftarrow$ \textit{\textbf{Planning in Corridor}}\label{alg:planning_in_corridor}
								\If{$\alpha_l^i$ is found}
									\State $[newobs, \Delta_l^i] \leftarrow$ \textit{\textbf{Sweeping the Gap}}\label{alg:sweeping}
								\Else
									\State Removing this gap by removing rays
								\EndIf
								\State Inserting rays towards all obstacles in $newobs$\label{alg:insert_rays}
								\State $gaps \leftarrow$ \textit{\textbf{Sparse Raycasting}} from $p^i$\label{alg:raycasting_2}
                    \State // new gaps may be collected\label{alg:iter}
							\EndWhile
							\State // Looking for result paths
							\If{$p_{goal} \in Q^i$}
								\If{Found path in $Q^i$ from $p^i$ to $p_{goal}$}
									\State Collect the result path
								\Else
									\State \Return {no path (\textbf{Thm~\ref{thm:subregion_completeness}})}
								\EndIf
							\EndIf
							\State // Assume there are $L$ critical points
							\State // Looking for partial order relations
							\For{each gap (say the $l$-th)}
							\For {each gap sweeper in $gs$ (say $
\Delta_m^j$)}
								\State Compare $\Delta_l^i$ to $\Delta_m^j$ (\textbf{Lem}~\ref{lem:sweeper_to_sweeper}, \textbf{Prop}~\ref{prop:goal_dependent})
								\State Compare $\alpha_l^i$ to $\Delta_m^j$ (\textbf{Lem}~\ref{lem:sweeper_to_path}, \textbf{Cor}~\ref{coro:sweeper_to_path})
							\EndFor
							\For {each edge in $edge$ (say $\alpha_m^j$)}
								\State Compare $\Delta_l^i$ to $\alpha_m^j$ (\textbf{Lem}~\ref{lem:sweeper_to_path}, \textbf{Cor}~\ref{coro:sweeper_to_path})
								\State Compare $\alpha_l^i$ to $\alpha_m^j$ (\textbf{Lem}~\ref{lem:path_to_path})
							\EndFor
								\State Store $\alpha_l^i$ into $edge$
								\State Store $\Delta_l^i$ into $gs$
							\EndFor
							\State Inherit partial order relation to child nodes (\textbf{Thm}~\ref{thm:inherit})
                \State Estimate the cost $g(c_l^i) + h(c_l^i), l = 1, \cdots, L$
                \State Push $c_l^i, l = 1, \cdots, L$ into the $open$ list
						\State Update $open$ list (\textbf{Thm}~\ref{thm:to_result_path})
            \EndWhile  
    \end{algorithmic}  
\end{algorithm}
\noindent

\noindent
Regarding $p$ as the start point and $c$ as the goal point in $D$, we use the A*~\cite{hart1968formal} planner to find the shortest path connecting $p$ and $c$ within $D$, denoted by $\alpha$. 

If the path $\alpha$ cannot be found, then $D$ is not C-space connected, and we have detected the obstacles that obstruct the robot. 
In this case, we do the following steps to exclude missing obstacles, remove the rays that have been inserted to the unreachable area, discard the invalid gap, and find missing gaps (line~\ref{alg:planning_in_corridor} $\sim$ line~\ref{alg:iter} in \textbf{Algorithm}~\ref{alg:raystar}): 

\begin{enumerate}
\item We create rays pointing at all the newly detected obstacles and insert them into the ray list. 
\item In the updated ray list, if the distance between the endpoint of two rays is less than $2R$, then We remove all the rays between them. 
\item The $D$ does not admit the path $\alpha$, so the gap is invalid and is removed. 
Note that the removal of a gap is simply avoiding creating a child node (will be presented in Section~\ref{subsection_iterative}). 
\item Since the new rays might violate the stopping criteria of sparse raycasting (Eqn.~(\ref{equ_insert}) and Eqn.~(\ref{equ_epsilon})), we may need to insert new rays at the angle bisector of a ray inserted in step 1) and a ray created initially.  \label{step:re-refine}
\item New gaps may be detected by step 4). 
If so, we insert them into the gap list. 
Note that the gap list need not be ordered when creating the node, so the gap insertion is simply appending them at the rear.
\end{enumerate}

\noindent
Illustrations of the above steps are provided in Fig.~\ref{fig:failure}. 

Note that the more rays are created, the rarer that missing obstacle appears, with the above steps being seldom executed. 
But these steps would still exist for the completeness of the algorithm because the creation of a large number of sparse rays is still different from dense raycasting which is computationally unaffordable in practice. 

\begin{figure}[t]
\centering
\subfigure[The path connecting $p$ and $c$ within $D$ does not exist, because there is a missing obstacle which is overlooked by rays and detected in the planning phase. 
New rays are created, a new corridor structure is constructed, and the old corridor is discarded. ]{
\includegraphics[width = 0.48\textwidth]{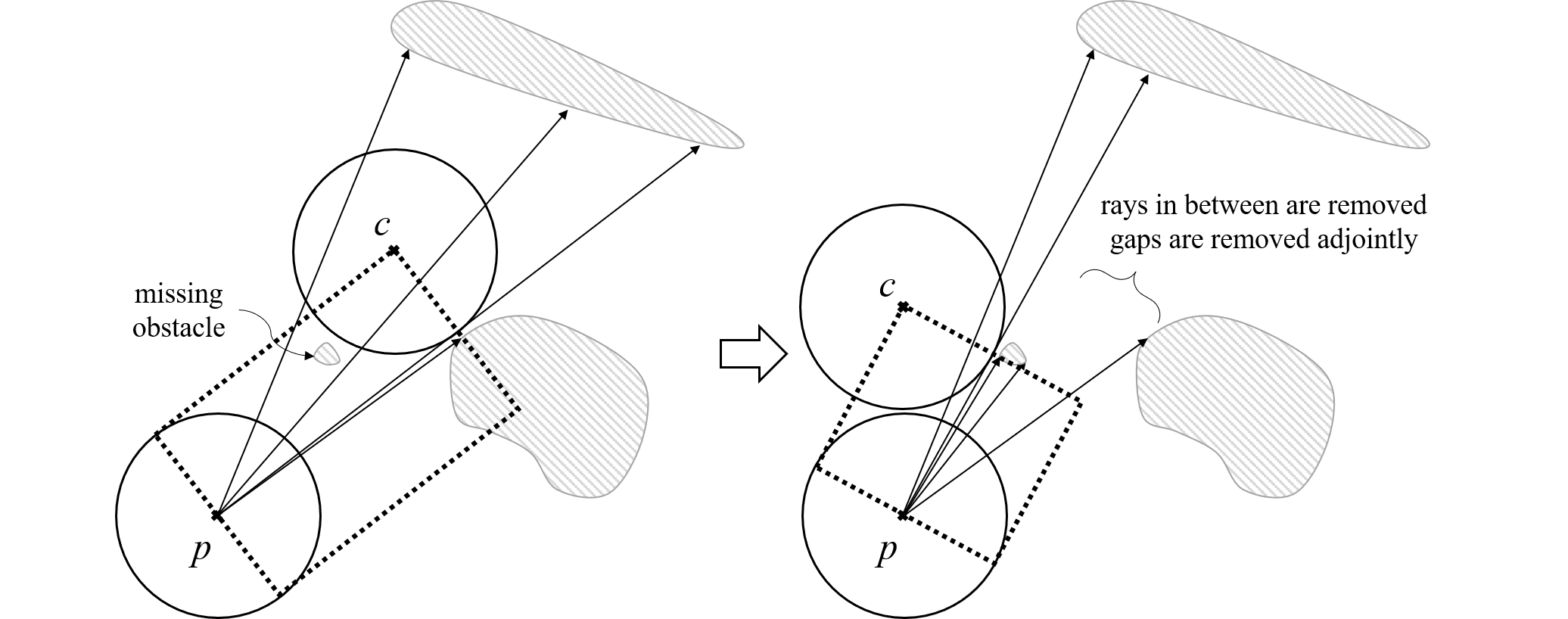}\label{fig:failure:a}
}
\subfigure[Two corridors are obstructed mutually. When the intermediate rays are removed, both corridors are removed. ]{
\includegraphics[width = 0.48\textwidth]{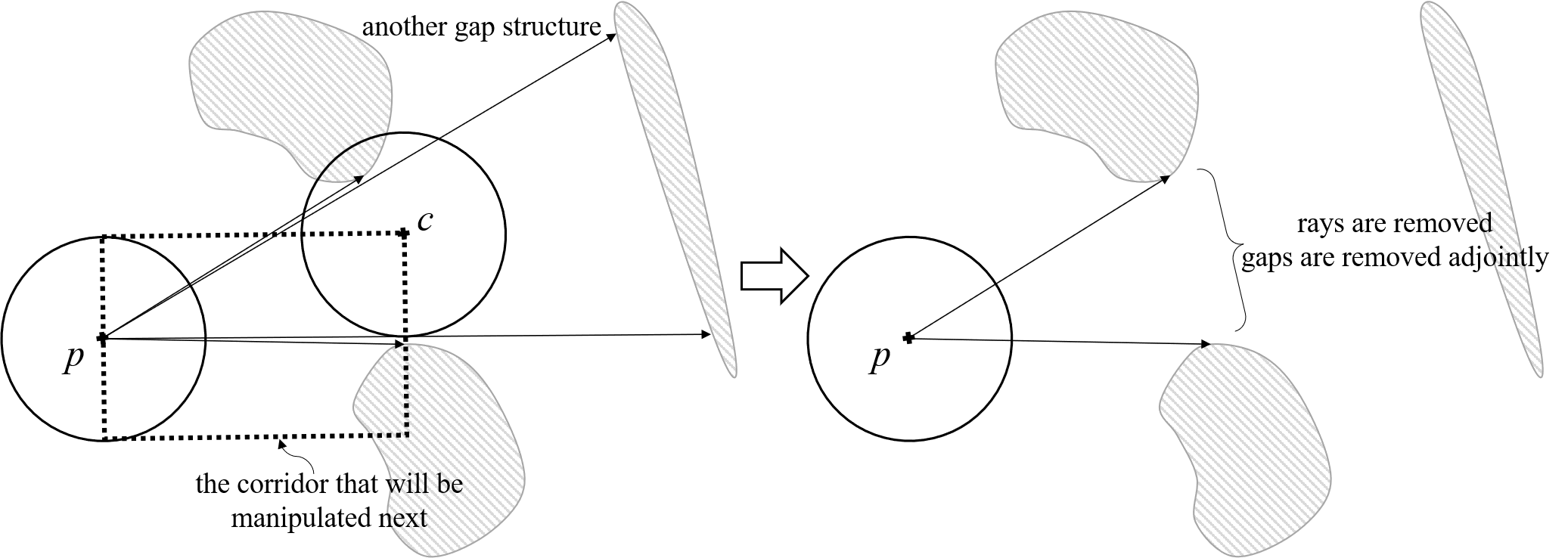}\label{fig:failure:b}
}
\caption{The cases where the planning in corridor fails. In all the cases shown above, new rays will be inserted and the gap structures will be updated. When the node expansion finishes, all the corridors admit a collision-free path. 
}\label{fig:failure}
\end{figure}

\begin{figure*}[t]
\centering
\subfigure[]{
\includegraphics[width=0.23\textwidth]{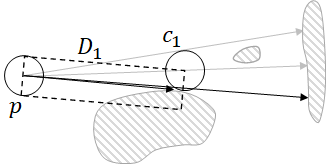}
}
\subfigure[]{
\includegraphics[width=0.23\textwidth]{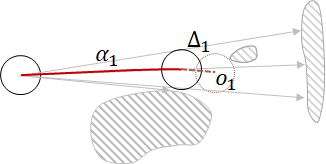}
}
\subfigure[]{
\includegraphics[width=0.23\textwidth]{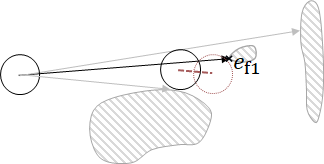}
}
\subfigure[]{
\includegraphics[width=0.23\textwidth]{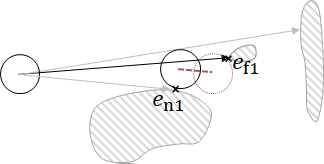}
}
\subfigure[]{
\includegraphics[width=0.23\textwidth]{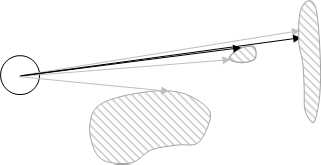}
}
\subfigure[]{
\includegraphics[width=0.23\textwidth]{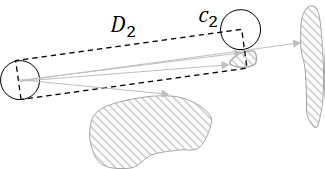}
}
\subfigure[]{
\includegraphics[width=0.23\textwidth]{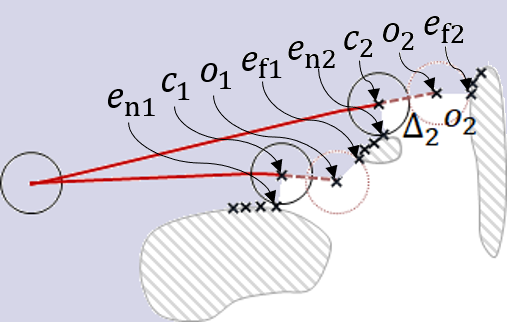}
}
\subfigure[]{
\includegraphics[width=0.23\textwidth]{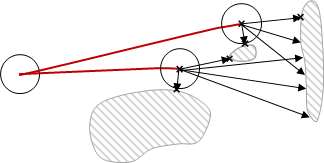}
}
\caption{Illustration of the node expansion. (a) Using sparse rays, one corridor $D_1$ is defined whilst a small obstacle is overlooked. (b) After $\alpha_1$ is found in $D_1$, the gap sweeper $\Delta_1$ (represented by the red dashed line) is constructed and stops at $o_1$ (where the robot is about to hit the missing obstacle). (c) A new ray is inserted pointing at the obstacle that $o_1$ hits, $e_{\rm f1}$. (d) The rays with orientation between $e_{\rm n1}$ and $e_{\rm f1}$ are removed. (e) With new rays being inserted, a new gap is detected. (f) A new corridor $D_2$ is defined, together with $c_2$. (g) After the gap sweeper $\Delta_2$ is constructed, all points marked by $\times$ together with $\Delta_1$ and $\Delta_2$ are well-defined and form a C-space connected boundary of $Q$ (painted in light purple). (h) Illustration of the orientation of sparse raycasting of child nodes. 
}\label{fig:insert_new_rays}
\end{figure*}

\subsection{Sweeping the Gap}

After the path has been constructed in the corridor, we do collision-checking starting from $c$ with orientation $\vec{a}$ (the long ray) until a position $o$ where the robot hits an obstacle. 
The checked line segment is referred to as a \textit{gap sweeper}, denoted by $\Delta$ and parameterised as
\begin{equation}
\Delta \subset \{p+\phi\vec{a}+R\vec{b}\}, \phi\in [\Phi_{\rm near}, \Phi_{\rm far}]
\end{equation}
The obstacle that stops the gap sweeping process is denoted by $e_{\rm f}$. 
(line~\ref{alg:sweeping} in \textbf{Algorithm}~\ref{alg:raystar})

This process is very simple but is a finishing touch to the completeness of planning in the tree node sub-region (to be formally defined in the next subsection): All points in $\Delta$ are reachable by the robot since the edge $\alpha$ has been constructed and $\Delta$ is collision-free. 
In fact, the gap sweepers of a node guarantees that all the collision-free points on the boundary of the sub-region are reachable by travelling along the tree. 
An example showing the necessity of the gap sweeper will be given in Section~\ref{subsection_counterexample}. 

\subsection{Sub-Region of a Node}\label{subsection_complete_region}
After running the above-mentioned modules, rays and gaps have been created. 
For each gap, the corridor has been defined, the shortest path within the corridor that connects the source point and the critical point has been found, and the gap sweeper has been constructed. 
See Fig.~\ref{fig:insert_new_rays} for a conceptual illustration of the whole process. 
Let the corresponding sub-region of node $p$ be denoted by $Q$. 
A visualisation of the boundary of $Q$ is illustrated in Fig.~\ref{fig:insert_new_rays}(g). 
Concretely, denoting the endpoint of the short ray forming the gap as $e_{\rm n}\triangleq e(\theta_j)$, the point $e_{\rm n}, c, o$, and $e_{\rm f}$ form a C-space connected boundary. 
So the combination of such points of all gaps $\{e_{{\rm n}k}, c_k, o_k, e_{{\rm f}k}\}$ and the endpoint of all rays $\{e(\theta_*)\}$ form a C-space closed contour, which is the boundary of $Q$.  

Finally, $\theta_{\rm f}$ and $\theta_{\rm n}$ are the related orientation of $e_{\rm f}$ and $e_{\rm n}$ to $c$, which also specifies the orientation range (in Eqn.~(\ref{equ:range})) of the sparse raycasting of the child node. 
To list the rays of the child node in an orientation-increasing order, $\theta_{\min}, \theta_{\max}$ are given as follows
\begin{equation}
(\theta_{\min}, \theta_{\max}) = 
\left\{
\begin{aligned}
&(\theta_{\rm f}, \theta_{\rm n}), \Phi_p(\theta_j) > \Phi_p(\theta_{j+1})\\
&(\theta_{\rm n}, \theta_{\rm f}), \Phi_p(\theta_j) < \Phi_p(\theta_{j+1})
\end{aligned}
\right.
\end{equation}
They are shifted by $2\pi$ so that 
\begin{equation}
0\leq \theta_{\min} < 2\pi,\ \theta_{\min}< \theta_{\max} < \theta_{\min} + 2\pi
\end{equation}

\subsection{Iterative Expansion of Nodes}\label{subsection_iterative} 
The hierarchical topological tree is constructed by iteratively creating new nodes, where the critical point will be the source point of the child node. 
The edges of the tree are the path segments constructed in the corridors. 
For simplicity, we define the terminology for easy reference to the path indicated by the tree. 
\begin{definition}\label{def:tree_path}
(Tree Path) Given a point $x$, the tree path is the path connecting $p_{start}$ and $x$ following the ``parent-child" structures of the tree. Concretely, 
\begin{enumerate}
\item When $x$ is a source point or a critical point, the tree path is the concatenation of edges. 
\item When $x$ is a point on the edge, the tree path is the concatenation of edges truncated at $x$. 
\item When $x$ is a point on the gap sweeper, the tree path concatenates not only the edges but also the gap sweeper truncated at $x$. 
\item When $x$ is a generic point in the sub-region of a node, say $Q^i$, the tree path the concatenation of the tree path of $p^i$ and the locally shortest path from $p^i$ to $x$ in $Q^i$.
\end{enumerate}
If $x$ is in multiple structures, then the tree path depends on the context. 
The length of the tree path is denoted by $g(x)$. 
\end{definition}

As a final remark, the expansion order of nodes makes no difference to the shape of their sub-regions, because the construction of a new node only requires the position of $p$ and the value of $\theta_{\min}$ and $\theta_{\max}$, which depend only on its parent node. 
Hence we may follow the basic idea of the priority queue (the same one as A*~\cite{hart1968formal}): Pushing all the unexpanded critical points into a queue whereby the cost is the sum of the cost-to-come and a heuristic cost-to-go, 
\begin{equation}
Fcost \triangleq Gcost+Hcost = g(c) + \parallel c - p_{goal}\parallel_2 
\end{equation}
And the critical point with the least cost will be chosen as the seed point for the next iteration.

\section{Completeness and Local Optimality}\label{section_theoretical_analysis}
In this section, the completeness of planning in sub-regions and the local optimality of tree edges are proven. 
All notations are adopted, with super-script denoting the index of the node, and sub-script denoting the index of the corridor. 

\subsection{Completeness}\label{subsection_completeness}
\begin{proposition}\label{prop:completeness}
Assume $p_{goal}\in Q^j$. 
If there exists a path from $p_{start}$ to $p_{goal}$ in the whole environment, then there must be a path from $p^j$ to $p_{goal}$ in $Q^j$. 
\end{proposition}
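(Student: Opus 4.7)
My plan is to prove this constructively, by producing a path from $p^j$ to $p_{goal}$ that stays inside $Q^j$. The idea is to splice a tree-induced path that reaches $\partial Q^j$ onto a tail of the assumed environment path $\gamma$.

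The first step is to pin down the structure of $\partial Q^j$. By the construction in Section~\ref{section_node}, $\partial Q^j$ is a closed contour whose pieces are of three kinds: (i) obstacle arcs between consecutive ray endpoints $e(\theta_*)$ whose separation is below $2R$ (the stopping criterion of Eqn.~(\ref{equ_insert})), which are impassable to a robot of radius $R$; (ii) obstacle arcs anchored at the pairs $e_{{\rm n}k}^j$ and $e_{{\rm f}k}^j$, also impassable; and (iii) the gap sweepers $\Delta_k^j$ connecting $c_k^j$ to $o_k^j$, which are the only collision-free portions of $\partial Q^j$. Hence every collision-free path of the robot that crosses $\partial Q^j$ must do so through some gap sweeper.

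I then apply this to the hypothesised environment path $\gamma$ from $p_{start}$ to $p_{goal}$. Since $\gamma(1) = p_{goal}\in Q^j$, let $t^*$ be the largest parameter in $[0,1]$ with $\gamma(t^*)\in \partial Q^j$, so that $\gamma|_{(t^*,1]}$ lies in the interior of $Q^j$. Generically $\gamma(t^*)$ lies on some gap sweeper $\Delta_k^j$; the only exception is the root-node case in which $p^j = p_{start}$ and $\gamma$ never leaves $Q^j$, where $\gamma$ itself already proves the proposition.

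Finally I build the spliced path. By the outcome of planning-in-corridor, the edge $\alpha_k^j\subset D_k^j \subset Q^j$ is a collision-free path from $p^j$ to $c_k^j$, and by sweeping-the-gap the subsegment of $\Delta_k^j$ from $c_k^j$ to $\gamma(t^*)$ is collision-free. Concatenating these two pieces with $\gamma|_{[t^*,1]}$ produces a collision-free path from $p^j$ to $p_{goal}$ that is contained in $Q^j$. The main obstacle is the first step: rigorously justifying that every non-gap-sweeper piece of $\partial Q^j$ is genuinely C-space-obstructed. This rests on an invariant maintained by the sparse-raycasting and failure-recovery loop of Section~\ref{section:sparse_raycasting}, namely that upon exit from the iteration at line~\ref{alg:iter} no passable corridor between two adjacent rays without an explicit gap has been missed, which is precisely what the iterative insertion of rays toward newly discovered obstacles and the subsequent removal of rays pointing into unreachable areas ensure.
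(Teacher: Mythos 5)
Your proof takes essentially the same constructive route as the paper: observe that any path reaching $p_{goal}\in Q^j$ from outside must cross $\partial Q^j$ through a gap sweeper (since every other boundary piece is C-space obstructed), then splice the tree edge $p^j\to c_k^j$, the gap-sweeper segment $c_k^j\to q$, and the tail of $\gamma$ from $q$ to $p_{goal}$. Your version is somewhat more careful—taking the last crossing parameter $t^*$ to guarantee the tail stays inside $Q^j$, and handling the root-node case explicitly—but the key idea and the constructed path are identical to the paper's proof.
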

\begin{proof}
The proof is constructive. 
If there exists a path from $p_{start}$ to $p_{goal}$ which does not enter $Q^j$ at $p^j$, then it must have an intersection with one of the gap sweepers, say $\Delta_l^j$. 
Denote the intersection as $q$, we can  construct a collision-free path from $p^j$ to $p_{goal}$ in $Q^j$: starting from $p^j$, passing through $c_l^j$ to $q$, and taking over the following segment of the path to $p_{goal}$. 
\end{proof}

\begin{theorem}\label{thm:subregion_completeness}
(Sub-Region Completeness) Assume $p_{goal}\in Q^j$. If we cannot find a collision-free path in $Q^j$ from $p^j$ to $p_{goal}$, then there does not exist a resultant path in the whole environment. 
In other words, the planning algorithm can terminate immediately with proven no path. 
\end{theorem}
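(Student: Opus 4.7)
The plan is to recognise that this theorem is essentially the contrapositive of \textbf{Proposition~\ref{prop:completeness}}, augmented with a brief justification of completeness of the in-region search. So I would first invoke the standard completeness of A* on a grid: since $Q^j$ is a bounded, grid-discretised subset of $M_{\rm free}$, if the A*-style search inside $Q^j$ from $p^j$ to $p_{goal}$ returns failure, then no collision-free path from $p^j$ to $p_{goal}$ lying entirely in $Q^j$ exists. This is a pure search-theoretic fact and needs no topological input.

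Next I would apply the contrapositive of \textbf{Proposition~\ref{prop:completeness}}. The proposition asserts (under the standing hypothesis $p_{goal}\in Q^j$) that the existence of any collision-free path from $p_{start}$ to $p_{goal}$ in the whole environment forces the existence of a collision-free path from $p^j$ to $p_{goal}$ confined to $Q^j$. Contraposing: non-existence of the in-region path rules out the existence of any global path from $p_{start}$ to $p_{goal}$. Combined with the previous paragraph, the failure of the in-region A* search certifies that no global solution exists, justifying immediate termination.

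The only step that requires slightly more care is handling the case in which the goal point $p_{goal}$ happens to lie simultaneously on the boundary of $Q^j$ (say on a gap sweeper $\Delta_l^j$ or on an edge $\alpha_l^j$), since such boundary points are shared between $Q^j$ and a neighbouring sub-region. I would point out that in each such case $p_{goal}$ is reached by the \textit{tree path} itself (per \textbf{Definition~\ref{def:tree_path}}, items 2 and 3), so the in-region search trivially returns a path and the hypothesis of the theorem is vacuous. Hence we may restrict attention to $p_{goal}$ lying in the interior of $Q^j$, where the argument of the first two paragraphs applies without modification.

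The main obstacle, to the extent that there is one, is simply making the two-step reduction explicit and not letting boundary ambiguities (goal coinciding with a critical point, a source point, or a sweeper/edge point) create a fake gap in the argument; the substantive geometric content is already carried by \textbf{Proposition~\ref{prop:completeness}}, whose constructive proof rerouting through a gap sweeper intersection is precisely what makes the sub-region boundary ``transparent'' to any hypothetical global path.
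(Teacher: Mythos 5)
Your proposal is correct and is essentially the paper's proof: the paper's entire argument for Theorem~\ref{thm:subregion_completeness} is the single sentence ``the contrapositive of Proposition~\ref{prop:completeness}.'' The extra material you add --- the appeal to completeness of grid A* to translate ``the search returned failure'' into ``no path exists inside $Q^j$,'' and the case analysis for $p_{goal}$ lying on a boundary structure of $Q^j$ --- is harmless filler around the same core step rather than a different route, so no comparison is needed.
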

\begin{proof}
The contrapositive of \textbf{Proposition~\ref{prop:completeness}}. 
\end{proof}

\begin{theorem}\label{thm:completeness}
(Completeness) When there is no node to expand, if $p_{goal} \notin Q^i, \forall i$, then there does not exist a path connecting $p_{start}$ and $p_{goal}$ in the whole environment.
\end{theorem}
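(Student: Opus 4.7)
The plan is to argue the contrapositive: assuming the open list is empty at termination and that some collision-free path $\gamma:[0,1]\to M_{\rm free}$ joins $p_{start}$ to $p_{goal}$, I will derive a contradiction with the hypothesis $p_{goal}\notin Q^i$ for all $i$.

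Let $U=\bigcup_i Q^i$. Since the root satisfies $p^1=p_{start}\in Q^1$, we have $p_{start}\in U$, while by hypothesis $p_{goal}\notin U$. Because each $Q^i$ is closed (it is bounded by a C-space closed contour, as described in Section~\ref{subsection_complete_region}), $U$ is closed, so the supremum $t^{*}=\sup\{\,t\in[0,1]:\gamma([0,t])\subset U\,\}$ is attained and satisfies $0\leq t^{*}<1$. Set $q=\gamma(t^{*})$; then $q\in\partial Q^i$ for some $i$, and $\gamma(t^{*}+\delta)\notin U$ for arbitrarily small $\delta>0$.

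Next I would invoke the explicit description of $\partial Q^i$: it is assembled from obstacle arcs (segments between consecutive ray endpoints whose separation violates Eqn.~(\ref{equ_insert})) and the gap sweepers $\Delta_k^i$. Since $\gamma$ is collision-free, $q$ cannot lie on an obstacle arc, so $q\in\Delta_l^i$ for some gap index $l$. The critical point $c_l^i$ was pushed onto the open list when node $i$ was expanded, and by the termination hypothesis $c_l^i$ has itself been expanded into a child node $m$ with source point $p^m=c_l^i$ and orientation range $[\theta_{\min},\theta_{\max}]$ chosen exactly so that the first rays of $m$ straddle the far side of $\Delta_l^i$. Hence $Q^m$ covers a one-sided neighbourhood of $q$ opposite to $Q^i$, so $\gamma(t^{*}+\delta)\in Q^m\subset U$ for every sufficiently small $\delta>0$, contradicting the definition of $t^{*}$.

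The step I expect to be the main obstacle is rigorously ruling out a collision-free ``crack'' between parent and child sub-regions, namely showing that $Q^m$ really furnishes a one-sided neighbourhood of every collision-free point of $\Delta_l^i$ rather than only an asymptotic covering. This has to be read off the child's sparse raycasting: its initial bounding rays are aligned with the inherited $\theta_{\rm f}$ and $\theta_{\rm n}$, and subdivision proceeds until Eqn.~(\ref{equ_insert}) or Eqn.~(\ref{equ_epsilon}) is met, which geometrically pins $\partial Q^m$ to $\Delta_l^i$ over its collision-free portion. If the child itself is halted by an obstacle before covering $q$, one descends to a grandchild via the same mechanism; because the empty-open-list hypothesis forces the tree to be finite, this nested appeal terminates in finitely many steps and captures a neighbourhood of $q$, closing the contradiction.
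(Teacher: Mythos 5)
Your argument is correct and mirrors the paper's: both hinge on the observation that once every critical point has been expanded, every gap sweeper lies in the interior of $Q_{\rm union}=\bigcup_i Q^i$, so the boundary of the union consists only of obstacle points with sub-$2R$ clearance and is therefore impassable for a robot of radius $R$, which immediately separates $p_{start}\in Q_{\rm union}$ from $p_{goal}\notin Q_{\rm union}$. Your contrapositive supremum/crossing-point rendering is just a more explicit form of what the paper states in one line, and the geometric covering claim you flag as the ``main obstacle'' — that the child $Q^m$ rooted at $c_l^i$ captures a one-sided neighbourhood of every collision-free point of $\Delta_l^i$ — is equally taken for granted, without proof, in the paper's version.
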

\begin{proof}
Denote the union of the sub-regions of all nodes as 
\begin{equation}
Q_{\rm union} = \bigcup\limits_{i}Q^i, \forall i
\end{equation}
The boundary of $Q_{\rm union}$ is formed by the obstacle points detected by rays, where the clearance between consecutive obstacle points is less than $2R$. 
Since the robot's radius is $R$, it cannot move from inside $Q_{\rm union}$ to outside $Q_{\rm union}$. 
Since $p_{start} = p^1\in Q^1 \subset Q_{\rm union}$ and  $p_{goal}\notin Q_{\rm union}$, there is no resultant path. 
\end{proof}

\subsection{An Example of Non-completeness without Gap Sweepers}\label{subsection_counterexample}

\begin{figure}[t]
\centering
\subfigure[]{
\includegraphics[width = 0.22\textwidth]{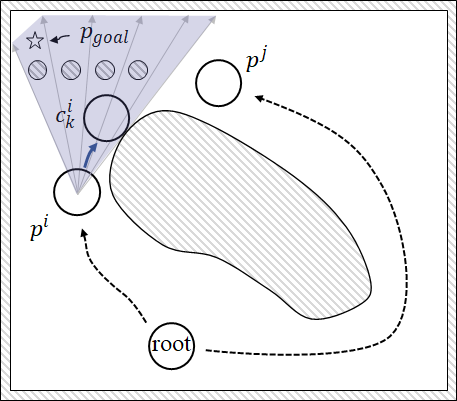}
}
\subfigure[]{
\includegraphics[width = 0.22\textwidth]{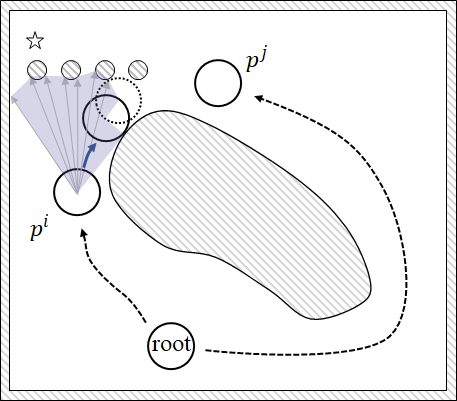}
}
\caption{Illustration of a counterexample of node expansion where gap sweepers are not generated. 
(a) The non-existence of a path connecting $p^i$ and the goal in $Q^i$ is not equivalent to the non-existence of a path connecting the root and the goal in the whole map. 
(b) The desired construction of $Q^i$ with gap sweeper involved. 
}\label{fig:counterexample}
\end{figure}

To help the reader understand the necessity of gap sweepers, we give a counterexample: If rays are sparse and we do not generate gap sweepers, then the region $Q$ loses completeness. 
See Fig.~\ref{fig:counterexample}, four small obstacles are missing by the rays emitting from $p^i$, and they are not detected during the pathfinding from $p^i$ to $c_k^i$. 
So the expansion of nodes finishes normally. 
Given the goal shown in the figure (within $Q^i$), there is no path in $Q^i$ connecting $p^i$ and the goal. 
However, a resultant path exists that visits $p^j$. 
Hence, the local non-existence of paths towards the goal in $Q^i$ is not equivalent to the non-existence of the resultant path in the whole environment, i.e., no sub-region completeness. 

\subsection{Local Optimality}\label{subsection_optimality} 

In this subsection, we prove that locally shortest paths must be a concatenation of tree edges, hence are being constructed segment-by-segment along with the node expansion.
\begin{proposition}\label{prop:deformation}
Denote $\mathscr{P}_{D_k^i}$ as the set of paths that connect $p^i$ and $c_k^i$ in $D_k^i$. 
Denote $\mathscr{P}_{Q^i}$ as the set of paths that connect $p^i$ and $c_k^i$ in $Q^i$. 
Assume that there is no internal obstacle in $Q^i$ overlooked by the sparse rays, then 
\begin{equation}
\mathop{\rm argmin}\limits_{\alpha\in \mathscr{P}_{D_k^i}}g(\alpha) = \mathop{\rm argmin}\limits_{\alpha\in \mathscr{P}_{Q^i}}g(\alpha)
\end{equation}
\end{proposition}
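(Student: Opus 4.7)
My plan is to establish the claimed equality by proving both inequalities on the length of the minimizers and then invoking the uniqueness of the locally shortest path in 2D (stated with the definition of locally shortest path) to conclude that the argmins agree. Since $D_k^i\subseteq Q^i$ as subsets of $M_{\rm free}$, we immediately have $\mathscr{P}_{D_k^i}\subseteq \mathscr{P}_{Q^i}$, and hence $\min_{\alpha\in\mathscr{P}_{Q^i}} g(\alpha) \le \min_{\alpha\in\mathscr{P}_{D_k^i}} g(\alpha)$. The whole content of the proposition therefore lies in the reverse direction: the shortest path from $p^i$ to $c_k^i$ in the larger region $Q^i$ must actually stay inside the corridor $D_k^i$.

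For the reverse direction I would argue by contradiction. Let $\alpha^*$ denote the locally shortest path achieving the minimum over $\mathscr{P}_{Q^i}$, and suppose $\alpha^*$ has a maximal open sub-arc lying outside $D_k^i$, with entry point $x\in\partial D_k^i$ and exit point $y\in\partial D_k^i\cup\{c_k^i\}$. Working in the $(\vec{a},\vec{b})$ frame of Eqn.~(\ref{equ_D}), $\partial D_k^i$ consists of at most four straight pieces ($\phi=0$, $\phi=\Phi_{\rm near}$, $\varphi=\pm R$) intersected with $M_{\rm free}$. I would replace the exterior arc $\alpha^*\vert_{[x,y]}$ either by a curve travelling along the relevant side of $\partial D_k^i$, or, when internal obstacles force a detour, by the taut-string geodesic inside $D_k^i$ connecting $x$ and $y$. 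The no-overlooked-obstacle assumption is precisely what guarantees that this replacement curve is collision-free, since the free structure inside $D_k^i$ is then completely determined by the sparse rays. Showing that the replacement is at most as long as the exterior arc would contradict the minimality of $\alpha^*$.

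The main obstacle is to quantify the length comparison between the exterior arc and its interior replacement. This reduces to a planar geometric claim: for the convex rectangle $[0,\Phi_{\rm near}]\times[-R,R]$, with the two endpoints $(0,0)$ and $(\Phi_{\rm near},R)$ both on its boundary, any collision-free arc in $Q^i$ joining points $x,y\in\partial D_k^i$ must be at least as long as the corresponding taut string inside the rectangle. Because the two ``long'' sides $\varphi=\pm R$ are tangent to obstacle points picked out by the sparse rays (as a consequence of the stopping criterion Eqn.~(\ref{equ_epsilon}) together with the $R$-clearance), an exterior shortcut is geometrically impossible: any such detour would have to loop around an obstacle point that is already on $\partial D_k^i$ or $\partial Q^i$, yielding a strictly longer curve. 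Once this length comparison is secured, uniqueness of the locally shortest path in 2D forces $\alpha^*\in\mathscr{P}_{D_k^i}$, and the argmins coincide.
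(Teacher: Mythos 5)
Your proposal follows essentially the same route as the paper: both replace the sub-arcs of a $Q^i$-path that leave the convex rectangle with arcs that stay inside it, and argue by convexity (projection onto $R$ is non-expansive) that the replacement can only shorten the path, so the $Q^i$-minimizer must already lie in $D_k^i$. The only differences are cosmetic—you phrase it as a contradiction on the minimizer and close with 2D uniqueness, while the paper directly produces a strictly shorter corridor path for every non-corridor competitor—and your side remark that both long sides $\varphi=\pm R$ are tangent to detected obstacle points is slightly off (only the corner $c_k^i$ is held at clearance from $e_{\rm n}$), but the projection argument does not actually depend on that tangency.
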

\begin{proof}
We prove that $\forall \alpha\in \mathscr{P}_{Q^i}$, $\alpha\notin \mathscr{P}_{D_k^i}$, $\exists \tilde{\alpha}\in \mathscr{P}_{D_k^i}$ such that $g(\tilde{\alpha}) < g(\alpha)$. 
Since $c_k^i, p^i\in D_k^i$, $\alpha$ must have two intersections with the boundary of $D_k^i$. 
We replace the part of the path in $\alpha$ that is outside $D_k^i$ by the corresponding straight path segments on the boundary of $D_k^i$, and then $\tilde{\alpha}$ is constructed. 
The straight path segment is always shorter than the curved one in $\alpha$, hence $g(\tilde{\alpha}) < g(\alpha)$. 
\end{proof}
\begin{theorem}\label{thm:optimality_single_subregion}
Denote $\mathscr{P}_{D_k^i}$ as the set of paths that connect $p^i$ and $c_k^i$ in $D_k^i$. 
Denote $\mathscr{P}_{Q^i}$ as the set of paths that connect $p^i$ and $c_k^i$ in $Q^i$. 
Then 
\begin{equation}
(\alpha_k^i =)\mathop{\rm argmin}\limits_{\alpha\in \mathscr{P}_{D_k^i}}g(\alpha) = \mathop{\rm argmin}\limits_{\alpha\in \mathscr{P}_{Q^i}}g(\alpha)
\end{equation}
\end{theorem}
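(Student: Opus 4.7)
The plan is to build on Proposition~\ref{prop:deformation} by relaxing its hypothesis (``no internal obstacle in $Q^i$ overlooked by the sparse rays'') through an appeal to the iterative refinement loop in Algorithm~\ref{alg:raystar}, so that the same geometric deformation argument remains valid in full generality.

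First I would settle the easy direction. Since $D_k^i$ is defined as the collision-free part of a rectangular region containing both $p^i$ and $c_k^i$, and $D_k^i \subseteq Q^i$ by construction, we immediately have $\mathscr{P}_{D_k^i}\subseteq \mathscr{P}_{Q^i}$ and hence $\min_{\alpha\in \mathscr{P}_{Q^i}} g(\alpha) \leq \min_{\alpha\in \mathscr{P}_{D_k^i}} g(\alpha)$. It then remains to produce, for every $\alpha \in \mathscr{P}_{Q^i}\setminus \mathscr{P}_{D_k^i}$, some $\tilde{\alpha}\in \mathscr{P}_{D_k^i}$ with $g(\tilde{\alpha})\leq g(\alpha)$. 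For this I would recycle the deformation used in Proposition~\ref{prop:deformation}: pick the first and last intersection points of $\alpha$ with $\partial D_k^i$, and replace the portion of $\alpha$ lying outside $D_k^i$ by the straight segment of $\partial D_k^i$ joining them. The only obstruction to reusing that argument verbatim is that the replacement segment (lying on one of the two long sides $\varphi=\pm R$ or the far side $\phi=\Phi_{\mathrm{near}}$ of the rectangular corridor) might meet an obstacle originally overlooked by sparse rays. Here I would invoke the algorithmic invariant that, upon termination of the inner loop (lines~\ref{alg:planning_in_corridor}--\ref{alg:iter} of Algorithm~\ref{alg:raystar}), the grid-based A* planner has returned a valid $\alpha_k^i$ inside the finalized $D_k^i$: any obstacle it encountered would have triggered the failure branch, which inserts rays pointing at the newly detected obstacle and either shrinks the corridor or discards the gap entirely. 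Hence the finalized $D_k^i$ has no hidden interior obstacle, and no overlooked obstacle lies on the boundary segments actually used by the deformation (otherwise the refinement would have re-sliced the corridor by a ray pointing at it). With the replacement segment collision-free, the usual triangle-inequality argument gives $g(\tilde{\alpha})<g(\alpha)$, closing the reverse inequality.

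The main obstacle I expect is making rigorous the claim that every obstacle relevant to the deformation has been detected by the time node $i$'s expansion terminates. This requires tying the discrete refinement steps (ray insertion at detected obstacles, bisector subdivision, gap removal via Eqn.~(\ref{equ_insert})--(\ref{equ_epsilon})) to the continuous geometric assertion that the replacement segment on $\partial D_k^i$ cannot meet an undetected obstacle. A short case analysis on where a hypothetical overlooked obstacle could sit --- strictly inside the rectangle, on one of its long sides between $p^i$ and the critical point, or across the far side between $e_{\mathrm{n}k}^i$ and $c_k^i$ --- combined with the termination condition of the refinement loop (A*-in-corridor succeeds and no further ray insertions are triggered) should close the remaining gap. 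Once that technical point is settled, the theorem follows from the two inequalities above, and $\alpha_k^i$ is identified as the locally shortest $p^i$-to-$c_k^i$ path not merely within $D_k^i$ but within the entire sub-region $Q^i$.
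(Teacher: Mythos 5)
Your plan shares its starting point with the paper---both proofs reduce Theorem~\ref{thm:optimality_single_subregion} to Proposition~\ref{prop:deformation}---but the reduction you propose is genuinely different and, as you yourself anticipate, leaves an essential gap. You try to discharge the Proposition's hypothesis (``no internal obstacle in $Q^i$ overlooked by the sparse rays'') by arguing that the refinement loop in Algorithm~\ref{alg:raystar} has, by termination, detected every obstacle relevant to the boundary-shortcut deformation. That claim is not supported by the algorithm and in fact conflicts with the paper's own framing: the failure branch of the inner loop inserts rays only at obstacles encountered while planning \emph{inside} the corridor $D_k^i$, so obstacles sitting in $Q^i\setminus D_k^i$---precisely the ones that could interfere with a shortcut along $\partial D_k^i$---can remain undetected after node~$i$'s expansion terminates. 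The very reason Proposition~\ref{prop:deformation} is stated with an explicit no-overlook hypothesis, and Theorem~\ref{thm:optimality_single_subregion} is then proved as a separate, stronger statement, is that the algorithm does \emph{not} guarantee this hypothesis.

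The paper closes the gap by a purely geometric sandwich argument that needs no algorithmic invariant. It introduces the auxiliary region $\bar{Q}^i = Q^i\cup\{O_j\}$, i.e.\ $Q^i$ with the overlooked obstacles $\{O_j\}$ deliberately re-declared as free space. By construction $\bar{Q}^i$ has no overlooked obstacles, so Proposition~\ref{prop:deformation} applies to it directly and yields $\min_{\alpha\in\mathscr{P}_{\bar{Q}^i}} g(\alpha)=\min_{\alpha\in\mathscr{P}_{D_k^i}} g(\alpha)$ with the minimiser lying in $D_k^i$. Combined with the two monotonicity inequalities that follow from $D_k^i\subset Q^i\subset\bar{Q}^i$, namely $\min_{\mathscr{P}_{D_k^i}} g\geq \min_{\mathscr{P}_{Q^i}} g\geq \min_{\mathscr{P}_{\bar{Q}^i}} g$, this forces all three minima to coincide and identifies $\alpha_k^i$ as the minimiser over $\mathscr{P}_{Q^i}$. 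Your ``easy direction'' is one of these two inequalities; the key idea you are missing is to \emph{enlarge} to $\bar{Q}^i$ rather than to try to show that $Q^i$ itself is already benign. I would drop the case analysis on where a hypothetical overlooked obstacle could sit and substitute the $\bar{Q}^i$ sandwich.
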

\begin{proof}
If there is no missing obstacle in $Q^i$, then it is \textbf{Proposition~\ref{prop:deformation}}. 
So we assume that some missing obstacles $\{O_j\}$ ($j$ is a generic index) were overlooked by sparse rays. Denote 
\begin{equation}
\bar{Q}^i = Q^i\cup \{O_j\}
\end{equation}
(the missing obstacles are purposely overlooked to construct the simply-connected region $\bar{Q}^i$)
and denote $\mathscr{P}_{\bar{Q}^i}$ as the set of paths that connect $p^i$ and $c_k^i$ in $\bar{Q}^i$. 
By \textbf{Proposition~\ref{prop:deformation}}, 
\begin{equation}\label{eqn:22}
\begin{aligned}
\mathop{\rm min}\limits_{\alpha\in \mathscr{P}_{\bar{Q}^i}}g(\alpha) &= \mathop{\rm min}\limits_{\alpha\in \mathscr{P}_{D_k^i}}g(\alpha)\\
\mathop{\rm argmin}\limits_{\alpha\in \mathscr{P}_{\bar{Q}^i}}g(\alpha) &\subset D_k^i
\end{aligned}
\end{equation}
Since $D_k^i\subset Q^i$, 
\begin{equation}\label{eqn:23}
\min\limits_{\alpha\in \mathscr{P}_{D_k^i}}g(\alpha) \geq \min\limits_{\alpha\in \mathscr{P}_{Q^i}}g(\alpha)
\end{equation}
Since $Q^i\subset \bar{Q}^i$, 
\begin{equation}\label{eqn:24}
\min\limits_{\alpha\in \mathscr{P}_{Q^i}}g(\alpha) \geq \min\limits_{\alpha\in \mathscr{P}_{\bar{Q}^i}}g(\alpha)
\end{equation}
Summarising the result from Eqn.~(\ref{eqn:22}), Eqn.~(\ref{eqn:23}), and Eqn.~(\ref{eqn:24}), 
\begin{equation}
\begin{aligned}
&\min\limits_{\alpha\in \mathscr{P}_{D_k^i}}g(\alpha) \geq \min\limits_{\alpha\in \mathscr{P}_{Q^i}}g(\alpha) \geq
\min\limits_{\alpha\in \mathscr{P}_{\bar{Q}^i}}g(\alpha) = 
\min\limits_{\alpha\in \mathscr{P}_{D_k^i}}g(\alpha)\\
\Rightarrow&\min\limits_{\alpha\in \mathscr{P}_{D_k^i}}g(\alpha) = \min\limits_{\alpha\in \mathscr{P}_{Q^i}}g(\alpha) =
\min\limits_{\alpha\in \mathscr{P}_{\bar{Q}^i}}g(\alpha)
\end{aligned}
\end{equation}
Recall $D_k^i\subset Q^i$, so $\mathop{\rm argmin}\limits_{\alpha\in \mathscr{P}_{D_k^i}}g(\alpha)\subset Q^i$, hence we have 
\begin{equation}
(\alpha_k^i =)\mathop{\rm argmin}\limits_{\alpha\in \mathscr{P}_{D_k^i}}g(\alpha) = \mathop{\rm argmin}\limits_{\alpha\in \mathscr{P}_{Q^i}}g(\alpha)
\end{equation}
\end{proof}

As a simple corollary of \textbf{Theorem}~\ref{thm:optimality_single_subregion}, to find the locally shortest paths connecting $p^i$ and $c_k^i$ in $Q^i$, we only need to run A* in the corridor $D_k^i$ instead of in the whole $Q^i$ region. 

Generally, the concatenation of locally shortest paths is no longer locally shortest. 
However, we prove that the local optimality is preserved when concatenating the edges of the proposed topological tree. 
\begin{proposition}\label{prop:optimality_for_two_subregion}
For $\forall s\in \Delta_k^i$, define $\mathscr{P}_{Q^i}$ as the set of paths that connect $p^i$ and $s$ in $Q^i$. Then 
\begin{equation}
c_k^i \in \mathop{\rm argmin}\limits_{\alpha\in \mathscr{P}_{Q^i}} g(\alpha)
\end{equation}
\end{proposition}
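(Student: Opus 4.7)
My plan is to build an optimal candidate explicitly and then bound every competitor from below. Define $\tilde{\alpha}$ as the concatenation of $\alpha_k^i$ (the locally shortest path from $p^i$ to $c_k^i$ in $D_k^i$, which by \textbf{Theorem~\ref{thm:optimality_single_subregion}} is also the minimiser over $\mathscr{P}_{Q^i}$ with endpoint $c_k^i$) with the straight segment $\overline{c_k^i s}$ along the gap sweeper $\Delta_k^i$. Then $\tilde{\alpha}\in \mathscr{P}_{Q^i}$ because $\alpha_k^i\subset D_k^i\subset Q^i$ and $\overline{c_k^i s}\subset \Delta_k^i\subset \overline{Q^i}$ (the gap sweeper is collision-free and forms part of the boundary of $Q^i$), with length $g(\tilde{\alpha}) = g(\alpha_k^i) + \|s - c_k^i\|$. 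Since $c_k^i$ lies on $\tilde{\alpha}$, the proposition reduces to showing $g(\alpha') \ge g(\tilde{\alpha})$ for every competing $\alpha'\in \mathscr{P}_{Q^i}$.

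For this lower bound I would exploit two structural facts about $\partial Q^i$: the gap sweeper $\Delta_k^i$ is a straight segment parallel to $\vec{a}$ lying on $\partial Q^i$, and $c_k^i$ is the vertex at which $\Delta_k^i$ meets the other boundary components bounding the corridor. Let $v$ be the last point at which $\alpha'$ leaves the closed corridor $\overline{D_k^i}$ and decompose $\alpha' = \alpha'_1 \cdot \alpha'_2$ accordingly. The plan is to apply \textbf{Theorem~\ref{thm:optimality_single_subregion}} to the first piece $\alpha'_1$ going from $p^i$ to $v$ inside $Q^i$, and then a taut-string estimate to the second piece $\alpha'_2$ going from $v$ to $s$ along the gap side of $Q^i$, so that their sum cannot undercut the ``geodesic from $p^i$ to $c_k^i$'' followed by ``straight segment from $c_k^i$ to $s$,'' i.e.\ $\tilde{\alpha}$.

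The main obstacle I anticipate is making the taut-string step rigorous in the presence of the far obstacle $e_{{\rm f}k}^i$ and of any obstacles overlooked by sparse raycasting, since the ``beyond-corridor'' part of $Q^i$ is not itself a clean rectangle. To handle this I would mirror the scheme used in the proof of \textbf{Theorem~\ref{thm:optimality_single_subregion}}: first establish the claim assuming no overlooked obstacles, so that $Q^i$ is simply connected and the geodesic from $p^i$ to $s$ is the unique taut string bending only at convex boundary vertices (the relevant vertex being $c_k^i$); then enlarge $Q^i$ to $\bar{Q}^i = Q^i \cup \{O_j\}$ by filling in any overlooked obstacles and use the chain $\min_{\mathscr{P}_{\bar{Q}^i}} g \le \min_{\mathscr{P}_{Q^i}} g \le g(\tilde{\alpha}) = \min_{\mathscr{P}_{\bar{Q}^i}} g$ to squeeze the argmin into the corridor-plus-gap-sweeper skeleton, which by construction passes through $c_k^i$.
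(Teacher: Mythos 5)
Your candidate $\tilde{\alpha}=\alpha_k^i\cdot\overline{c_k^i s}$ is the right one (it is exactly the tree path of $s$), and the instinct to reuse the $\bar{Q}^i$ squeeze from \textbf{Theorem~\ref{thm:optimality_single_subregion}} to absorb overlooked obstacles is sound. But the central lower-bound step has a genuine gap. Decomposing a competitor $\alpha'$ at its last exit $v$ from the corridor does not let you invoke \textbf{Theorem~\ref{thm:optimality_single_subregion}}: that theorem speaks only about paths from $p^i$ to $c_k^i$, not to an arbitrary boundary point $v$, so the first piece $\alpha'_1$ is not covered by it. And the claim that the free-space geodesic from $p^i$ to $s$ ``bends only at convex boundary vertices, the relevant vertex being $c_k^i$'' is unjustified: $c_k^i$ is not an obstacle vertex but a designated corner of a rectangle, so a taut-string argument by itself does not force the geodesic through it. The final squeeze $\min_{\mathscr{P}_{\bar Q^i}} g \le \min_{\mathscr{P}_{Q^i}} g \le g(\tilde\alpha) = \min_{\mathscr{P}_{\bar Q^i}} g$ is only useful once the equality $g(\tilde\alpha)=\min_{\mathscr{P}_{\bar Q^i}} g$ and the passage through $c_k^i$ are independently established, which is precisely what is still missing.

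The paper avoids both difficulties with a single construction you did not identify: the \emph{prolonged corridor}. Set $d=\|s-c_k^i\|$ and let ${D'}_k^i$ be the collision-free part of the rectangle obtained by extending $D_k^i$ along $\vec a$ to $\phi\in[0,\Phi_{\rm near}+d]$, so that $s$ sits at the far corner of ${D'}_k^i$ in exactly the same geometric role that $c_k^i$ played for $D_k^i$. The argument of \textbf{Theorem~\ref{thm:optimality_single_subregion}} then applies verbatim to $p^i$, $s$, ${D'}_k^i$, yielding $\mathop{\rm argmin}_{\alpha\in\mathscr{P}_{{D'}_k^i}} g(\alpha)=\mathop{\rm argmin}_{\alpha\in\mathscr{P}_{Q^i}} g(\alpha)$. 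The decisive closing observation, entirely absent from your sketch, is that the cross-section of ${D'}_k^i$ at $\phi=\Phi_{\rm near}$ contains $c_k^i$ as its \emph{only} collision-free point (the near obstacle $e_{{\rm n}k}^i$ blocks the remainder of that slice). Any path in ${D'}_k^i$ from $p^i$ at $\phi=0$ to $s$ at $\phi=\Phi_{\rm near}+d$ must cross this slice, and hence must pass through $c_k^i$. It is this bottleneck fact, not a taut-string estimate at a convex vertex, that pins the minimiser to $c_k^i$; without it your argument does not close.
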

\begin{proof}
Let $\parallel s - c_k^i\parallel = d$. 
Define ${D'}_k^i$ as the collision-free part in a prolonged corridor: 
\begin{equation}
{D'}_k^i \subset \{p_k^i + \phi\vec{a} + \varphi\vec{b}|\phi\in [0, \Phi_{\rm near}+d], \varphi\in [-R, R]\}
\end{equation}
Then $s$ is the corner point of ${D'}_k^i$. 
Following the same discussion in \textbf{Theorem~\ref{thm:optimality_single_subregion}}, 
\begin{equation}
\mathop{\rm argmin}\limits_{\alpha\in \mathscr{P}_{{D'}_k^i}}g(\alpha) = \mathop{\rm argmin}\limits_{\alpha\in \mathscr{P}_{Q^i}}g(\alpha)
\end{equation}
where $\mathscr{P}_{{D'}_k^i}$ is the set of paths that connect $p^i$ and $s$ within ${D'}_k^i$. 
Note that when $\phi = \Phi_{\rm near}$, $c_k^i$ is the only collision-free point in the section of ${D'}_k^i$, 
\begin{equation}
c_k^i\in \mathop{\rm argmin}\limits_{\alpha\in \mathscr{P}_{{D'}_k^i}}g(\alpha)
\end{equation}
\end{proof}

\begin{theorem}\label{thm:optimality}
Assume $p_{goal}\in Q^j$. 
Any locally shortest path of $p_{goal}$ is a tree path. 
\end{theorem}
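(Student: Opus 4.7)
The plan is to prove the theorem by induction on the depth $d$ of node $j$ in the hierarchical topological tree, using the per-region local-optimality machinery already established (\textbf{Theorem~\ref{thm:optimality_single_subregion}} and \textbf{Proposition~\ref{prop:optimality_for_two_subregion}}) as the atomic building block and stitching these together across the tree structure.

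For the base case ($d=0$, $j$ is the root), $p^j=p_{start}$ and, by item~4 of \textbf{Definition~\ref{def:tree_path}}, the tree path of $p_{goal}$ is exactly the locally shortest path from $p^j$ to $p_{goal}$ inside $Q^j$. I would generalise the shortcut argument in the proof of \textbf{Theorem~\ref{thm:optimality_single_subregion}} from the corner point $c_k^i$ to an arbitrary interior target $p_{goal}\in Q^j$: any excursion of a candidate path outside $Q^j$ can be retracted onto the obstacle-plus-gap-sweeper boundary of $Q^j$ with a strictly decreasing length, so the locally shortest path in the ambient environment coincides with the locally shortest path in $Q^j$.

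For the inductive step, assume the theorem for all depths $<d$, and let $\gamma^*$ be a locally shortest path from $p_{start}$ to $p_{goal}\in Q^j$. The key intermediate claim is that $\gamma^*$ passes through the source point $p^j$. Granting this, I split $\gamma^*$ at $p^j$: the prefix from $p_{start}$ to $p^j$ is locally shortest, and since $p^j=c_k^{j'}$ is a critical point on the boundary of the parent's sub-region $Q^{j'}$, the inductive hypothesis identifies the prefix with the tree path to $p^j$; the suffix from $p^j$ to $p_{goal}$ is, by the same shortcut argument as in the base case, the locally shortest path inside $Q^j$. Their concatenation is therefore a tree path by \textbf{Definition~\ref{def:tree_path}}. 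To establish the intermediate claim, I would exploit the structure of $\partial Q^j$: since $p_{start}\notin Q^j$ and $p_{goal}\in Q^j$, $\gamma^*$ must cross $\partial Q^j$, and because $\partial Q^j$ consists only of obstacle pieces (uncrossable by collision-freeness) and gap sweepers, every crossing happens on a gap sweeper. Let $s$ be the last such crossing before $p_{goal}$; then $s$ lies on some $\Delta_k^{j'}$ with $p^j=c_k^{j'}$. By \textbf{Proposition~\ref{prop:optimality_for_two_subregion}}, the shortest path in $Q^{j'}$ from $p^{j'}$ to $s$ passes through $c_k^{j'}=p^j$; if $\gamma^*$ did not already pass through $p^j$, rerouting its $Q^{j'}$-portion through $c_k^{j'}$ yields a strictly shorter competitor in the same homotopy class, contradicting local optimality of $\gamma^*$.

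The main obstacle is verifying that this rerouting preserves the homotopy class of $\gamma^*$, since $Q^{j'}$ may in principle contain internal obstacles overlooked by sparse rays, and so two paths with identical endpoints in $Q^{j'}$ need not be homotopic in $M_{\rm free}$. I plan to close this gap by appealing to the corridor geometry behind $\Delta_k^{j'}$: the obstacles $e_{\mathrm{n}k}^{j'}$ and $e_{\mathrm{f}k}^{j'}$ sit flush against the two sides of the corridor, so any collision-free path crossing $\Delta_k^{j'}$ at $s$ must hug exactly one of them, and this binary choice fixes the homotopy class on the $Q^{j'}$ side. Because $c_k^{j'}$ is placed precisely at clearance $R$ from $e_{\mathrm{n}k}^{j'}$, passing through $c_k^{j'}$ corresponds to the very hugging choice realised by $\gamma^*$ in its own class; any alternative that bypasses the gap without touching $c_k^{j'}$ would necessarily belong to a different homotopy class. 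Hence the rerouting stays in the correct class, the strict-shortening contradiction is genuine, and the induction closes.
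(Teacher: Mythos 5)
Your proposal follows essentially the same route as the paper's proof: induct on tree depth, find where the locally shortest path crosses the separating gap sweeper, apply \textbf{Proposition~\ref{prop:optimality_for_two_subregion}} to force the path through the critical point $c_k^{j'}=p^j$, and close the prefix with the inductive hypothesis. One small inaccuracy worth flagging is that $\Delta_k^{j'}$ is part of $\partial Q^{j'}$ rather than $\partial Q^j$ in the paper's construction --- the paper instead asserts directly that $\gamma$ must intersect $\Delta_k^i$ --- and your homotopy-preservation concern simply makes explicit a point the paper glosses over when it identifies $\gamma_1$ with $\mathop{\rm argmin}_{\alpha\in\mathscr{P}_{Q^i}}g(\alpha)$.
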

\begin{proof}
The proof is inductive. 
If $j = 1$, by the definition of tree path, the claim is correct. 

Assume $j = 2$ and $p^2 = c_k^1$. 
Let $\gamma$ be the locally shortest path from $p_{start}(=p^1)$ to $p_{goal}$. 
Since $\gamma$ must intersect with $\Delta_k^1$, the intersecting point is denoted as $s$ which separates $\gamma$ into $\gamma_1$ (from $p_{start}$ to $s$) and $\gamma_2$ (from $s$ to $p_{goal}$). 
Since $\gamma$ is a locally shortest path, both $\gamma_1$ and $\gamma_2$ are locally shortest paths. 
And by \textbf{Proposition~\ref{prop:optimality_for_two_subregion}},
\begin{equation}
c_k^1\in \gamma_1 = \mathop{\rm argmin}\limits_{\alpha\in \mathscr{P}_{Q^1}} g(\alpha)
\end{equation}
where $\mathscr{P}_{Q^1}$ is the set of paths that connect $p^1$ and $s$ in $Q^1$. 
hence $\alpha_k^1\subset \gamma_1\subset \gamma$, i.e., the claim is correct. 

When $j > 2$, let node $j$ be the $k$-th child of node $i$, i.e., $p^j = c_k^i$, and $p_{goal}\in Q^j$. 
Let $\gamma$ be the locally shortest path from $p_{start}(=p^1)$ to $p_{goal}$. 
Since $\gamma$ must intersect with $\Delta_k^i$, the intersected point is denoted as $s$ which separates $\gamma$ into $\gamma_1$ (from $p_{start}$ to $s$) and $\gamma_2$ (from $s$ to $p_{goal}$). 
By induction, assume the statement is true for the parent of node $i$, then 
\begin{equation}
s\in Q^i \Rightarrow p^i\in \gamma_1 
\end{equation}
Then, applying the above discussion for another time, 
\begin{equation}
p_{goal}\in Q^j\Rightarrow p^j \in \gamma
\end{equation}
Hence $\gamma$ is the concatenation of the tree path of $p^j$ and the final path segment from $p^j$ to $p_{goal}$ found in $Q^j$. 
\end{proof}

Finally, its opposite is also correct: 
\begin{theorem}
Assume $p_{goal}\in Q^j$. 
Any tree path of $p_{goal}$ is a locally shortest path. 
\end{theorem}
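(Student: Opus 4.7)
The plan is to derive this theorem as a near-converse of \textbf{Theorem~\ref{thm:optimality}}, exploiting the fact that the locally shortest path in each 2D homotopy class is unique (stated in the definition of locally shortest path). Given a tree path $\gamma^{tree}$ of $p_{goal}$, I would consider its homotopy class $[\gamma^{tree}]$ and let $\gamma^*$ denote the unique locally shortest path in this class. By \textbf{Theorem~\ref{thm:optimality}}, $\gamma^*$ is itself a tree path of $p_{goal}$, say through some node $j'$ with $p_{goal}\in Q^{j'}$. The goal thus reduces to showing that $\gamma^* = \gamma^{tree}$, which will follow once we establish that any two homotopic tree paths ending at the same point must coincide.

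For the uniqueness claim, my plan is an induction on the depth of the deepest common ancestor of the two tree branches. Suppose $\gamma^{tree}_1$ (through node $j_1$) and $\gamma^{tree}_2$ (through node $j_2$) are homotopic and both end at $p_{goal}$. If $j_1=j_2$, then both final segments are the locally shortest path from $p^{j_1}$ to $p_{goal}$ in $Q^{j_1}$; by 2D uniqueness applied inside $Q^{j_1}$ they agree, hence so do the full tree paths. Otherwise trace back to the last common ancestor node $i$ and the two distinct gap sweepers $\Delta_{k_1}^i$ and $\Delta_{k_2}^i$ that the two tree paths traverse when they branch. The key geometric observation, readable from the construction in Section~\ref{section:sparse_raycasting} and Fig.~\ref{fig:insert_new_rays}, is that $\Delta_{k_1}^i$ and $\Delta_{k_2}^i$ are separated by obstacle material detected by the sparse raycasting of node $i$, so passing through them corresponds to bypassing that obstacle on opposite sides. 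A standard crossing/winding argument in 2D then contradicts $\gamma^{tree}_1 \simeq \gamma^{tree}_2$.

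The main obstacle I expect is formalising the last step: rigorously justifying that entering the children of $Q^i$ through two different gap sweepers forces non-homotopy of the completed paths. Intuitively, consecutive sparse rays whose endpoints $e(\theta_j)$ and $e(\theta_{j+1})$ violate Eqn.~(\ref{equ_insert}) lie on a common C-space obstacle, whereas distinct gaps around node $i$ are separated by physical obstacles that partition the critical points into topologically different ``directions''. Promoting this picture into a formal non-homotopy statement requires exhibiting, for each pair of sibling gap sweepers, a closed curve encircling the separating obstacle whose intersection parity with the two competing tree paths differs. Once this non-homotopy of sibling branches is settled, combining it with \textbf{Theorem~\ref{thm:optimality}} and 2D uniqueness finishes the theorem cleanly.
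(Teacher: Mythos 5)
Your approach matches the paper's: apply \textbf{Theorem~\ref{thm:optimality}} to the locally shortest path $\gamma^*$ homotopic to the given tree path, conclude that $\gamma^*$ is itself a tree path, and then argue that the two must coincide. The paper's own proof is a two-line contradiction argument ending with the bare assertion ``Hence the two paths are the same,'' and it leaves implicit precisely the step you flag as the main obstacle --- that two homotopic tree paths ending at $p_{goal}$ must coincide, which reduces to the non-homotopy of sibling branches of the tree. Your induction on the deepest common ancestor, combined with a crossing/winding argument across the obstacle material separating two sibling gap sweepers, is the right skeleton for that missing piece; the paper never supplies it, relying on the informal claim in the abstract that nodes ``expand in non-homotopic directions.'' So although your write-up is deliberately incomplete at that step, you have correctly isolated a gap that the paper's own proof shares, and your plan is the natural way to close it. One caution: you invoke 2D uniqueness of the locally shortest path ``inside $Q^{j_1}$'' for the base case $j_1 = j_2$, but $Q^j$ is not guaranteed to be simply connected (the proof of \textbf{Theorem~\ref{thm:optimality_single_subregion}} explicitly allows obstacles overlooked inside $Q^i$), so even the base case needs the same homotopy bookkeeping as the branching case rather than a blanket appeal to uniqueness.
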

\begin{proof}
Proof by contradiction. 
If a tree path is not the locally shortest path, then we denote the locally shortest path (homotopic to the tree path) as $\gamma^*$. 
By \textbf{Theorem~\ref{thm:optimality}}, $\gamma^*$ is a tree path. 
Hence the two paths are the same. 
\end{proof}

\section{Construction of Relative Optimality}\label{section_related_optimality}

\begin{figure*}[t]
\centering
\subfigure[$g_1+g_2 > g_3 + g_4 + g_5 + \parallel c_l^j - q\parallel_2 + \parallel q - c_k^i\parallel_2$]{
\includegraphics[width=0.48\textwidth]{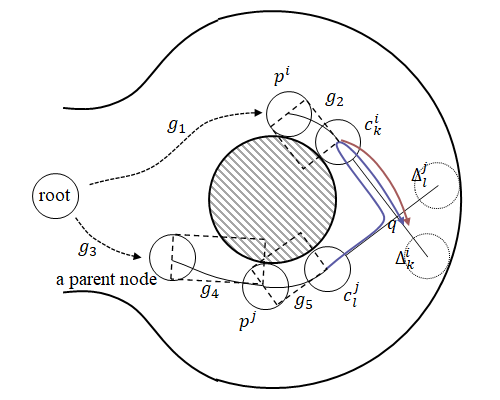}
}
\subfigure[$g_1 + g_2 + g_6 > g_3 + g_4 + \parallel p^j - q'\parallel_2 + \parallel q' - c_{k'}^{i'}\parallel_2$]{
\includegraphics[width=0.48\textwidth]{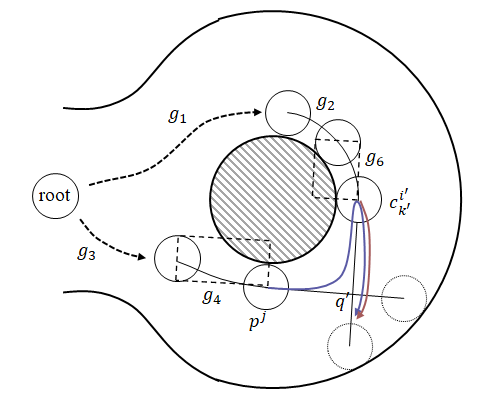}
}
\caption{(a) Illustration of the comparison between $[c_k^i]$ and $[c_l^j]$ when $\Delta_k^i, \Delta_l^j$ intersect at $q$. 
(b) As $c_k^i$ expands further, the movement cost increases from $g_1 + g_2$ to $g_1 + g_2 + g_6$, while the cost to be compared reduces. 
Sooner or later, the relative optimality can be observed. 
}\label{fig:figureloseoptimality}
\end{figure*}

In this section, the relative optimality between distinguished homotopies is constructed. 
This is achieved by constructing the $\Omega$ region of a certain distinguished homotopy, and proving that it satisfies the constraint in \textbf{Theorem~\ref{thm:goal_location_relaxation}}. 
The set of criteria proposed in this section can be seen as a baseline of the distance-based topology simplification strategy built upon the proposed hierarchical topological tree, and more criteria might be created in the future. 
For notation, the $k$-th child of node $i$ and the $l$-th child of node $j$ are again referred: Let $c_k^i$ and $c_l^j$ be the critical points to be discussed. 
The source point $p^i$, $p^j$, the path in corridors $\alpha_k^i$, $\alpha_l^j$, and the gap sweepers $\Delta_k^i$, $\Delta_l^j$ have been constructed. 
The distinguished homotopy represented by the tree path of $p^i$, $p^j$, $c_k^i$, and $c_l^j$ are denoted as $[p^i]$, $[p^j]$, $[c_k^i]$, and $[c_l^j]$, respectively. 
We formally introduce the notation \textit{leaf node} for easy discussion. 
\begin{remark}
(Leaf node) Leaf nodes are unexpanded critical points. 
\end{remark}

\subsection{Observation of Relative Optimality}\label{subsection_omega}
As an introduction, we first present a sufficient condition of relative optimality provided by a newly found resultant path.
\begin{theorem}\label{thm:to_result_path}
(Comparison to Resulting Paths)
Denote $c_l^j$ as the last critical point expanded before a locally shortest path is found with length $L$ (i.e., the newly found resultant path passes $c_l^j$). 
For any critical point $c_k^i$ that is still in the priority queue with cost $g(c_k^i) + h(c_k^i)$, if $g(c_k^i) + h(c_k^i) > L$, then we obtain $[c_l^j]\prec [c_k^i]$. 
\end{theorem}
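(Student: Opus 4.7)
The plan is to verify the inequality in Definition~\ref{def:relative_optimality_of_distinguished_homotopy} directly: exhibit a path in $\varphi^{-1}([c_l^j])$ reaching $p_{goal}$ whose length is $L$, and show every path in $\varphi^{-1}([c_k^i])$ reaching $p_{goal}$ has length strictly greater than $L$. The comparison then yields $[c_l^j]\prec[c_k^i]$. Here I would not invoke \textbf{Theorem~\ref{thm:goal_location_relaxation}} explicitly, because the conclusion is already tight enough to be read off the A*-style cost function once the tree edges are proven locally optimal.

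First I would handle the upper bound on the $[c_l^j]$ side. By hypothesis, a locally shortest path from $p_{start}$ to $p_{goal}$ of length $L$ passes through $c_l^j$; this path starts at $p_{start}$ and visits $c_l^j$, so it lies in $\varphi^{-1}([c_l^j])$, giving
\begin{equation}
\min_{\gamma\in\varphi^{-1}([c_l^j])}g(\gamma)\ \leq\ L.
\end{equation}
Next, for the lower bound on the $[c_k^i]$ side, I would take any $\gamma\in\varphi^{-1}([c_k^i])$ that reaches $p_{goal}$ and split it at the visit to $c_k^i$ into a prefix $\gamma_{\rm pre}$ from $p_{start}$ to $c_k^i$ and a suffix $\gamma_{\rm suf}$ from $c_k^i$ to $p_{goal}$. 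By the definition of the distinguished homotopy class, $\gamma_{\rm pre}$ is homotopic (with endpoints fixed at $p_{start}$ and $c_k^i$) to the tree path of $c_k^i$, which is locally shortest by \textbf{Theorem~\ref{thm:optimality}} and has length $g(c_k^i)$. The suffix $\gamma_{\rm suf}$ is a continuous path and therefore has length at least $\|c_k^i-p_{goal}\|_2=h(c_k^i)$. Adding the two bounds gives $g(\gamma)\geq g(c_k^i)+h(c_k^i)>L$ by the hypothesis on the priority-queue cost.

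Combining the two bounds,
\begin{equation}
\min_{\gamma\in\varphi^{-1}([c_l^j])}g(\gamma)\ \leq\ L\ <\ g(c_k^i)+h(c_k^i)\ \leq\ \min_{\gamma\in\varphi^{-1}([c_k^i])}g(\gamma),
\end{equation}
which is precisely $[c_l^j]\prec[c_k^i]$ by Eqn.~(\ref{eqn:relative_optimality}). The only delicate point, and the one I expect to spend the most care on, is the first inequality of the lower bound: I must justify that the prefix $\gamma_{\rm pre}$ really inherits the homotopy class encoded by the tree path at $c_k^i$, so that \textbf{Theorem~\ref{thm:optimality}} applies and gives $g(\gamma_{\rm pre})\geq g(c_k^i)$. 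This is where the ``distinguished homotopy'' machinery is doing the real work: the labelling of $c_k^i$ by its tree path is exactly what fixes the homotopy class of any prefix ending at $c_k^i$, which is what makes the local-optimality bound available midway through planning.
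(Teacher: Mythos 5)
Your proposal is correct and follows essentially the same route as the paper's proof: decompose any path in $\varphi^{-1}([c_k^i])$ at $c_k^i$, use the local optimality of the tree path to bound the prefix by $g(c_k^i)$, use the Euclidean distance to bound the suffix by $h(c_k^i)$, and compare against $L$. You spell out the homotopy-matching of the prefix more carefully (and correctly use $\geq$ where the paper's chain writes a superfluous strict $>$ for the heuristic bound), but the underlying argument is the one the paper gives.
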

\begin{proof}
The length of the result path visiting $c_k^i$ must be longer than the estimated cost $g(c_k^i) + h(c_k^i)$, so
\begin{equation}
\begin{aligned}
\min\limits_{\varphi^{-1}([c_k^i])} g(p_{goal}) &= g(c_k^i) + g(p_{goal})|_{{\rm visiting}\ c_k^i}\\
&> g(c_k^i) + h(c_k^i)\\
&> L = \min\limits_{\varphi^{-1}([c_l^j])} g(p_{goal})
\end{aligned}
\end{equation}
Hence $[c_l^j]\prec [c_k^i]$. 
\end{proof}

In the sequel, we assume $p_{goal}$ is still out of reach by sub-regions of node $i$, $j$ and their predecessors. 
We define the uncovered region of the pathfinding along a distinguished homotopy as $U$. 
The region $\Omega$ will be constructed based on $U$ under some conditions. 

\begin{definition}\label{def:U}
(Uncovered region of a distinguished homotopy) 
Given a distinguished homotopy $[c_k^i]$, the predecessors (i.e., its parent, and the parent of the parent, $\cdots$) of node $i$ are well-defined whose indices are recorded in $P_i$. 
The region $U_{[c_k^i]}$ is all the collision-free area that is uncovered by the sub-region of node $i$ and its predecessors. 
Formally, it is 
\begin{equation}
U_{[c_k^i]} = M_{\rm free}\backslash \bigcup\limits_{m\in P_i\cup\{i\}} Q^m
\end{equation}
\end{definition}

Then, the most important symbol of this paper, the $\Omega_{[c_k^i]}$ region for the distinguished homotopy $[c_k^i]$, is constructed as follows: 

\begin{theorem}\label{thm:omega}
(Construction of $\Omega$ region) 
For a distinguished homotopy $[c_k^i]$, if we can find another distinguished homotopy $[c_l^j]$ satisfying that, 
\begin{equation}\label{equ:suff_condition}
\min\limits_{\varphi^{-1}([c_l^j])}g(m) < \min\limits_{\varphi^{-1}([c_k^i])}g(m),\ \forall m\in \Delta_k^i
\end{equation}
then $U_{[c_k^i]}$ is our construction of $\Omega_{[c_k^i]}$, and we obtain $[c_l^j]\prec[c_k^i]$. 
\end{theorem}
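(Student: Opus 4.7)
The plan is to instantiate \textbf{Theorem~\ref{thm:goal_location_relaxation}} with $\Omega$ taken to be the uncovered region $U_{[c_k^i]}$ from \textbf{Definition~\ref{def:U}}, and to read off $[c_l^j]\prec[c_k^i]$ directly from its conclusion. The work therefore reduces to verifying the two hypotheses of that theorem: (i) the endpoint conditions $p_{start}\notin U_{[c_k^i]}$ and $p_{goal}\in U_{[c_k^i]}$, and (ii) the strict length comparison Eqn.~(\ref{equ:sufficient_condition}) at every collision-free point on the boundary of $U_{[c_k^i]}$.

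For (i), I would first note that the root of the topological tree is node $1$ with $p^1=p_{start}$, and $1\in P_i\cup\{i\}$, so $p_{start}\in Q^1\subseteq \bigcup_{m\in P_i\cup\{i\}}Q^m$, which gives $p_{start}\notin U_{[c_k^i]}$. The containment $p_{goal}\in U_{[c_k^i]}$ is exactly the standing assumption stated immediately before \textbf{Definition~\ref{def:U}}, namely that the goal has not yet been absorbed by the sub-region of $i$ or of any of its predecessors. For (ii), I plan to exploit \textbf{Theorem~\ref{thm:optimality}} to localise the relevant boundary point to $\Delta_k^i$: in the proof of \textbf{Theorem~\ref{thm:goal_location_relaxation}}, the only $q$ actually used is the intersection of the locally shortest representative $\gamma_2^*$ of $[c_k^i]$ with $\partial\Omega$; by \textbf{Theorem~\ref{thm:optimality}} this $\gamma_2^*$ is a tree path through $c_k^i$, so it stays inside $\bigcup_{m\in P_i\cup\{i\}}Q^m$ until it reaches $c_k^i$ and only then enters $U_{[c_k^i]}$. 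Hence $q\in\Delta_k^i$, and substituting $m=q$ into the hypothesis Eqn.~(\ref{equ:suff_condition}) delivers Eqn.~(\ref{equ:sufficient_condition}) at this $q$, from which \textbf{Theorem~\ref{thm:goal_location_relaxation}} yields $[c_l^j]\prec[c_k^i]$.

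The main obstacle I anticipate is the universal quantifier in the statement of \textbf{Theorem~\ref{thm:goal_location_relaxation}}. Beyond $\Delta_k^i$, the boundary of $U_{[c_k^i]}$ also contains collision-free gap sweepers of sibling gaps of $i$ and of predecessor nodes of $i$, and the hypothesis Eqn.~(\ref{equ:suff_condition}) does not directly control path lengths at such $q$. For those points I plan to argue that any representative of $[c_k^i]$ with endpoint relocated to $q$ must still route through $c_k^i$, by combining \textbf{Theorem~\ref{thm:optimality}} with \textbf{Proposition~\ref{prop:optimality_for_two_subregion}}, which yields the lower bound $g(c_k^i)+\|c_k^i-q\|_2$; concatenating the shorter $[c_l^j]$-path to $c_k^i$ granted by the hypothesis at $m=c_k^i$ with any free-space continuation from $c_k^i$ to $q$ should then produce an element of $\varphi_q^{-1}([c_l^j])$ that beats this bound. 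Confirming that the concatenated path genuinely lies in $\varphi_q^{-1}([c_l^j])$, rather than in some other distinguished homotopy at the relocated endpoint, is the step I expect to require the most care.
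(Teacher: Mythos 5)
Your main argument (paragraphs one and two) is essentially the paper's proof: take $\Omega = U_{[c_k^i]}$, check the endpoint hypotheses of \textbf{Theorem~\ref{thm:goal_location_relaxation}}, and reduce the boundary verification to $\Delta_k^i$ by observing that the only $q$ actually used in that theorem's proof is the exit point of the locally shortest $[c_k^i]$-representative, which by tree-path optimality lies on $\Delta_k^i$. The paper phrases the reduction slightly differently --- a representative of $[c_k^i]$ entering $U_{[c_k^i]}$ through a sibling sweeper such as $\Delta_{k-1}^i$ would in fact belong to the distinguished homotopy $[c_{k-1}^i]$, hence is irrelevant --- but both observations exploit the same fact that $\Delta_k^i$ is the unique gate through which $[c_k^i]$-paths enter the uncovered region, and your version is arguably the more careful reading of how \textbf{Theorem~\ref{thm:goal_location_relaxation}} is actually used.

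The alternative branch you sketch in your final paragraph, aimed at verifying the hypothesis literally at every sibling-sweeper boundary point, is neither needed (the paper does not attempt it) nor quite closeable as written. You propose bounding $\min_{\varphi_q^{-1}([c_k^i])}g$ from below by $g(c_k^i)+\|c_k^i-q\|_2$ and then beating that bound by concatenating the shorter $[c_l^j]$-path to $c_k^i$ with a free-space continuation from $c_k^i$ to $q$; for the concatenation to fall below your bound, the continuation would have to have length at most $\|c_k^i-q\|_2$, i.e., the straight segment from $c_k^i$ to $q$ would need to be collision-free, which generally fails across the obstacle corners that separate sibling corridors. Since your first route already carries the proof, this branch can simply be dropped.
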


\begin{proof}
Note that the boundary of $U_{[c_k^i]}$ consists of $\Delta_k^i$, its sibling gap sweepers ($\Delta_1^i, \cdots, \Delta_{k-1}^i, \Delta_{k+1}^i, \cdots$), and the sibling gap sweepers of its predecessors. 
And since we assumed that $p_{goal}$ has not been covered by $Q^m, m\in P_i\cup \{i\}$ (or else the locally shortest path has been fully constructed), $p_{goal}\in U_{[c_k^i]}$ is guaranteed. 
When the path enters $U_{[c_k^i]}$ by intersecting with the gap sweepers except $\Delta_k^i$, say $\Delta_{k-1}^i$, the path actually belongs to another distinguished homotopy $[c_{k-1}^i]$ but not $[c_k^i]$. 
Hence the points that remain to be verified in \textbf{Theorem~\ref{thm:goal_location_relaxation}} is reduced to $\Delta_k^i$, i.e., 
\begin{equation}
\min\limits_{\varphi^{-1}([c_l^j])}g(m) < \min\limits_{\varphi^{-1}([c_k^i])}g(m),\ \forall m\in \Delta_k^i
\end{equation}
\end{proof}

The following \textbf{Lemma}~\ref{lem:sweeper_to_sweeper}, \textbf{Lemma}~\ref{lem:sweeper_to_path}, \textbf{Corollary}~\ref{coro:sweeper_to_path}, and \textbf{Lemma}~\ref{lem:path_to_path} are sufficient propositions when Eqn.~(\ref{equ:suff_condition}) is guaranteed.

\begin{lemma}(Comparison between Gap Sweepers)\label{lem:sweeper_to_sweeper}
Let $\Delta_k^i, \Delta_l^j$ intersect and the intersection be $q$. 
If
\begin{equation}\label{equ:sweeper_to_sweeper}
g(c_k^i) > g(c_l^j)+\parallel c_l^j-q\parallel_2 + \parallel q - c_k^i\parallel_2
\end{equation}
then 
\begin{equation}
\min\limits_{\varphi^{-1}([c_l^j])} g(m) < \min\limits_{\varphi^{-1}([c_k^i])} g(m), \forall m\in \Delta_k^i
\end{equation}
\end{lemma}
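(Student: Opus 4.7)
The plan is to sandwich the two minima, using \textbf{Proposition~\ref{prop:optimality_for_two_subregion}} to compute the $[c_k^i]$ side exactly, exhibiting an explicit candidate path on the $[c_l^j]$ side, and then reducing to a triangle inequality combined with the hypothesis.

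First I would pin down the lower bound. For any $m\in \Delta_k^i$, a path in $\varphi^{-1}([c_k^i])$ reaching $m$ must be homotopic to the tree path from $p_{start}$ to $c_k^i$ followed by a path from $c_k^i$ to $m$ staying in $Q^i$. By \textbf{Proposition~\ref{prop:optimality_for_two_subregion}}, the critical point $c_k^i$ lies on the shortest such path in $Q^i$, and the straight segment of $\Delta_k^i$ from $c_k^i$ to $m$ completes it. Thus
\begin{equation}
\min_{\varphi^{-1}([c_k^i])} g(m) \;=\; g(c_k^i) + \|c_k^i - m\|_2.
\end{equation}

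Second I would construct an explicit candidate on the $[c_l^j]$ side: concatenate the tree path from $p_{start}$ to $c_l^j$, the sub-segment of $\Delta_l^j$ from $c_l^j$ to the intersection point $q$, and the sub-segment of $\Delta_k^i$ from $q$ to $m$. Both $\Delta_l^j$ and $\Delta_k^i$ are collision-free line segments by construction in Section~\ref{section_node}, so any sub-segment of either is collision-free; and the candidate crosses $\Delta_l^j$ after visiting $c_l^j$, placing it in $\varphi^{-1}([c_l^j])$. Its length yields
\begin{equation}
\min_{\varphi^{-1}([c_l^j])} g(m) \;\leq\; g(c_l^j) + \|c_l^j - q\|_2 + \|q - m\|_2.
\end{equation}

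Third I would close the argument by chaining the hypothesis $g(c_k^i) > g(c_l^j) + \|c_l^j - q\|_2 + \|q - c_k^i\|_2$ with the triangle inequality $\|q - m\|_2 \leq \|q - c_k^i\|_2 + \|c_k^i - m\|_2$, giving
\begin{equation}
g(c_l^j) + \|c_l^j - q\|_2 + \|q - m\|_2 \;<\; g(c_k^i) - \|q - c_k^i\|_2 + \|q - m\|_2 \;\leq\; g(c_k^i) + \|c_k^i - m\|_2,
\end{equation}
which, combined with the bounds above, is precisely the desired strict inequality for every $m\in \Delta_k^i$.

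The main obstacle I expect is not the arithmetic but the bookkeeping of distinguished homotopy: one must justify that the exhibited concatenation indeed represents $[c_l^j]$ when the endpoint is the point $m$ on a \emph{different} gap sweeper $\Delta_k^i$, rather than the original $c_l^j$. This is resolved by appealing to the viewpoint used in the proof of \textbf{Theorem~\ref{thm:omega}}: a path from $p_{start}$ to a point in $U_{[c_l^j]}$ belongs to $[c_l^j]$ exactly when it enters $U_{[c_l^j]}$ by crossing $\Delta_l^j$, and the segment $c_l^j\!\to\!q$ does precisely that. A minor subtlety is the edge case $m=q$, where the triangle inequality becomes an equality but the strict hypothesis still forces the strict conclusion.
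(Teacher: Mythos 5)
Your proposal is correct and follows essentially the same route as the paper's proof: both compute the $[c_k^i]$ side exactly as $g(c_k^i)+\|c_k^i-m\|_2$ via local optimality of the tree path, both exhibit the concatenated candidate (tree path to $c_l^j$, then $\Delta_l^j$ to $q$, then $\Delta_k^i$ to $m$) as an upper bound on the $[c_l^j]$ side, and both close with the triangle inequality $\|q-m\|_2 \leq \|q-c_k^i\|_2 + \|c_k^i-m\|_2$ combined with the hypothesis, merely applying those two facts in opposite order in the chain of inequalities. Your extra remarks on the homotopy bookkeeping (via Theorem~\ref{thm:omega}) and the $m=q$ edge case are sound and make explicit what the paper leaves implicit.
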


\begin{proof}
See illustration in Fig.~\ref{fig:figureloseoptimality}(a). 
For a point $m$ on $\Delta_k^i$, by the local optimality of the tree path,
\begin{equation}
\min\limits_{\varphi^{-1}([c_k^i])}g(m) = g(c_k^i) + \parallel c_k^i-m\parallel_2 
\end{equation}
If Eqn.~(\ref{equ:sweeper_to_sweeper}) is satisfied, then we can find an alternative path towards $m$: The robot first visits $c_l^j$ following the tree path of $c_l^j$, then visits $q$ along $\Delta_l^j$, and finally reaches $m$ along $\Delta_k^i$. 
It is actually a path belonging to the distinguished homotopy $[c_l^j]$, and we have 
\begin{equation}
\begin{aligned}
g(m)|_{{\rm visiting } c_l^j} &= g(c_l^j)+\parallel c_l^j-q\parallel_2 + \parallel q-m\parallel_2 \\
&< g(c_l^j)+\parallel c_l^j-q\parallel_2\\
&~~~~~~~~+ \parallel q-c_k^i\parallel_2 + \parallel c_k^i-m\parallel_2\\
&< g(c_k^i)+\parallel c_k^i-m\parallel_2 \\
&= \min\limits_{\varphi^{-1}([c_k^i])} g(m)
\end{aligned} 
\end{equation}
Since $m$ is arbitrarily chosen, and the length of the alternative path is an upper bound of the length of the locally shortest path in $[c_l^j]$, we have
\begin{equation}\label{equ:gap_gap}
\min\limits_{\varphi^{-1}([c_l^j])} g(m) < \min\limits_{\varphi^{-1}([c_k^i])} g(m), \forall m\in \Delta_k^i
\end{equation}
\end{proof}

It can be noticed that even if the inequality Eqn.~(\ref{equ:sweeper_to_sweeper}) is currently not satisfied, see Fig.~\ref{fig:figureloseoptimality}(b),  with the leaf node $c_k^i$ being expanded, the movement cost along $[c_k^i]$ is increasing, whilst the cost to be compared is decreasing. 
Hence the comparison between path homotopies will sooner or later be observed with the tree growing. 

\begin{figure}[t]
\centering
\includegraphics[width = 0.38\textwidth]{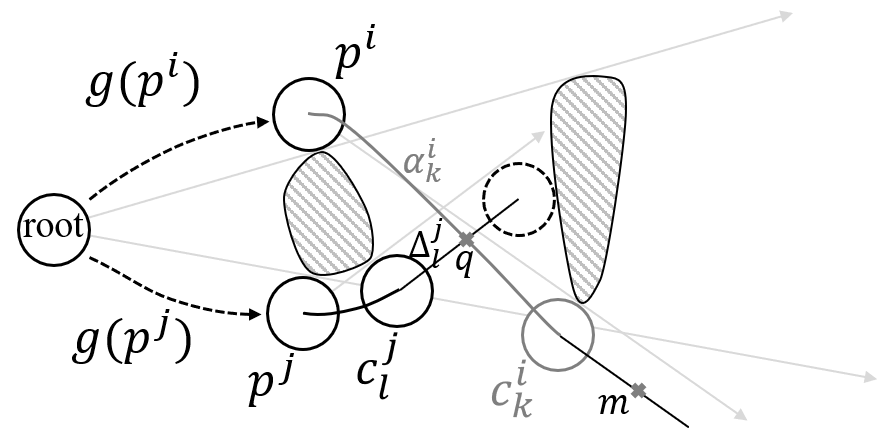}
\caption{Illustration of the intersection between the edge $\alpha_k^i$ and the gap sweeper $\Delta_l^j$. 
}\label{fig:path_sweeper_intersect}
\end{figure}

If a gap sweeper intersects with an edge, then we may also have a lemma similar to \textbf{Lemma}~\ref{lem:sweeper_to_sweeper}. 
\begin{lemma}\label{lem:sweeper_to_path}
(Comparison between Gap Sweeper and Edge)
Let the gap sweeper $\Delta_l^j$ and the edge $\alpha_k^i$ intersect at $q$. 
Separating $\alpha_k^i$ at $q$, the length of the truncated parts from $p^i$ to $q$ and from $q$ to $c_k^i$ are denoted as $g(q)|_{{\rm from}\ p^i}$ and $g(c_k^i)|_{{\rm from}\ q}$, respectively. 
If 
\begin{equation}\label{equ:path_to_sweeper}
g(p^i) + g(q)|_{{\rm from}\ p^i} > g(c_l^j)+ \parallel c_l^j - q\parallel_2 
\end{equation}
then 
\begin{equation}
\min\limits_{\varphi^{-1}([c_l^j])} g(m) < \min\limits_{\varphi^{-1}([c_k^i])} g(m), \forall m\in \Delta_k^i
\end{equation}
\end{lemma}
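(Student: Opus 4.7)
The plan is to mimic the argument used for \textbf{Lemma~\ref{lem:sweeper_to_sweeper}}, but with the role of $\Delta_k^i$ replaced by the edge $\alpha_k^i$ on one side of the comparison. For an arbitrary $m\in\Delta_k^i$, I would first invoke the local-optimality results of Section~\ref{subsection_optimality} (namely \textbf{Theorem~\ref{thm:optimality_single_subregion}} together with \textbf{Proposition~\ref{prop:optimality_for_two_subregion}}) to write
\begin{equation}
\min_{\gamma\in\varphi^{-1}([c_k^i])}g(\gamma) = g(c_k^i)+\|c_k^i-m\|_2,
\end{equation}
and then split $g(c_k^i)$ along the chosen intersection point,
\begin{equation}
g(c_k^i) = g(p^i)+g(q)|_{{\rm from}\ p^i}+g(c_k^i)|_{{\rm from}\ q}.
\end{equation}

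Next I would construct an explicit alternative path in the distinguished homotopy $[c_l^j]$ that also terminates at $m$: follow the tree path of $c_l^j$ from $p_{start}$ to $c_l^j$, travel along the sweeper $\Delta_l^j$ from $c_l^j$ to the intersection $q$, continue along the portion of $\alpha_k^i$ from $q$ to $c_k^i$, and finally move along $\Delta_k^i$ from $c_k^i$ to $m$. All four segments are collision-free (the first two by the tree construction, the third because it is part of the edge $\alpha_k^i$, and the last because $\Delta_k^i$ is a validated gap sweeper), so the concatenation is a legitimate collision-free path, and since it visits $c_l^j$ it represents the distinguished homotopy $[c_l^j]$. Its length is
\begin{equation}
g(c_l^j)+\|c_l^j-q\|_2+g(c_k^i)|_{{\rm from}\ q}+\|c_k^i-m\|_2.
\end{equation}

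The key step is then the arithmetic comparison. Subtracting the alternative length from the $[c_k^i]$ optimum yields exactly
\begin{equation}
\bigl(g(p^i)+g(q)|_{{\rm from}\ p^i}\bigr)-\bigl(g(c_l^j)+\|c_l^j-q\|_2\bigr),
\end{equation}
which is strictly positive by the hypothesis Eqn.~(\ref{equ:path_to_sweeper}). Since the constructed path is only an upper bound on the locally shortest representative of $[c_l^j]$ reaching $m$, we get $\min_{\varphi^{-1}([c_l^j])}g(m)<\min_{\varphi^{-1}([c_k^i])}g(m)$, and because $m\in\Delta_k^i$ was arbitrary, the conclusion follows.

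The only subtle point I expect is ensuring the homotopy bookkeeping: I need to be sure that the concatenated alternative path really lies in the distinguished homotopy $[c_l^j]$ rather than accidentally witnessing $[c_k^i]$ (which would make the inequality vacuous). This is handled exactly as in the proof of \textbf{Lemma~\ref{lem:sweeper_to_sweeper}}, where visiting the tree path of $c_l^j$ before reaching the target point is sufficient to certify the distinguished homotopy membership. Beyond this, the argument is a straightforward triangle-type inequality and does not require any new geometric construction.
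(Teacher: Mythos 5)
Your proof is correct and follows essentially the same argument as the paper: express the $[c_k^i]$-optimum via the tree-path local optimality, split $g(c_k^i)$ at the intersection $q$, construct the detour through $c_l^j$ and $\Delta_l^j$ rejoining $\alpha_k^i$ at $q$, and compare lengths using Eqn.~(\ref{equ:path_to_sweeper}) as an upper-bound argument. Your explicit decomposition of the final leg into ``along $\alpha_k^i$ from $q$ to $c_k^i$, then along $\Delta_k^i$ from $c_k^i$ to $m$'' is slightly cleaner than the paper's wording, but the computation and conclusion are identical.
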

\begin{proof}
See Fig.~\ref{fig:path_sweeper_intersect} for illustration. 
For a point $m$ on $\Delta_k^i$, its tree path is proven the locally shortest path, thus
\begin{equation}
\begin{aligned}
\min\limits_{\varphi^{-1}([c_k^i])}g(m) &= g(c_k^i) + \parallel c_k^i - m\parallel_2\\
&= g(p^i) + g(q)|_{{\rm from}\ p^i}\\
 &~~~~~~~~ + g(c_k^i)|_{{\rm from}\ q}+ \parallel c_k^i - m\parallel_2
\end{aligned}
\end{equation}
If Eqn.~(\ref{equ:path_to_sweeper}) is satisfied, a shorter path towards $q$ is observed, which indicates an alternative path towards $m$ following path homotopy $[c_l^j]$: The path first goes to $c_l^j$ following the tree path, then reaches $q$ along $\Delta_l^j$, and finally reaches $m$ along the remaining part of $\alpha_k^i$. The length of the alternative path is 
\begin{equation}
\begin{aligned}
g(m)|_{{\rm visiting}\ c_l^j} &= g(c_l^j) + \parallel c_l^j - q\parallel_2\\
&~~~~~~~~+ g(c_k^i)|_{{\rm from}\ q} + \parallel c_k^i - m\parallel_2\\
& < g(p^i) + g(q)|_{{\rm from}\ p^i}\\
&~~~~~~~~+ g(c_k^i)|_{{\rm from}\ q} + \parallel c_k^i - m\parallel_2\\
&= g(c_k^i) + \parallel c_k^i - m\parallel_2\\
&= \min\limits_{\varphi^{-1}([c_k^i])} g(m)
\end{aligned}
\end{equation}
Since $m$ is arbitrarily chosen, and the length of the constructed alternative path is only an upper bound of the local minimum, 
\begin{equation}
\min\limits_{\varphi^{-1}([c_l^j])} g(m) < \min\limits_{\varphi^{-1}([c_k^i])} g(m), \forall m\in \Delta_k^i
\end{equation}
\end{proof}

Similarly, the edge being relatively optimal than the gap sweeper may be observed. 
\begin{corollary}\label{coro:sweeper_to_path}
Let the gap sweeper $\Delta_l^j$ and the edge $\alpha_k^i$ intersect at $q$. 
If 
\begin{equation}\label{equ:sweeper_to_path}
g(c_l^j) > g(p^i) + g(q)|_{{\rm from}\ p^i} +  \parallel q - c_l^j\parallel_2
\end{equation}
then 
\begin{equation}
\min\limits_{\varphi^{-1}([p^i])}g(m) < \min\limits_{\varphi^{-1}([c_l^j])}g(m), \forall m\in \Delta_l^j
\end{equation}
\end{corollary}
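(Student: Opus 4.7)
The plan is to mirror the argument of \textbf{Lemma}~\ref{lem:sweeper_to_path}, with the roles of the two distinguished homotopies exchanged: in \textbf{Lemma}~\ref{lem:sweeper_to_path} the gap sweeper $\Delta_l^j$ provided a cheaper detour around the edge $\alpha_k^i$ to dominate $[c_k^i]$; here the initial segment of the edge $\alpha_k^i$ (from $p^i$ to $q$) together with $g(p^i)$ is shown to be a cheaper access to $q$ than the tree path through $c_l^j$, so that $[p^i]$ dominates $[c_l^j]$ on every point of $\Delta_l^j$.

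First I would fix an arbitrary $m\in\Delta_l^j$ and recall that, by \textbf{Theorem}~\ref{thm:optimality} applied to the node $(j,l)$ together with the fact that $\Delta_l^j$ is a straight segment attached to $c_l^j$, the locally shortest path in $\varphi^{-1}([c_l^j])$ ending at $m$ has length exactly
\begin{equation}
\min\limits_{\varphi^{-1}([c_l^j])}g(m)=g(c_l^j)+\parallel c_l^j-m\parallel_2.
\end{equation}
Next I would construct a concrete witness path in $\varphi^{-1}([p^i])$ that ends at $m$: concatenate the tree path of $p^i$ (length $g(p^i)$), the initial truncation of $\alpha_k^i$ from $p^i$ to $q$ (length $g(q)|_{{\rm from}\ p^i}$), and the segment of $\Delta_l^j$ from $q$ to $m$ (length $\parallel q-m\parallel_2$). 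Since this path passes through $p^i$, its total length is an upper bound on $\min_{\varphi^{-1}([p^i])}g(m)$.

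It remains to show this upper bound is strictly smaller than $g(c_l^j)+\parallel c_l^j-m\parallel_2$. Applying the hypothesis Eqn.~(\ref{equ:sweeper_to_path}) I get
\begin{equation}
g(p^i)+g(q)|_{{\rm from}\ p^i}+\parallel q-m\parallel_2<g(c_l^j)-\parallel q-c_l^j\parallel_2+\parallel q-m\parallel_2,
\end{equation}
so the task reduces to the inequality $\parallel q-m\parallel_2-\parallel q-c_l^j\parallel_2\le \parallel c_l^j-m\parallel_2$. Because $c_l^j,q,m$ all lie on the straight segment $\Delta_l^j$, this is just the one-dimensional triangle inequality and holds automatically (with equality when $c_l^j$ lies between $q$ and $m$). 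Chaining the two bounds gives
\begin{equation}
g(p^i)+g(q)|_{{\rm from}\ p^i}+\parallel q-m\parallel_2<g(c_l^j)+\parallel c_l^j-m\parallel_2=\min\limits_{\varphi^{-1}([c_l^j])}g(m),
\end{equation}
and since $m\in\Delta_l^j$ was arbitrary, the conclusion follows.

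The only delicate point, and really the main thing to be careful about, is the bookkeeping of the distinguished homotopy classes: one must verify that the constructed witness path genuinely lies in $\varphi^{-1}([p^i])$ (which is immediate, since it starts at $p_{start}$ and passes through $p^i$) and that the local optimality of the tree path along the straight segment $\Delta_l^j$ really does yield the closed-form expression for $\min_{\varphi^{-1}([c_l^j])}g(m)$. Once these two observations are in place, the result reduces to an elementary triangle-inequality manipulation, so no further technical machinery is needed.
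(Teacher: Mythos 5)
Your proof is correct and its structure mirrors the paper's: write $\min_{\varphi^{-1}([c_l^j])}g(m)$ in closed form as $g(c_l^j)+\parallel c_l^j-m\parallel_2$, then construct a witness path through $p^i$ and $q$ and use the hypothesis to show it is strictly shorter. The small divergence is the witness itself: you route $p^i\to q\to m$ directly along $\Delta_l^j$, whereas the paper routes $p^i\to q\to c_l^j\to m$. Your witness is at least as short, but then you need the extra triangle-inequality step $\parallel q-m\parallel_2-\parallel q-c_l^j\parallel_2\le\parallel c_l^j-m\parallel_2$; the paper's slightly longer witness has length exactly the right-hand side of Eqn.~(\ref{equ:sweeper_to_path}) plus $\parallel c_l^j-m\parallel_2$, so the strict inequality against $g(c_l^j)+\parallel c_l^j-m\parallel_2$ drops out in one line by simply adding $\parallel c_l^j-m\parallel_2$ to both sides of the hypothesis, with no further manipulation. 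Incidentally, the colinearity of $c_l^j$, $q$, $m$ is not needed for the triangle inequality you invoke (it holds for arbitrary points in the plane), so that observation can be dropped. Both arguments establish the corollary.
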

\begin{proof}
For a point $m$ on $\Delta_l^j$, 
\begin{equation}
\min\limits_{\varphi^{-1}([c_l^j])} g(m) = g(c_l^j) + \parallel c_l^j - m\parallel_2
\end{equation}
If Eqn.~(\ref{equ:sweeper_to_path}) is satisfied, then we obtain an alternative path visiting $p^i, q, c_l^j, m$ in order, 
\begin{equation}
\begin{aligned}
g(m)|_{{\rm visiting}\ p^i} &= g(p^i) + g(q)|_{{\rm from}\ p^i}\\
&~~~~~~~~+ \parallel q - c_l^j\parallel_2 + \parallel c_l^j - m\parallel_2\\
&< g(c_l^j) + \parallel c_l^j - m\parallel_2\\
&= \min\limits_{\varphi^{-1}([c_k^i])}g(m)
\end{aligned}
\end{equation} 
Hence 
\begin{equation}
\min\limits_{\varphi^{-1}([p^i])}g(m) < \min\limits_{\varphi^{-1}([c_l^j])}g(m), \forall m\in \Delta_l^j
\end{equation}
\end{proof}

\begin{figure}[t]
\centering
\includegraphics[width = 0.35\textwidth]{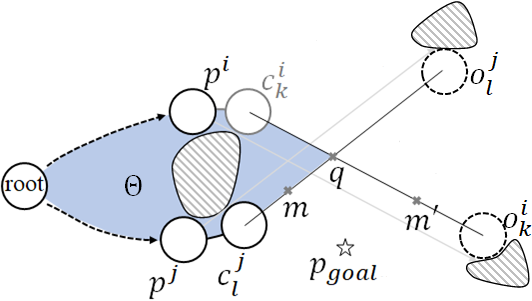}
\caption{Illustration of the optimality criterion when two \textit{gap sweeper}s intersect, with $p_{goal}\notin\Theta$ (painted in blue).}\label{fig:sweeper_sweeper_no_goal}
\end{figure}

\begin{lemma}\label{lem:path_to_path}
(Comparison between Edges)
When two edges $\alpha_k^i$ and $\alpha_l^j$ intersect at $q$. 
Separating $\alpha_k^i$ and $\alpha_l^j$ at $q$, the length of the truncated part of paths from $p^i$ to $q$ and from $p^j$ to $q$ are denoted as $g(q)|_{{\rm from}\ p^i}$ and $g(q)|_{{\rm from}\ p^j}$. Then 
\begin{eqnarray}
&g(p^i) + g(q)|_{{\rm from}\ p^i} > g(p^j) + g(q)|_{{\rm from}\ p^j}\label{equ:new_path_to_path} \\
\Rightarrow & \min\limits_{\varphi^{-1}([c_l^j])} g(m) < \min\limits_{\varphi^{-1}([c_k^i])} g(m), \forall m\in \Delta_k^i
\\
&g(p^j) + g(q)|_{{\rm from}\ p^j} > g(p^i) + g(q)|_{{\rm from}\ p^i} \\
\Rightarrow & \min\limits_{\varphi^{-1}([c_k^i])} g(m) < \min\limits_{\varphi^{-1}([c_l^j])} g(m), \forall m\in \Delta_l^j
\end{eqnarray}
\end{lemma}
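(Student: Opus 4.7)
The plan is to replay the argument pattern already established in \textbf{Lemma}~\ref{lem:sweeper_to_sweeper}, \textbf{Lemma}~\ref{lem:sweeper_to_path}, and \textbf{Corollary}~\ref{coro:sweeper_to_path}: for an arbitrary point $m$ on the designated gap sweeper I will write down the length of its tree path (which is locally shortest by \textbf{Theorem}~\ref{thm:optimality}), then exhibit an alternative concatenated path passing through the opposite source point and the intersection $q$, show it is strictly shorter under the hypothesis, and conclude the inequality between distinguished homotopies. The two implications in the statement are symmetric, so it suffices to carry out the first implication in detail and invoke symmetry for the second.

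For the first implication, I would fix $m\in \Delta_k^i$. By the local optimality of the tree path,
\begin{equation}
\min\limits_{\varphi^{-1}([c_k^i])} g(m) = g(p^i) + g(q)|_{{\rm from}\ p^i} + g(c_k^i)|_{{\rm from}\ q} + \parallel c_k^i - m\parallel_2.
\end{equation}
I then construct an alternative path that starts at $p_{start}$, follows the tree path to $p^j$, traverses the truncated piece of $\alpha_l^j$ from $p^j$ to $q$, continues along the truncated piece of $\alpha_k^i$ from $q$ to $c_k^i$, and finally moves along $\Delta_k^i$ from $c_k^i$ to $m$. Its length equals
\begin{equation}
g(p^j) + g(q)|_{{\rm from}\ p^j} + g(c_k^i)|_{{\rm from}\ q} + \parallel c_k^i - m\parallel_2,
\end{equation}
which, by Eqn.~(\ref{equ:new_path_to_path}), is strictly less than $\min_{\varphi^{-1}([c_k^i])} g(m)$. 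Because this alternative path is a representative of the distinguished homotopy $[c_l^j]$ extended to $m$, and the length computed is only an upper bound of the corresponding minimum, this yields $\min_{\varphi^{-1}([c_l^j])} g(m) < \min_{\varphi^{-1}([c_k^i])} g(m)$. Since $m\in\Delta_k^i$ was arbitrary, the first implication follows. The second implication is identical after swapping the roles of $(i,k)$ and $(j,l)$ and replacing $\Delta_k^i$ by $\Delta_l^j$.

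The main obstacle I anticipate is not the inequality manipulation (it is a direct arithmetic substitution once the tree-path length is decomposed about $q$) but the subtler justification that the constructed alternative path does represent the distinguished homotopy $[c_l^j]$, so that its length is a legitimate upper bound for $\min_{\varphi^{-1}([c_l^j])}g(m)$. This should rest on the fact that $q$ lies on $\alpha_l^j$, making the sub-segment $p_{start}\to p^j\to q$ a truncation of the tree path of $c_l^j$, and that the continuation from $q$ through $c_k^i$ to $m$ stays collision-free because it is assembled from already-verified tree structures ($\alpha_k^i$ and $\Delta_k^i$). Once this identification is in place, the two inequalities drop out immediately, and the lemma is established.
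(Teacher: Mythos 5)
The paper's own proof is the one-liner ``Easy, based on the classic shortcut mechanism,'' so you have supplied the details the authors omitted, and you did so following the exact same template used in \textbf{Lemmas}~\ref{lem:sweeper_to_sweeper} and~\ref{lem:sweeper_to_path}: decompose the tree-path length at the intersection $q$, splice in the cheaper prefix from the other tree branch, and read off the strict inequality. The arithmetic step and the symmetry reduction are both correct.

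The one place to be careful, which you correctly identify yourself, is the membership of your alternative path in $\varphi^{-1}([c_l^j])$. In \textbf{Lemmas}~\ref{lem:sweeper_to_sweeper}, \ref{lem:sweeper_to_path} and \textbf{Corollary}~\ref{coro:sweeper_to_path} the alternative path is always routed so that it physically visits the node whose distinguished homotopy appears on the left of the conclusion ($c_l^j$ in the first two, $p^i$ in the corollary); that is what licenses using its length as an upper bound on the minimum over that class. Your path $p_{start}\to p^j\to q\to c_k^i\to m$ passes through $p^j$ but never through $c_l^j$, since $q$ precedes $c_l^j$ on $\alpha_l^j$ and the path diverts onto $\alpha_k^i$ there. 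Saying that the prefix ``is a truncation of the tree path of $c_l^j$'' does not, on its own, put the whole path into $\varphi^{-1}([c_l^j])$ under the paper's stated definition (paths that \emph{visit} $c_l^j$). What actually closes the gap, and is almost surely what the authors have in mind, is that splicing the null-homotopic retracing $q\to c_l^j\to q$ along $\alpha_l^j$ produces a homotopic path that does visit $c_l^j$ while leaving the length bound intact (the retracing is not used for the inequality, only for the class membership); equivalently, one can argue in terms of $[p^j]$ and then specialise via the partition $\varphi^{-1}([p^j])=\bigcup_l\varphi^{-1}([c_l^j])$ established in \textbf{Theorem}~\ref{thm:inherit}. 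Make one of those bridges explicit and the proof is complete; as written, the identification is asserted rather than derived. Since the paper itself suppresses exactly this step, your reconstruction is faithful to the intended argument.
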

\begin{proof}
Easy, based on the classic shortcut mechanism. 
\end{proof}

One final result is that the $\Omega$ region of the child node is included in that of the parent node, so the following inheritance of relative non-optimality is guaranteed. 

\begin{theorem}\label{thm:inherit}
(Inheritance of Relative Non-optimality along Tree Expansion) 
\begin{equation}
[c_l^j]\prec [p^i] \Rightarrow [c_l^j] \prec [c_k^i], \forall k
\end{equation}
\end{theorem}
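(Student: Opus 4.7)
The plan is to reduce the inheritance claim to the single length comparison
\[
\min_{\gamma\in\varphi^{-1}([p^i])} g(\gamma) \;\leq\; \min_{\gamma\in\varphi^{-1}([c_k^i])} g(\gamma),
\]
after which chaining with the hypothesis $[c_l^j]\prec[p^i]$ (which by \textbf{Definition~\ref{def:relative_optimality_of_distinguished_homotopy}} means $\min_{\varphi^{-1}([c_l^j])} g < \min_{\varphi^{-1}([p^i])} g$) immediately yields $\min_{\varphi^{-1}([c_l^j])} g < \min_{\varphi^{-1}([c_k^i])} g$, i.e.\ $[c_l^j]\prec[c_k^i]$. The intuition is that ``visit $c_k^i$'' is a strictly finer constraint than ``visit $p^i$'', because the tree structure forces $c_k^i$ to be reached through its parent $p^i$; no fresh $\Omega$-region construction is required.

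To establish the key inequality I would take an arbitrary $\gamma\in\varphi^{-1}([c_k^i])$ and split it at the first visit to $c_k^i$ as $\gamma=\gamma_1\ast\gamma_2$. By the definition of distinguished homotopy, $\gamma_1$ is homotopic (with endpoints fixed at $p_{start}$ and $c_k^i$) to the tree path of $c_k^i$, which by the construction of the hierarchical tree decomposes as $\beta_{p^i}\ast\alpha_k^i$, where $\beta_{p^i}$ is the tree path of $p^i$ and $\alpha_k^i$ is the corresponding edge. By the local optimality of tree paths (\textbf{Theorem~\ref{thm:optimality}}), $\beta_{p^i}\ast\alpha_k^i$ is the shortest path in that homotopy class, so $g(\beta_{p^i}\ast\alpha_k^i)\leq g(\gamma_1)$. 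Substituting gives $\tilde{\gamma}=\beta_{p^i}\ast\alpha_k^i\ast\gamma_2$ with $g(\tilde{\gamma})\leq g(\gamma)$, and because $\tilde{\gamma}$ visits $p^i$ along $\beta_{p^i}$ it belongs to $\varphi^{-1}([p^i])$; taking the infimum over $\gamma$ yields the required bound.

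The main obstacle I anticipate is purely definitional bookkeeping: one must verify that $\tilde{\gamma}$ genuinely lies in $\varphi^{-1}([p^i])$, i.e.\ that the sub-path of $\tilde{\gamma}$ up to the first visit of $p^i$ is homotopic to $\beta_{p^i}$. This is literally $\beta_{p^i}$ by construction, so the check is immediate once the first-visit convention underlying the definition of distinguished homotopy in Section~\ref{section:goal_relaxation} is made explicit, but it is the only place where the generalised homotopy notion of that section is actively used. Since neither $\beta_{p^i}$ nor any step of the argument depends on $k$, the inheritance conclusion holds uniformly for every child $c_k^i$ of node $i$, matching the ``$\forall k$'' in the statement.
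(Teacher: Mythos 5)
Your proof is correct and rests on the same key step as the paper's: deriving $\min_{\varphi^{-1}([p^i])} g \leq \min_{\varphi^{-1}([c_k^i])} g$ from \textbf{Theorem~\ref{thm:optimality}} and then chaining with the hypothesis. The paper phrases this as the set identity $\varphi^{-1}([p^i]) = \bigcup_k \varphi^{-1}([c_k^i])$, whereas you unpack it into an explicit path-replacement argument; these are two renderings of the same observation that any path in $[c_k^i]$ can be shortcut through the tree path of $p^i$.
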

\begin{proof}
By \textbf{Theorem}~\ref{thm:optimality}, 
\begin{equation}
p^i\in \mathop{\rm argmin}\limits_{\alpha\in \varphi^{-1}([c_k^i])}g(\alpha),\ \forall k
\end{equation}
hence 
\begin{equation}
\varphi^{-1}([p^i]) = \bigcup\limits_{k} \varphi^{-1}([c_k^i])
\end{equation}
Then, 
\begin{equation}
\begin{aligned}
&[c_l^j]\prec [p^i]\\
\Rightarrow & \min\limits_{\alpha\in \varphi^{-1}([c_l^j])}g(\alpha) < \min\limits_{\alpha\in \varphi^{-1}([p^i])}g(\alpha)\\
\Rightarrow & \min\limits_{\alpha\in \varphi^{-1}([c_l^j])}g(\alpha) < \min\limits_{\alpha\in \varphi^{-1}([c_k^i])}g(\alpha),\ \forall k \\ 
\Rightarrow & [c_l^j]\prec [c_k^i],\ \forall k
\end{aligned}
\end{equation}
\end{proof}

Based on \textbf{Theorem}~\ref{thm:inherit}, all nodes, including not only leaf nodes but also expanded nodes and unnecessary nodes (those having been observed as relatively non-optimal), are meaningful for constructing the relative optimality between distinguished homotopies. 

\subsection{More Efficient Comparison Concerning the Goal Location}\label{section:more_subtle_constraint}

In all the discussions above, intuitively speaking, observing $[c_l^j]\prec [c_k^i]$ by \textbf{Theorem~\ref{thm:goal_location_relaxation}} requires the distinguished homotopy $[c_l^j]$ to be better ``enough" to be observed: For any goal location in $\Omega_{[c_k^i]}$ the locally shortest path in $\varphi^{-1}([c_l^j])$ is always shorter than the locally shortest path in $\varphi^{-1}([c_k^i])$. 
Noticing that the goal for the planning tasks is known prior, constructing a smaller region $\Omega_{[c_l^j]}$ will make the sufficient conditions ``weaker", which means that the relative optimality can be observed more easily. 
In this regard, the following \textbf{Proposition}~\ref{prop:goal_dependent} shows as an example that there indeed exists more delicate constructions of the criteria of relative optimality.

\begin{proposition}\label{prop:goal_dependent}
(Relative Optimality Depending on the Goal Location)
Let $\Delta_k^i$ and $\Delta_l^j$ intersect at $q$. 
The tree path of $q$ along the distinguished homotopy $[c_k^i]$  and $[c_l^j]$ form a closed boundary of an internal region, denoted by $\Theta$. 
See Fig. \ref{fig:sweeper_sweeper_no_goal} for illustration. 
If
\begin{equation}\label{equ:two_sweepers_no_goal}
\left\{
\begin{aligned}
&p_{goal}\notin\Theta\\
&\mbox{Both leaf nodes will expand ``inside" $\Theta$}\\
&g(c_k^i) + \parallel c_k^i - q\parallel_2 > g(c_l^j) + \parallel c_l^j - q\parallel_2
\end{aligned}
\right.
\end{equation}
then $[c_l^j]\prec [c_k^i]$. 
\end{proposition}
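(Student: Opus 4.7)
The plan is to invoke \textbf{Theorem~\ref{thm:goal_location_relaxation}} with a strictly smaller witness region than the $U_{[c_k^i]}$ used in \textbf{Theorem~\ref{thm:omega}}, exploiting the side information that $p_{goal}\notin\Theta$. Concretely, I would take $\Omega$ to be the connected component of $U_{[c_k^i]}\setminus\overline{\Theta}$ that contains $p_{goal}$. The two bookkeeping conditions are immediate: $p_{start}\notin U_{[c_k^i]}\supset\Omega$, and $p_{goal}\in\Omega$ follows by combining the hypothesis $p_{goal}\notin\Theta$ with the standing assumption that $p_{goal}$ has not yet been covered by node $i$ or its predecessors.

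The heart of the proof is to verify the length inequality of \textbf{Theorem~\ref{thm:goal_location_relaxation}} at each collision-free $q'\in\partial\Omega$. Boundary pieces coming from sibling gap sweepers of node $i$ and its predecessors are dispatched exactly as in the proof of \textbf{Theorem~\ref{thm:omega}}: any path reaching such a $q'$ from $p_{start}$ would fall outside $[c_k^i]$, making the condition vacuous. What is new is that $\partial\Omega$ also contains (ii) the portion of $\Delta_k^i$ from $q$ to $o_k^i$ and (iii) the portion of $\Delta_l^j$ from $c_l^j$ to $q$ that now separates $\Omega$ from $\Theta$ across $Q^j$. For $q'$ in case (ii), collinearity of $c_k^i, q, q'$ along the straight segment $\Delta_k^i$ gives the tree-path length in $[c_k^i]$ as $g(c_k^i)+\|c_k^i-q\|_2+\|q-q'\|_2$, while the candidate path $p_{start}\rightsquigarrow c_l^j\to q$ along $\Delta_l^j$ followed by $q\to q'$ along $\Delta_k^i$ has length $g(c_l^j)+\|c_l^j-q\|_2+\|q-q'\|_2$. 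Cancelling the common $\|q-q'\|_2$ reduces the required strict inequality to exactly the hypothesis $g(c_k^i)+\|c_k^i-q\|_2>g(c_l^j)+\|c_l^j-q\|_2$. Case (iii) is handled analogously: using $\|c_l^j-q'\|_2=\|c_l^j-q\|_2-\|q-q'\|_2$, the analogous comparison leaves a strict margin of $2\|q-q'\|_2$, so the same hypothesis again suffices.

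The step I expect to be the main obstacle is topological rather than metric: certifying that the candidate path in case (ii) lives in $\varphi_{q'}^{-1}([c_l^j])$ and not accidentally in $\varphi_{q'}^{-1}([c_k^i])$. Because the closed curve formed by the two tree paths to $q$ encloses $\Theta$ and the distinguished homotopies $[c_k^i]$ and $[c_l^j]$ are by assumption distinct, $\Theta$ must contain at least one obstacle, which is precisely what makes the two routes to $q$ non-homotopic; adjoining the common oriented segment from $q$ to $q'$ preserves this non-homotopy, so the candidate path is genuinely in $[c_l^j]$. The auxiliary hypothesis that both leaf nodes expand inside $\Theta$ is what guarantees that future node expansion cannot push new sub-regions into $\Omega$ and rewrite the decomposition of $\partial\Omega$ above. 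Once this homotopy bookkeeping is in place, \textbf{Theorem~\ref{thm:goal_location_relaxation}} immediately yields $[c_l^j]\prec [c_k^i]$.
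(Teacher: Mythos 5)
Your overall route is the same as the paper's: shrink the witness region from $U_{[c_k^i]}$ to $U_{[c_k^i]}\setminus\Theta$ using the hypothesis $p_{goal}\notin\Theta$, reduce the verification of Theorem~\ref{thm:goal_location_relaxation} to the two new boundary pieces, and settle each piece with a collinearity/length comparison. Your case~(ii) (points on $\Delta_k^i$ from $q$ to $o_k^i$) matches the paper's treatment of $m'$ exactly: the tree path of $q'$ in $[c_k^i]$ equals $\min_{\varphi^{-1}([c_k^i])} g(q')$ by collinearity, the explicit detour through $c_l^j$ gives an upper bound on $\min_{\varphi^{-1}([c_l^j])} g(q')$, and the $\|q-q'\|_2$ terms cancel.

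Case~(iii) as written has a gap, though. For $q'\in\Delta_l^j$ between $c_l^j$ and $q$, collinearity gives the exact value $\min_{\varphi^{-1}([c_l^j])} g(q')=g(c_l^j)+\|c_l^j-q\|_2-\|q-q'\|_2$. Your ``analogous comparison'' (the one that produces a $2\|q-q'\|_2$ margin) is against the length $g(c_k^i)+\|c_k^i-q\|_2+\|q-q'\|_2$ of the explicit path $p_{start}\rightsquigarrow c_k^i\to q\to q'$. But that quantity is an \emph{upper} bound on $\min_{\varphi^{-1}([c_k^i])} g(q')$, and you need a \emph{lower} bound on that minimum, so beating the detour path proves nothing. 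The fix --- and what the paper actually does --- is a triangle-inequality lower bound: any path in $\varphi^{-1}([c_k^i])$ reaching $q'$ can be extended by the collision-free segment $q'\to q$ to give a path in $\varphi^{-1}([c_k^i])$ reaching $q$, and since that extended path is not the (unique) tree path to $q$, one gets the strict bound
\begin{equation*}
\min_{\varphi^{-1}([c_k^i])} g(q') > g(c_k^i)+\|c_k^i-q\|_2-\|q-q'\|_2.
\end{equation*}
Subtracting the same $\|q-q'\|_2$ on both sides reduces the claim to precisely the hypothesis; the margin is the hypothesis gap, not $2\|q-q'\|_2$.

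Two smaller remarks. Your observation that the candidate detour in case~(ii) must lie in $\varphi_{q'}^{-1}([c_l^j])$ because $\Theta$ encloses an obstacle is a nice piece of bookkeeping that the paper's proof leaves implicit, so it is worth keeping. Also, your $\Omega=$ the $p_{goal}$-component of $U_{[c_k^i]}\setminus\overline{\Theta}$ is the same region the paper denotes $\Omega_{[c_k^i]}\setminus(\Omega_{[c_k^i]}\cap\Theta)$, so no difference there.
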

\begin{proof}
Based on the location of $p_{goal}$, the theorem is proven by enumerating all possible point $m$ and $m'$ on $\Delta_l^j$ and $\Delta_k^i$ that the path may visit to reach $p_{goal}$ in $\Omega_{[c_k^i]}\backslash (\Omega_{[c_k^i]}\cap \Theta)$. 
See Fig.~\ref{fig:sweeper_sweeper_no_goal} for illustration. 

Denote $m$ as a point in $\Delta_l^j$, between $c_l^j$ and $q$. 
Note that we have the triangular inequality for the minimal length of paths towards $m$ in homotopy $[c_k^i]$, because $c_k^i$, $q$, and $m$ are not colinear, 
\begin{equation}
\begin{aligned}
\min\limits_{\varphi^{-1}([c_k^i])} g(m) &> \min\limits_{\varphi^{-1}([c_k^i])}g(q) - \parallel q - m \parallel_2\\ 
&= g(c_k^i) + \parallel c_k^i - q\parallel_2 - \parallel q - m\parallel_2
\end{aligned}
\end{equation}
And also notice that $c_l^j$, $m$, and $q$ are colinear, 
\begin{equation}
\begin{aligned}
\min\limits_{\varphi^{-1}([c_l^j])} g(m) &= g(c_l^j) + \parallel c_l^j - m\parallel_2\\
&= g(c_l^j) + \parallel c_l^j - q\parallel_2 - \parallel q - m\parallel_2
\end{aligned}
\end{equation}
If the inequality in Eqn.~(\ref{equ:two_sweepers_no_goal}) is satisfied, then an alternative path along the distinguished homotopy $[c_l^j]$ will be always shorter than the one in $[c_k^i]$, 
\begin{equation}
\begin{aligned}
\min\limits_{\varphi^{-1}([c_l^j])} g(m) &= g(c_l^j) + \parallel c_l^j - q\parallel_2 - \parallel q - m\parallel_2\\
&< g(c_k^i) + \parallel c_k^i - q\parallel_2 - \parallel q - m\parallel_2\\
&< \min\limits_{\varphi^{-1}([c_k^i])} g(m)
\end{aligned}
\end{equation}

Similarly, denoted $m'$ as a point in $\Delta_k^i$, between $q$ and $o_k^i$. 
Since $c_k^i, q$ and $m'$ are colinear, we have 
\begin{equation}
\begin{aligned}
\min\limits_{\varphi^{-1}([c_k^i])}g(m') &= g(c_k^i) + \parallel c_k^i - m' \parallel_2\\
& = g(c_k^i) + \parallel c_k^i - q\parallel_2 + \parallel q - m'\parallel_2
\end{aligned}
\end{equation}
And by the triangular inequality, 
\begin{equation}
\begin{aligned}
\min\limits_{\varphi^{-1}([c_l^j])}g(m') &< \min\limits_{\varphi^{-1}([c_l^j])}g(q) + \parallel q - m'\parallel_2\\
&= g(c_l^j) + \parallel c_l^j - q\parallel_2 + \parallel q - m'\parallel_2
\end{aligned}
\end{equation}
Hence 
\begin{equation}
\min\limits_{\varphi^{-1}([c_l^j])}g(m') < \min\limits_{\varphi^{-1}([c_k^i])}g(m')
\end{equation}
Since $m$ and $m'$ are arbitrarily chosen, we prove that $[c_l^j]\prec [c_k^i]$. 
\end{proof}

Note that Eqn.~(\ref{equ:two_sweepers_no_goal}) is a sufficient condition of Eqn.~(\ref{equ:sweeper_to_sweeper}), which supports our claim at the beginning of this subsection that, a smaller $\Omega$ region generally leads to a more delicate construction of the relative optimality. 
We believe that more efficient criteria exist and will be exploited by the community. 
In particular, a potential direction for further improvements, finding the minimal $\Omega$ region may be taken in the future. 

\subsection{Solution to SPP of the Fig.~\ref{fig:saddle_curve}(c) Case}\label{subsection:fig_3d_solution}
To finalise the discussion of this section, we show step-by-step the solution to the unsolved problem, and how unnecessary path topologies are discarded in Fig.~\ref{fig:saddle_curve}(c). 
See Fig.~\ref{fig:solution_fig3} for illustration. 
For clarity, the path topology that bypasses the internal obstacle from upwards is drawn in red, and the one from downwards is drawn in blue. 
The leaf nodes that have been proven relatively non-optimal will be removed for clarity even if they have been constructed. 
Before the topological tree grows as shown in  Fig.~\ref{fig:solution_fig3}(a), no relative optimality can be constructed. 
See Fig.~\ref{fig:solution_fig3}(b), when $p^4( = c_1^2)$ is expanded, 
\begin{enumerate}
\item $\Delta_3^4$ and $\Delta_2^3$ intersect at $q_1$, where by \textbf{Proposition}~\ref{prop:goal_dependent} we obtain $[c_2^3]\prec[c_3^4]$. So leaf node $c_3^4$ is removed from the priority queue. 
\item $\alpha_2^4$ and $\Delta_2^3$ intersect at $q_2$, where by \textbf{Lemma}~\ref{lem:sweeper_to_path} we obtain $[c_2^3]\prec [c_2^4]$. So leaf node $c_2^4$ is removed from the priority queue.
\end{enumerate}
After $c_1^3$ is expanded as the source point of node $5$, $c_1^3 = p^5$, see Fig.~\ref{fig:solution_fig3}(c), 
\begin{enumerate}
\item $\Delta_1^5$ and $\Delta_1^2$ intersect at $q_3$, where by \textbf{Proposition}~\ref{prop:goal_dependent} we obtain $[c_1^2]\prec [c_1^5]$. So leaf node $c_1^5$ is removed from the priority queue. 
\item $\alpha_2^5$ and $\Delta_1^4$ intersect at $q_4$, where by \textbf{Corollary}	~\ref{coro:sweeper_to_path} we obtain $[p^5]\prec [c_1^4]$. So leaf node $c_1^4$ is removed from the priority queue. 
\end{enumerate}
Note that here all leaf nodes drawn in red have been distinguished to be non-optimal, which physically means that any path that bypasses the internal obstacle from its upwards will not be the globally shortest path. 
This is in contrast to our initial guessing about the resultant solution in Section~\ref{section_problem_modelling}. 
Finally, see Fig.~\ref{fig:solution_fig3}(e), after node $6$-$9$ have been constructed and the critical point $c_1^9$ has been expanded as the $10$-th node, a locally shortest resulting path along the distinguished homotopy $[c_1^9]$ is constructed. 
Then, all the remaining critical points in the priority queue, $c_1^7$, $c_2^7$, and $c_2^8$, have a heuristic cost greater than the length of the currently shortest resulting path. 
Hence the priority queue is wiped out and the algorithm terminates.

\begin{figure}[t]
\includegraphics[width = 0.48\textwidth]{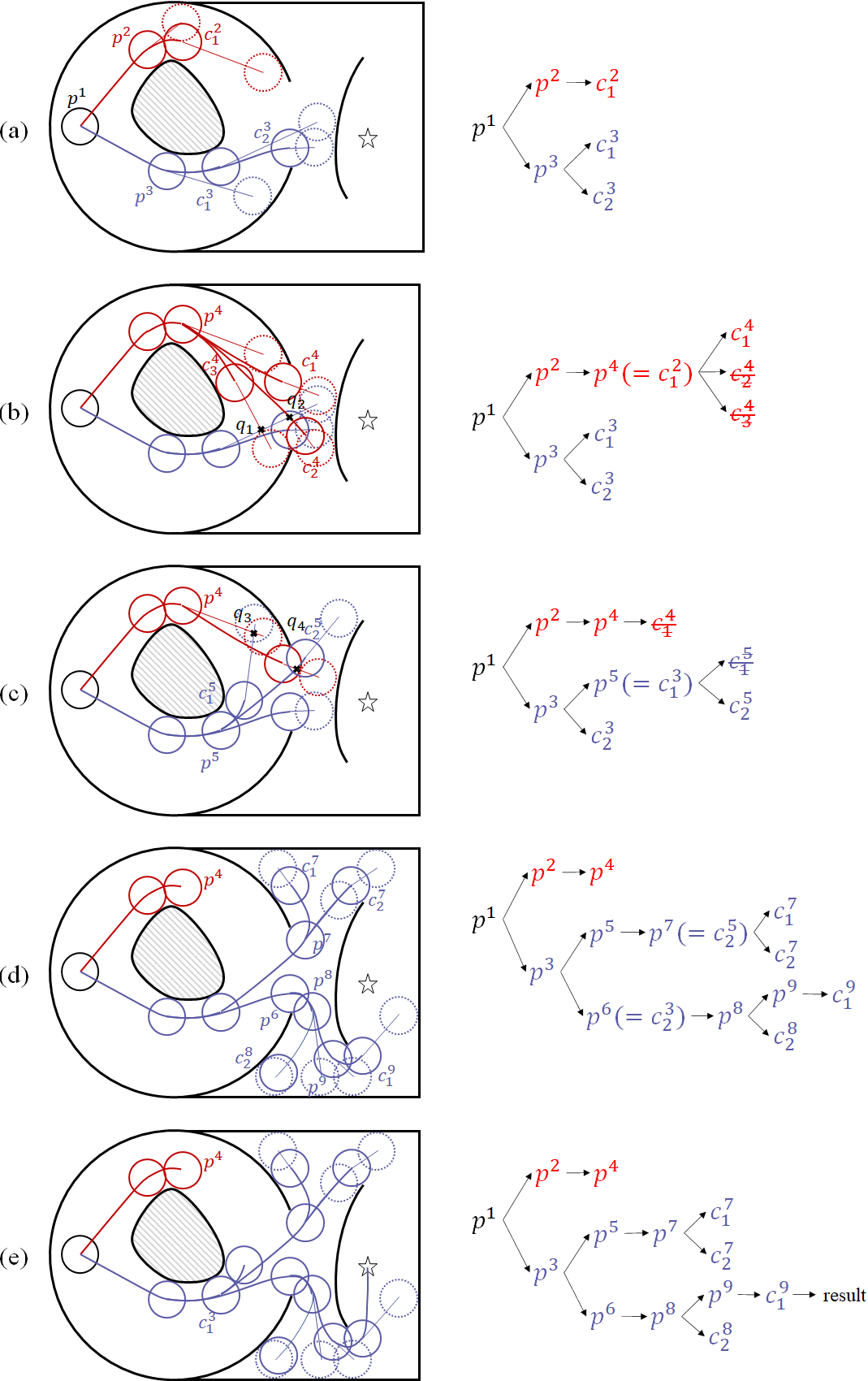}
\caption{Illustration of how pathfinding along unnecessary path homotopies are terminated in the case of Fig.~\ref{fig:saddle_curve}(c). }\label{fig:solution_fig3}
\end{figure}

\begin{figure*}
\begin{minipage}[b]{0.79\linewidth}
\subfigure[]{
\includegraphics[width = 0.17\textwidth]{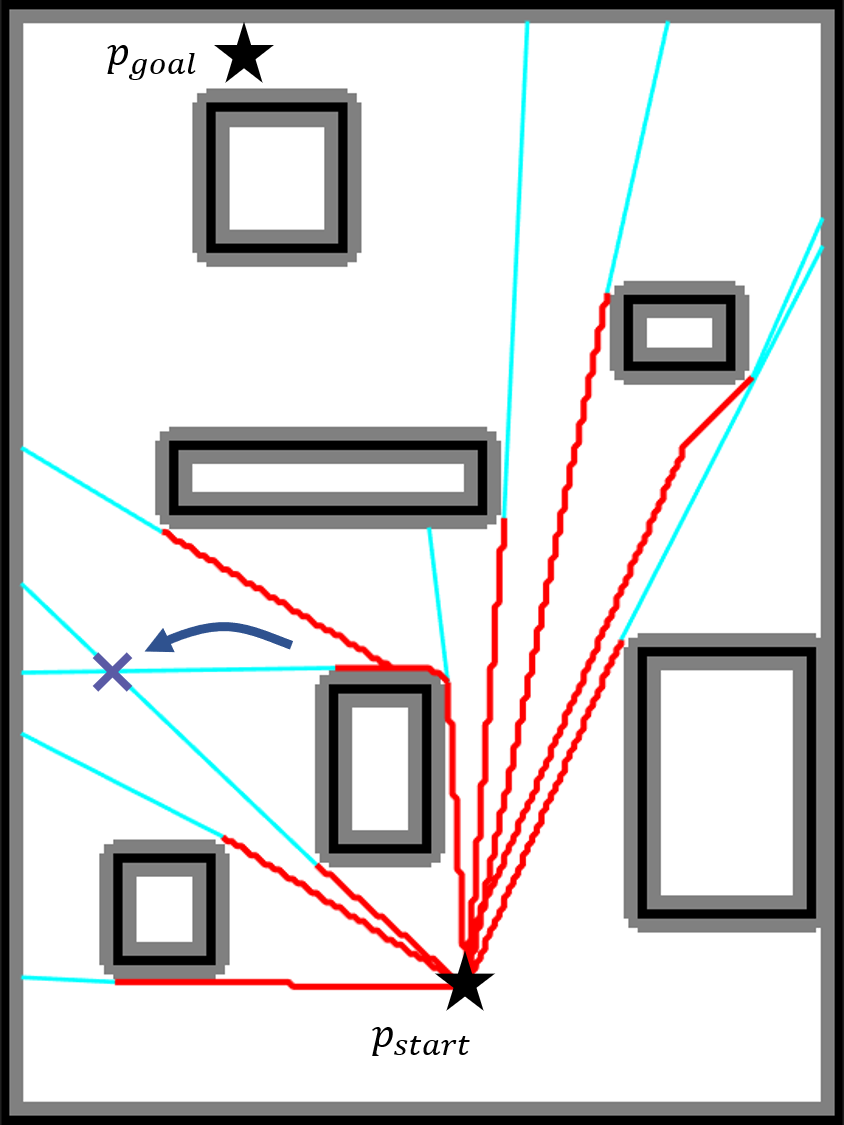}
}
\subfigure[]{
\includegraphics[width = 0.17\textwidth]{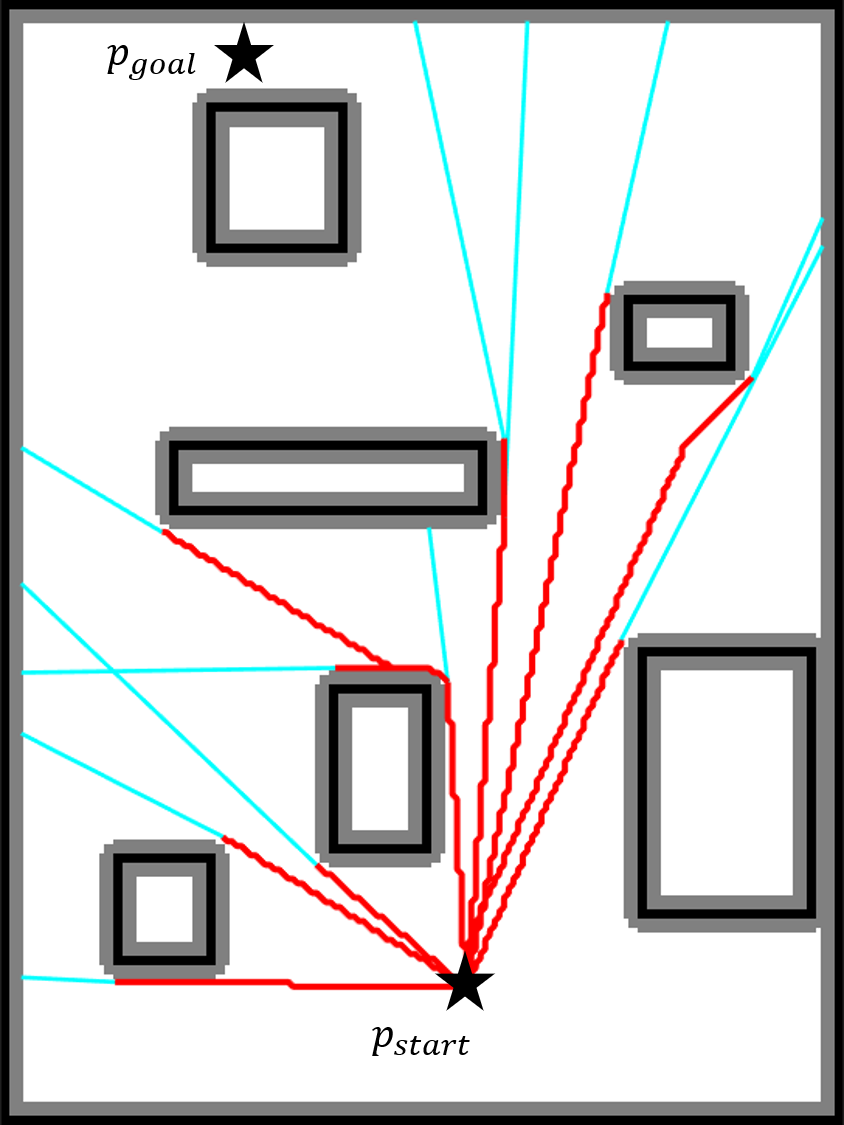}
}
\subfigure[]{
\includegraphics[width = 0.17\textwidth]{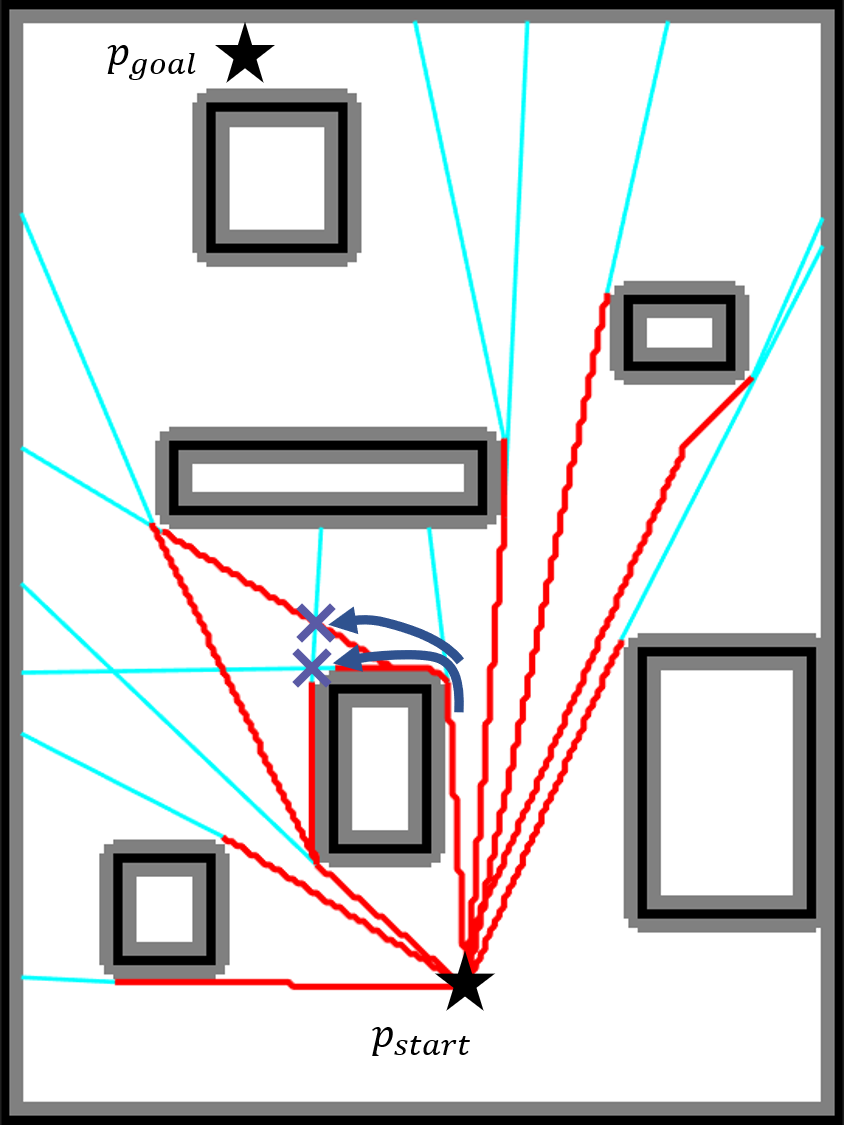}
}
\subfigure[]{
\includegraphics[width = 0.17\textwidth]{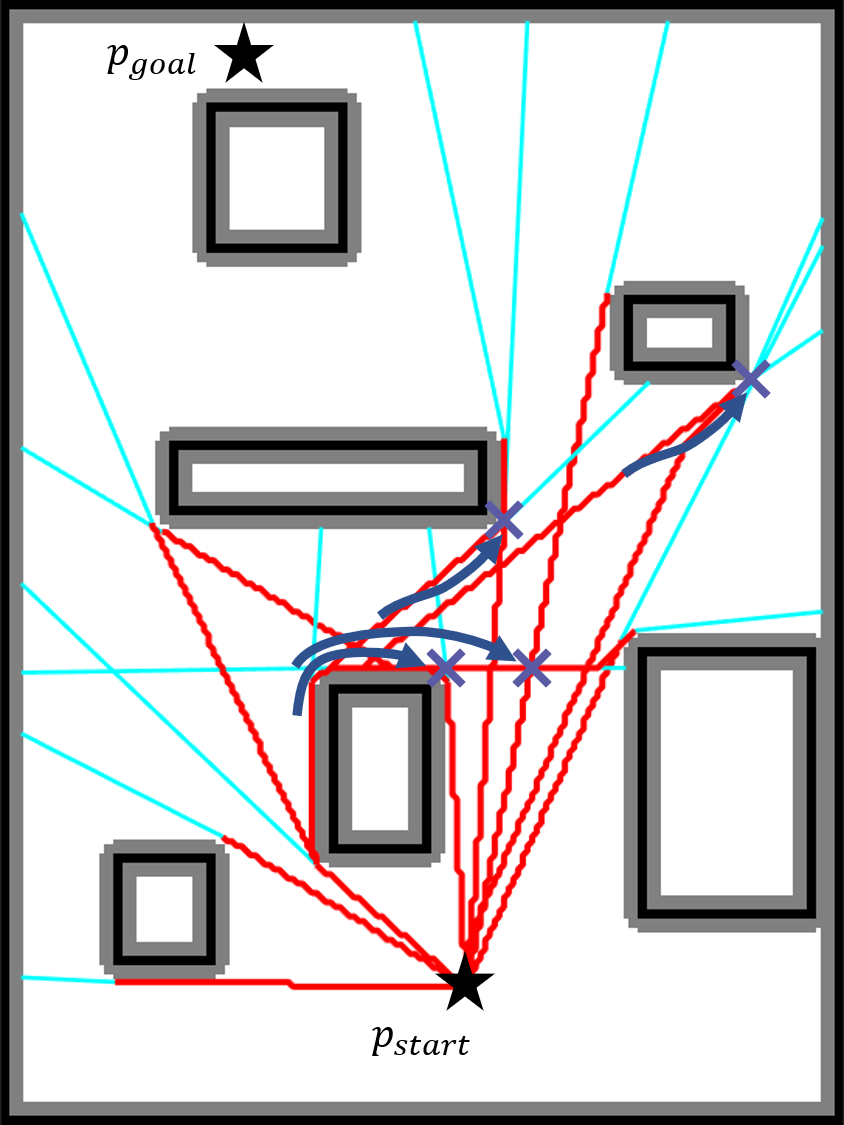}
}
\subfigure[]{
\includegraphics[width = 0.17\textwidth]{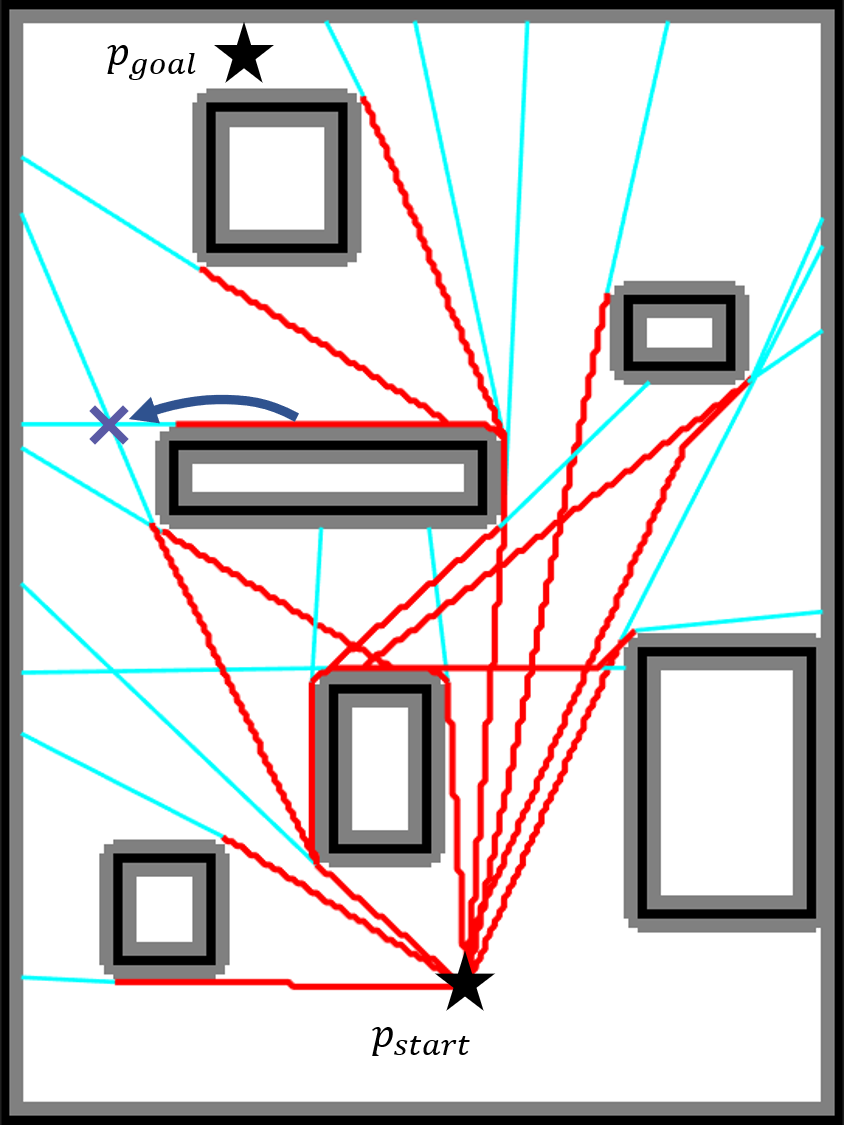}
}\\
\subfigure[]{
\includegraphics[width = 0.17\textwidth]{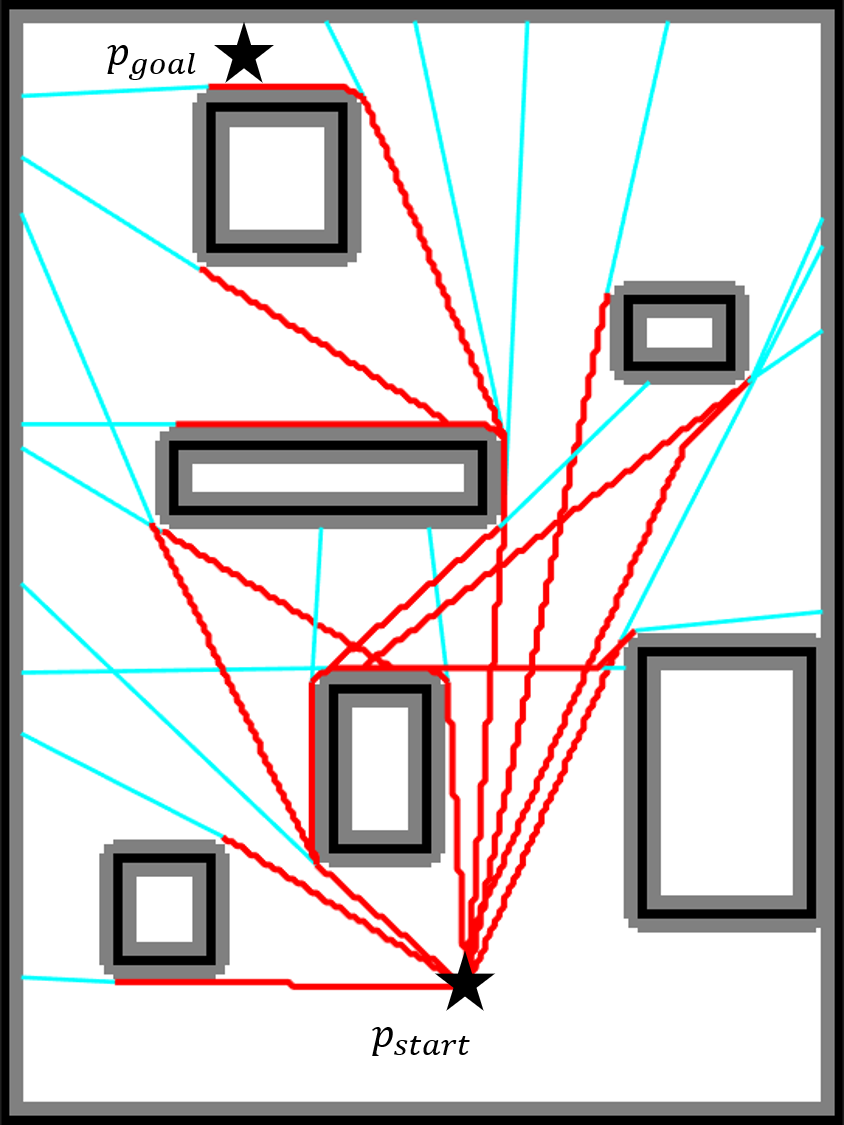}
}
\subfigure[]{
\includegraphics[width = 0.17\textwidth]{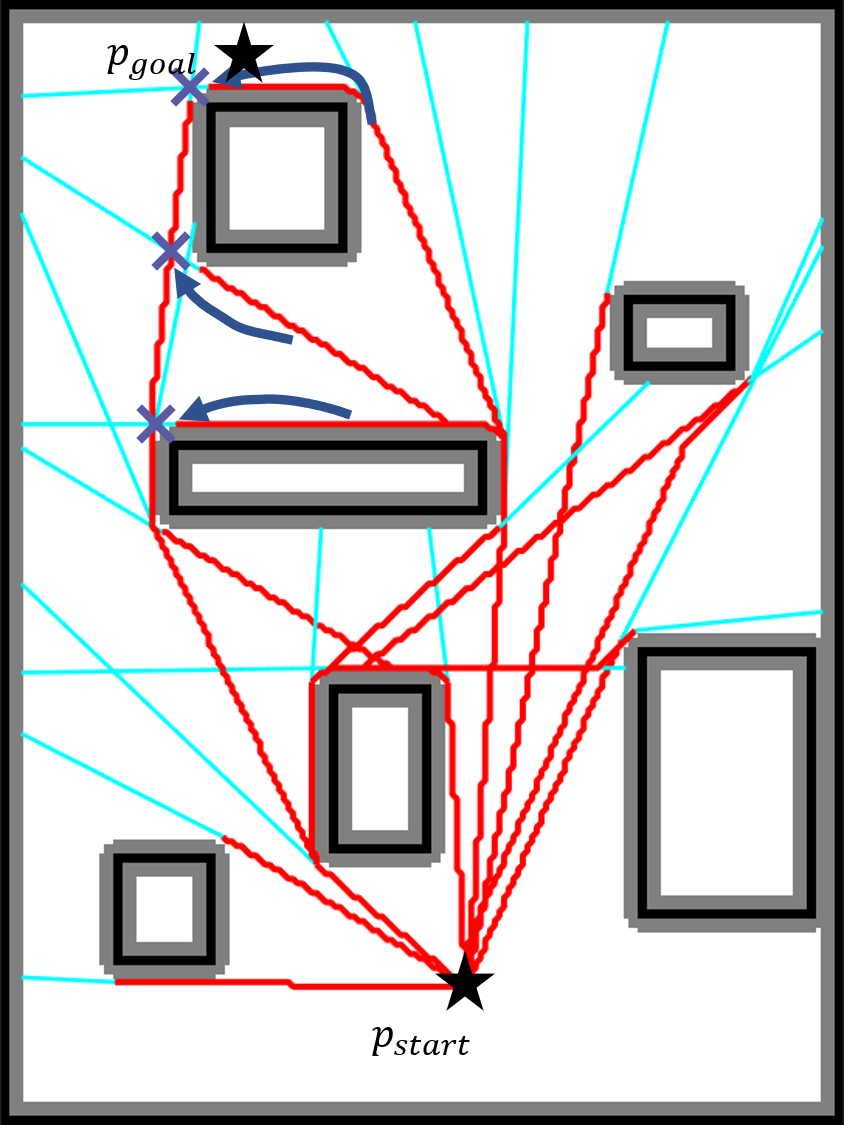}
}
\subfigure[]{
\includegraphics[width = 0.17\textwidth]{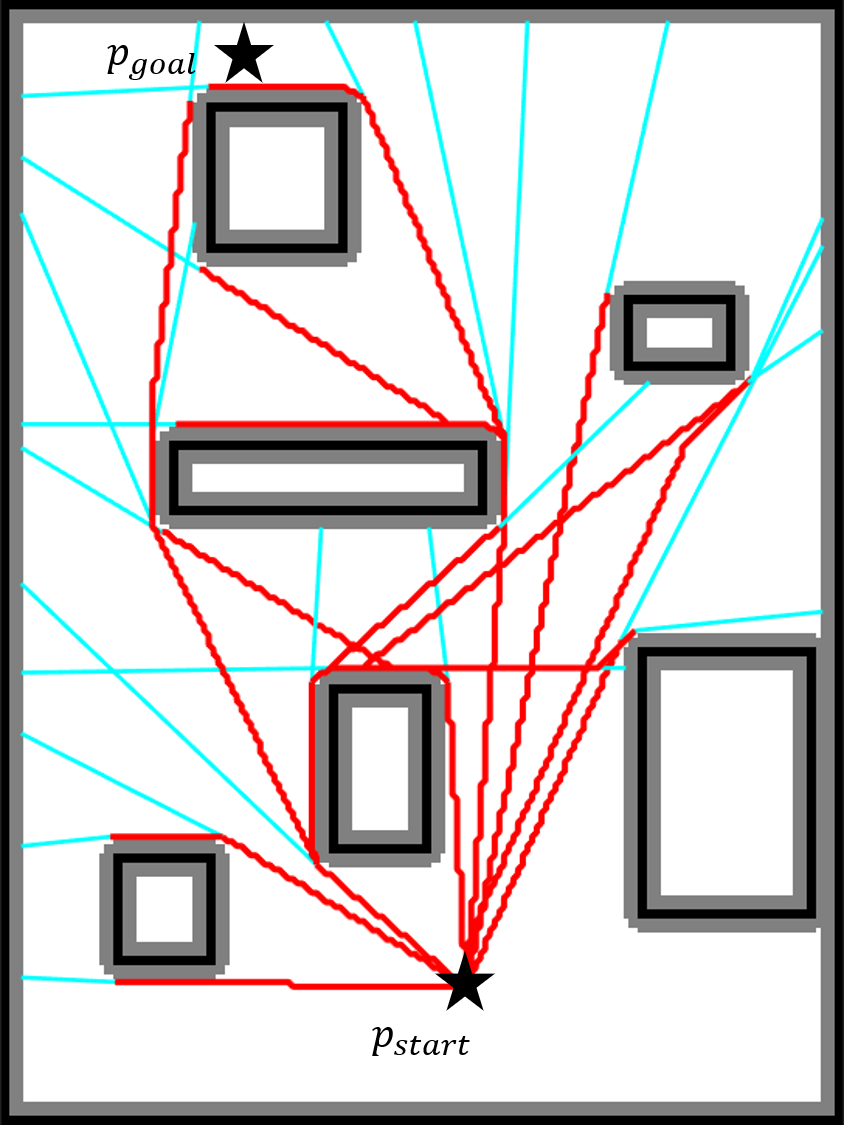}
}
\subfigure[]{
\includegraphics[width = 0.17\textwidth]{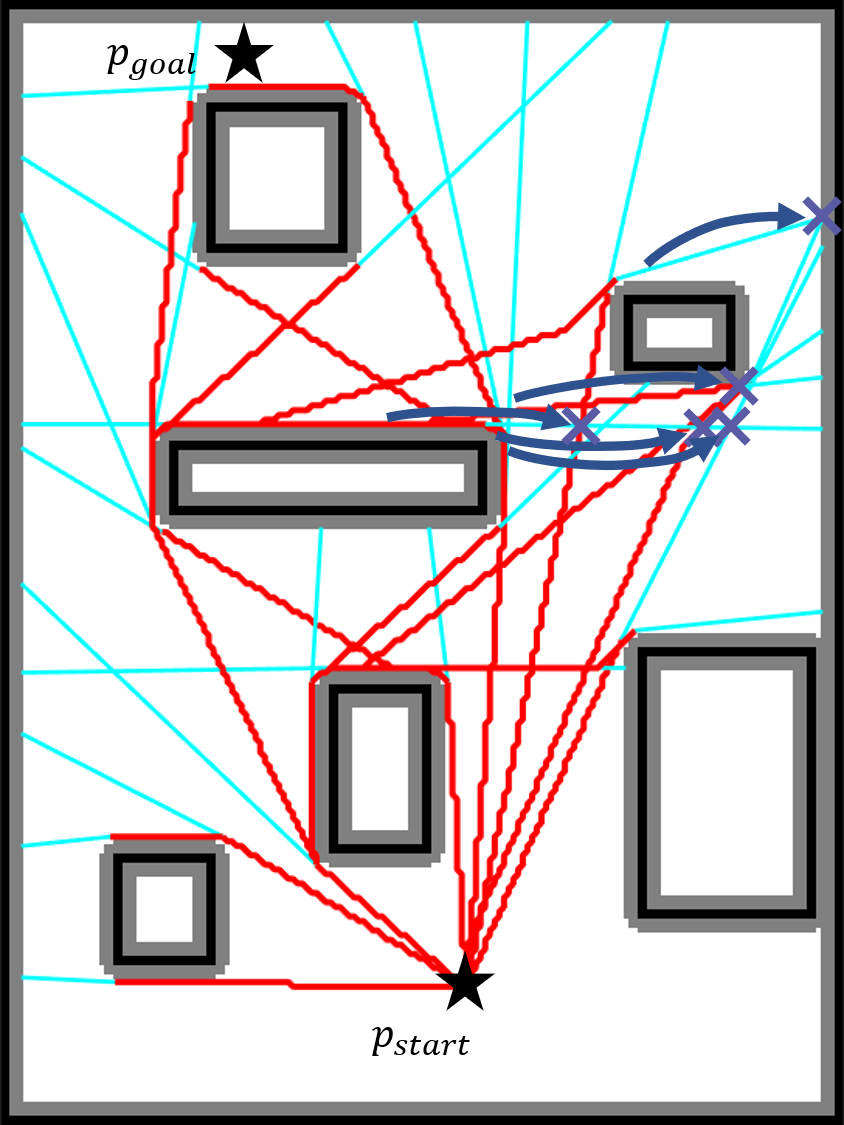}
}
\subfigure[]{
\includegraphics[width = 0.17\textwidth]{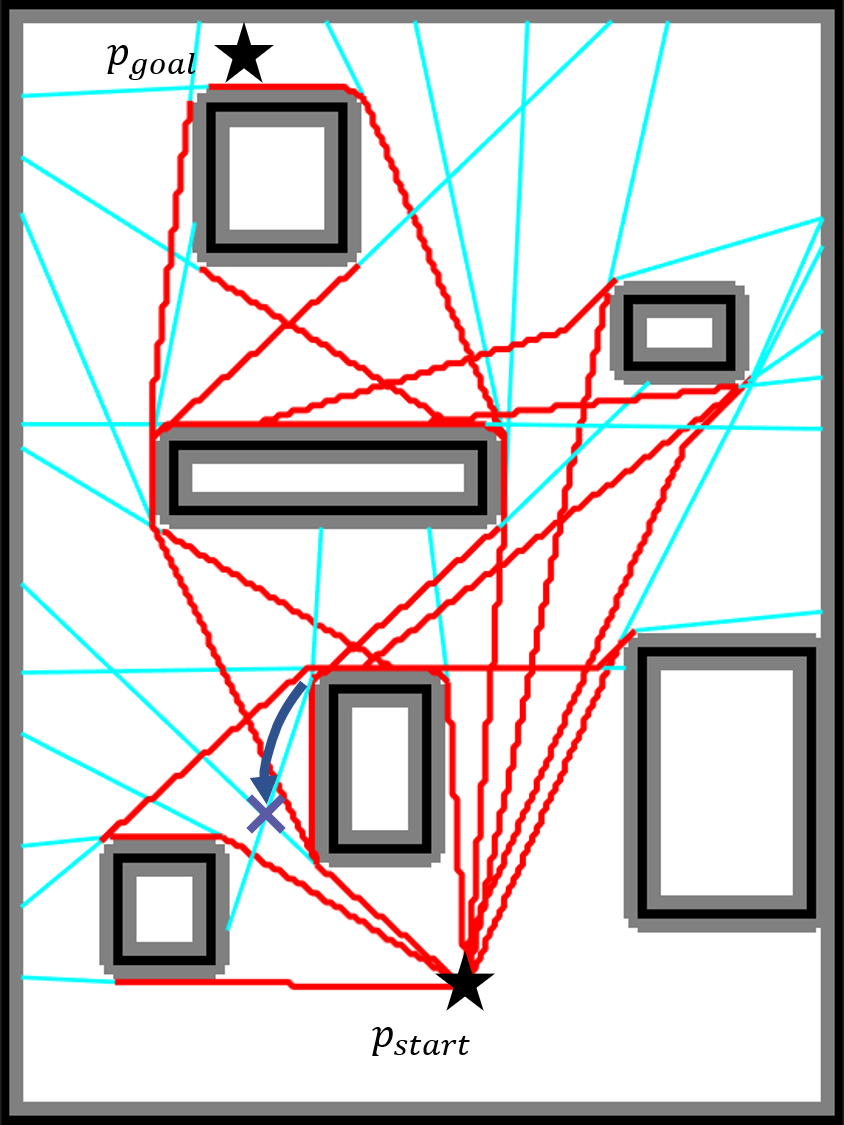}
}\\
\subfigure[]{
\includegraphics[width = 0.17\textwidth]{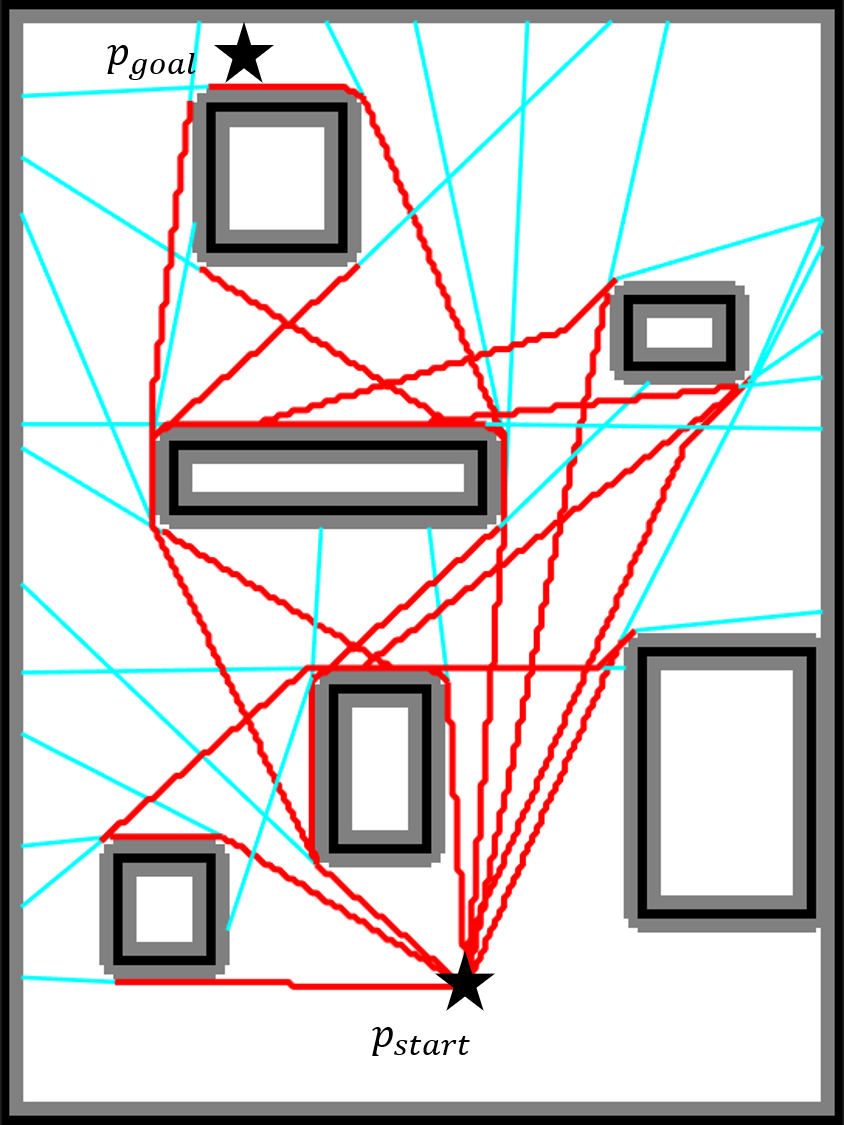}
}
\subfigure[]{
\includegraphics[width = 0.17\textwidth]{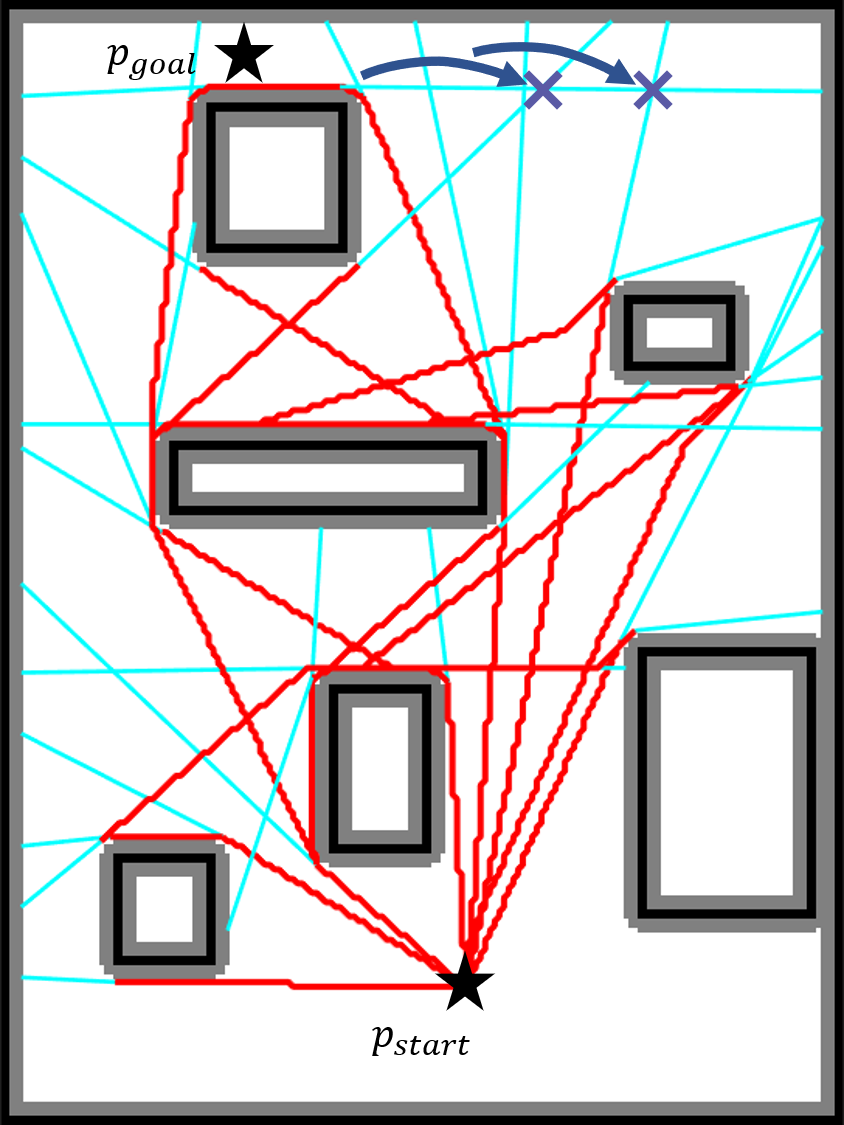}
}
\subfigure[]{
\includegraphics[width = 0.17\textwidth]{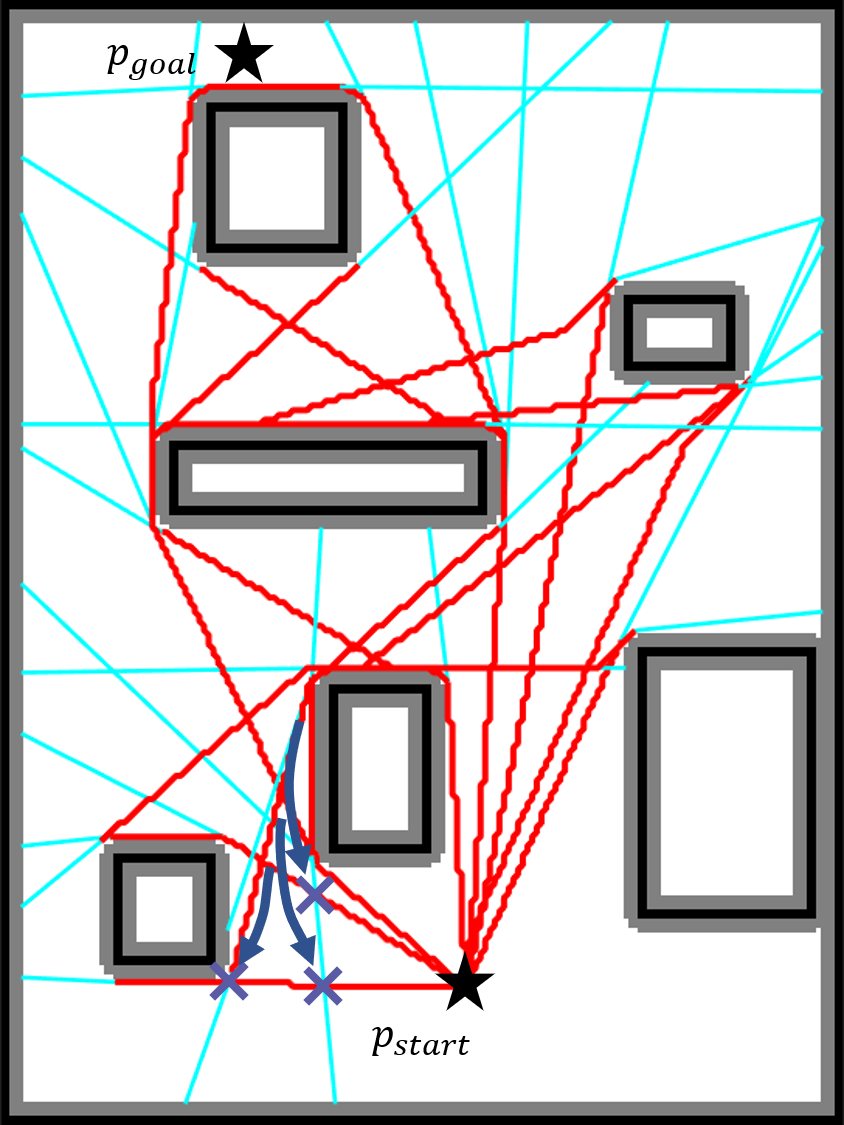}
}
\subfigure[]{
\includegraphics[width = 0.17\textwidth]{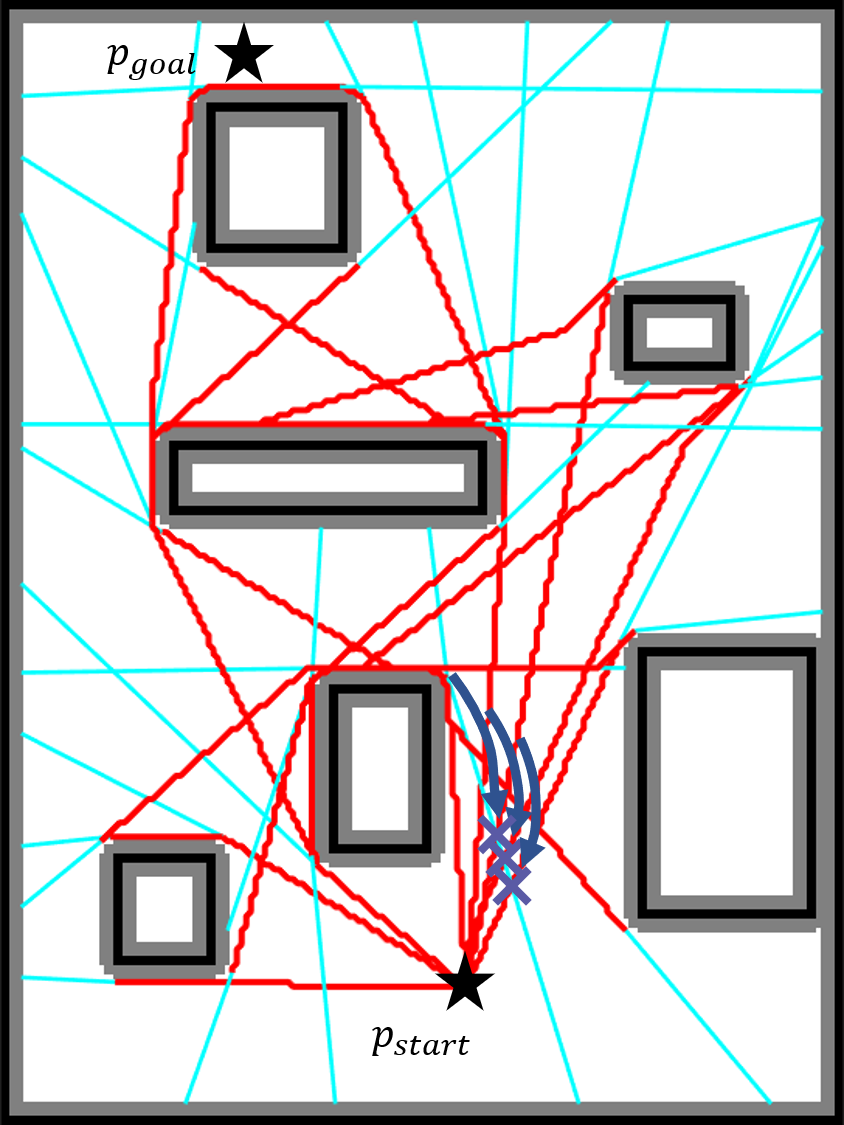}
}
\subfigure[]{
\includegraphics[width = 0.17\textwidth]{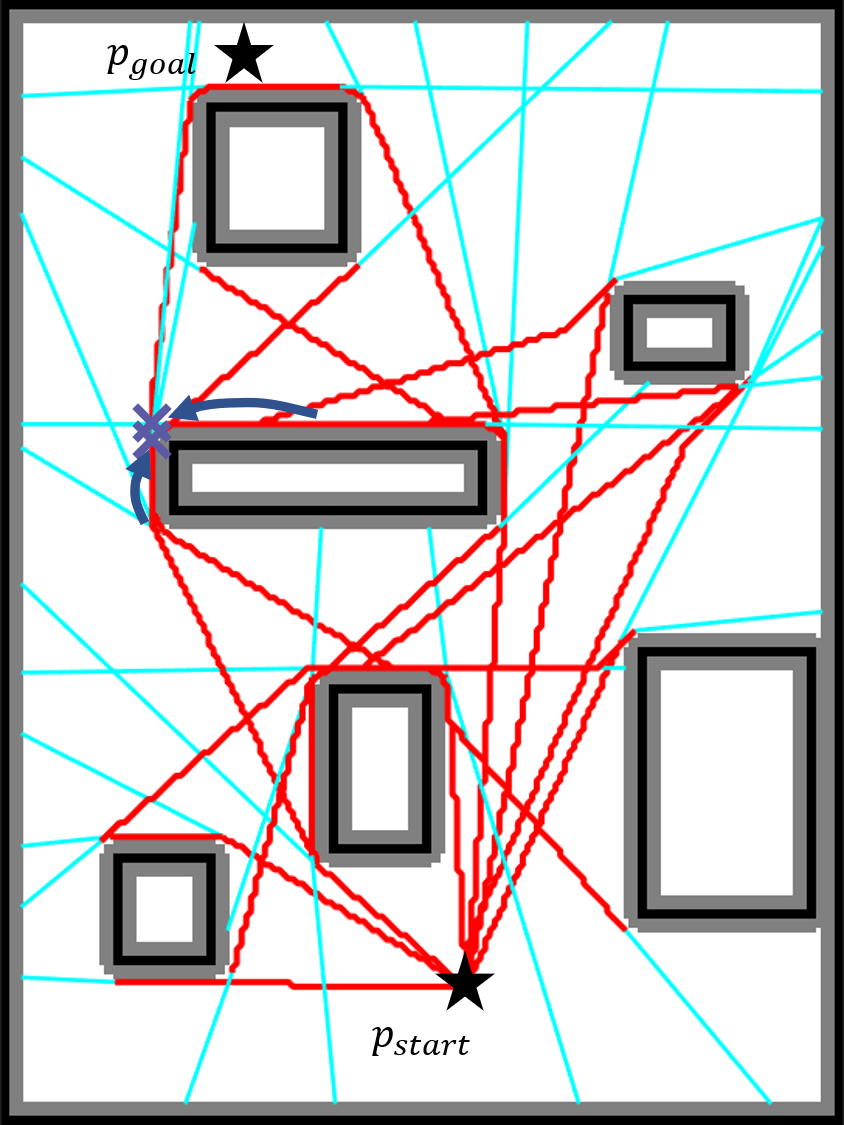}
}
\end{minipage}
\begin{minipage}[b]{0.19\linewidth}
\centering
\subfigure[]{
\includegraphics[width=0.71\textwidth]{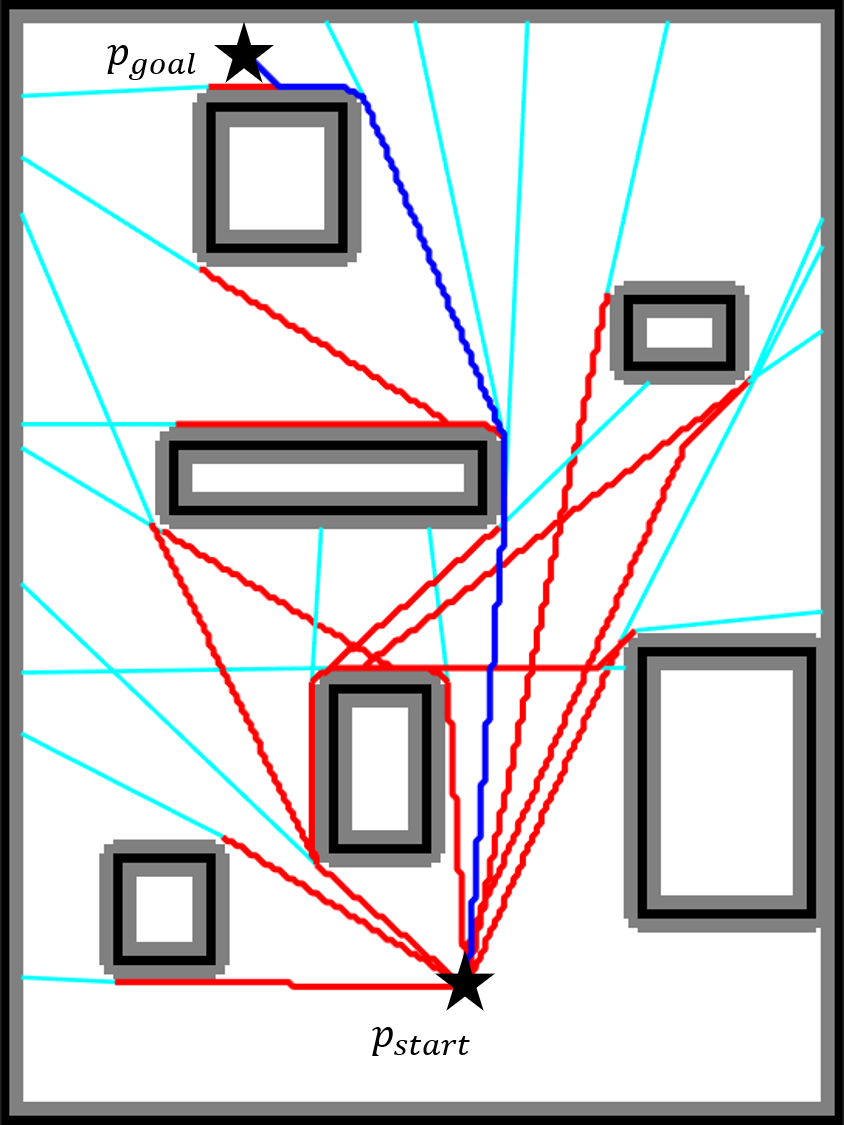}
}\\
\subfigure[]{
\includegraphics[width=0.71\textwidth]{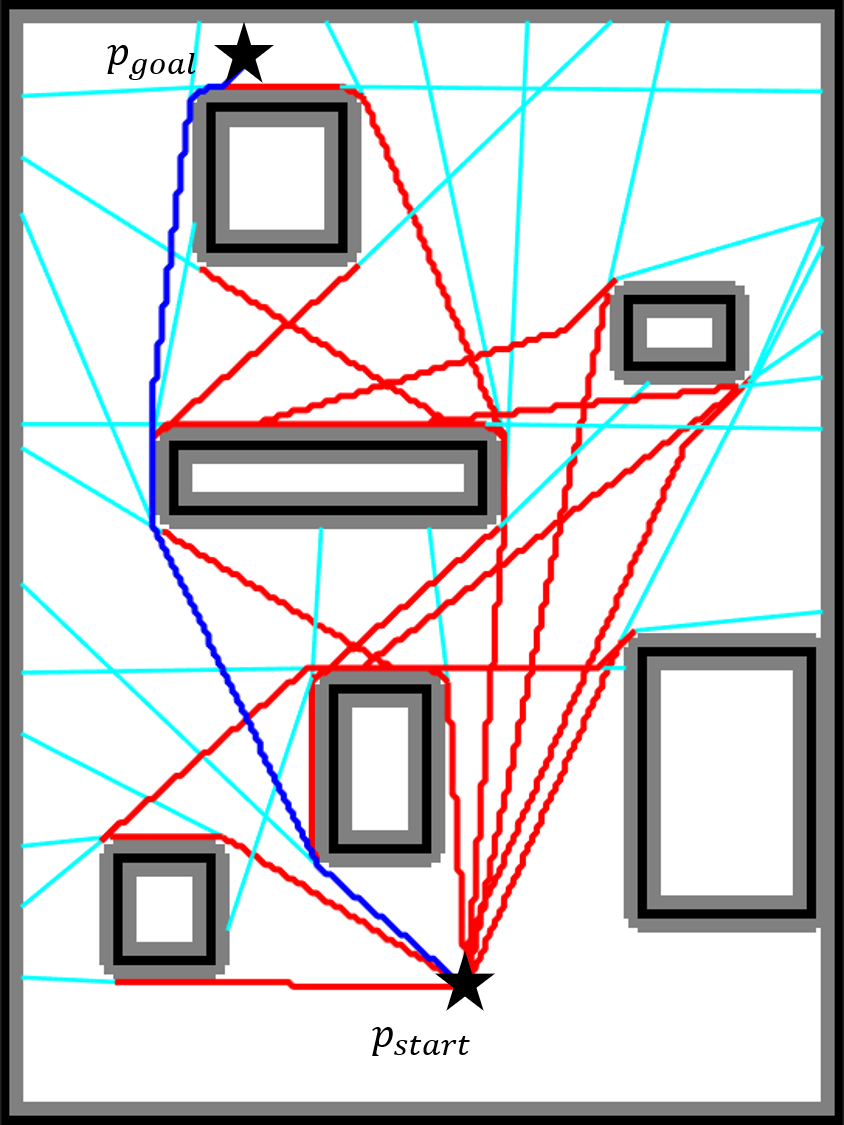}
}\\
\subfigure[]{
\includegraphics[width=0.71\textwidth]{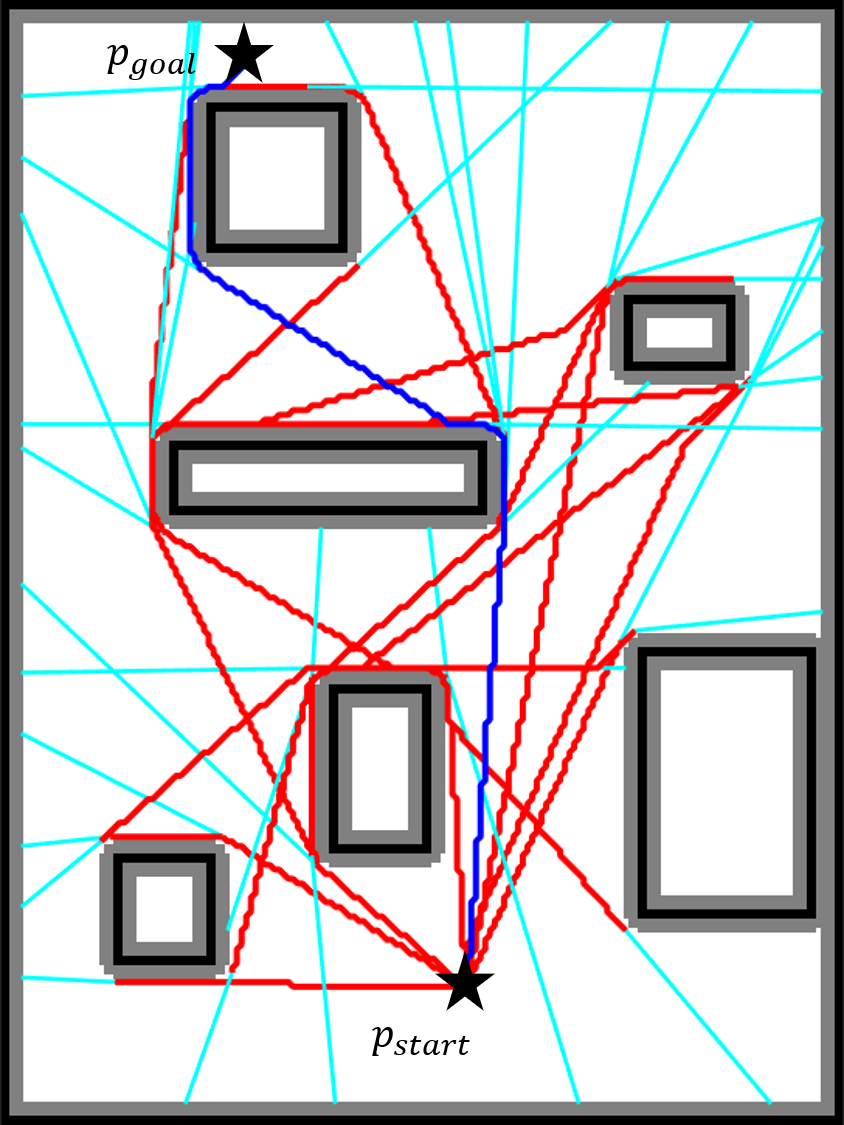}
}
\end{minipage}
\caption{Concrete steps of the proposed hierarchical topological tree and the homotopy simplification mechanism running in a multiply-connected environment for solving a $3$-SNPP problem. 
Black points and grey points are physical obstacles and C-space obstacles. 
The raycasting process is omitted in this figure. 
Red curves are the path segments found in narrow corridors, and cyan straight line segments are the gap sweepers. 
In (a)$\sim$(o), the $\times$ are the intersected points where a partial order relation between homotopy classes of paths is ascertained, and the purple arrows represent the relatively non-optimal path homotopies. 
The result paths of the $3$-SNPP problem are visualised in blue, in (p)(q)(r). }\label{fig:matlab}
\end{figure*}

\begin{figure}[t]
\centering
\subfigure[]{
\includegraphics[width=0.22\textwidth]{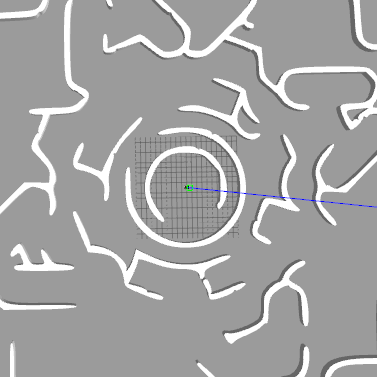}
}
\subfigure[]{
\includegraphics[width=0.22\textwidth]{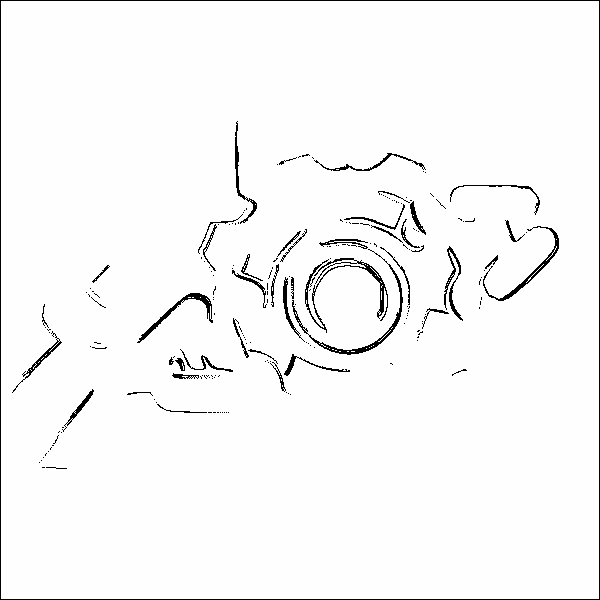}\label{fig:rvrl:map}
}
\caption{Illustration of (a) the simulated environment given by RoboCup 2019 RVRL, (b) the pre-constructed map for tests. 
The environment has been re-scaled to fit into a $600\times600$ grid-map. 
}\label{fig:map_and_robot}
\end{figure}

\begin{figure}[t]
\centering
\subfigure[Ours]{
\includegraphics[width = 0.22\textwidth]{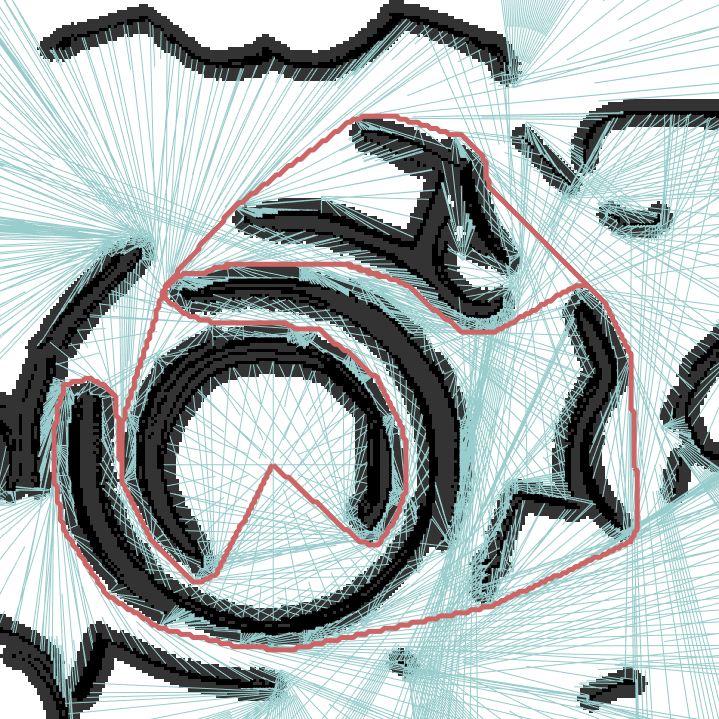}
}
\subfigure[\cite{Bhattacharya2012Topological}]{
\includegraphics[width = 0.22\textwidth]{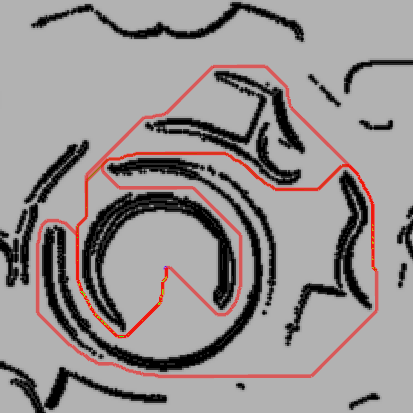}
}
\caption{The solution of a $4$-SNPP demo from $(350, 300)$ to $(467, 299)$. (a) The solution given by the proposed implementation. Only the rays and the resultant paths are depicted. (b) The solution given by~\cite{Bhattacharya2012Topological}.
Both solutions are correct $k$-SNPP solutions since the distance optimality is measured in grid-map. 
}\label{fig:compare}
\end{figure}

\section{Experimental Results}\label{section_experiment}
The proposed $k$-SNPP algorithm is a hierarchical topological tree equipped with an efficient topology simplification mechanism to eliminate the pathfinding along non-$k$-optimal path homotopies in a multiply-connected environment from a distance-based perspective. 
To the best of the author's knowledge, there does not exist such a topology simplification mechanism before. 
So in the first experiment in Section~\ref{subsection_case_study} the whole construction of the hierarchical topological tree is decomposed into concrete steps. 
The second experiment in Section~\ref{subsection_ros} compares the proposed algorithm to existing works~\cite{Bhattacharya2012Topological} for solving $k$-SNPP. 
The experiments show that the proposed algorithm has significantly simplified the complexity of $k$-SNPP path searching to transform it into a computationally affordable task.

\begin{figure}[t]
\centering
\subfigure[]{
\includegraphics[width=0.22\textwidth]{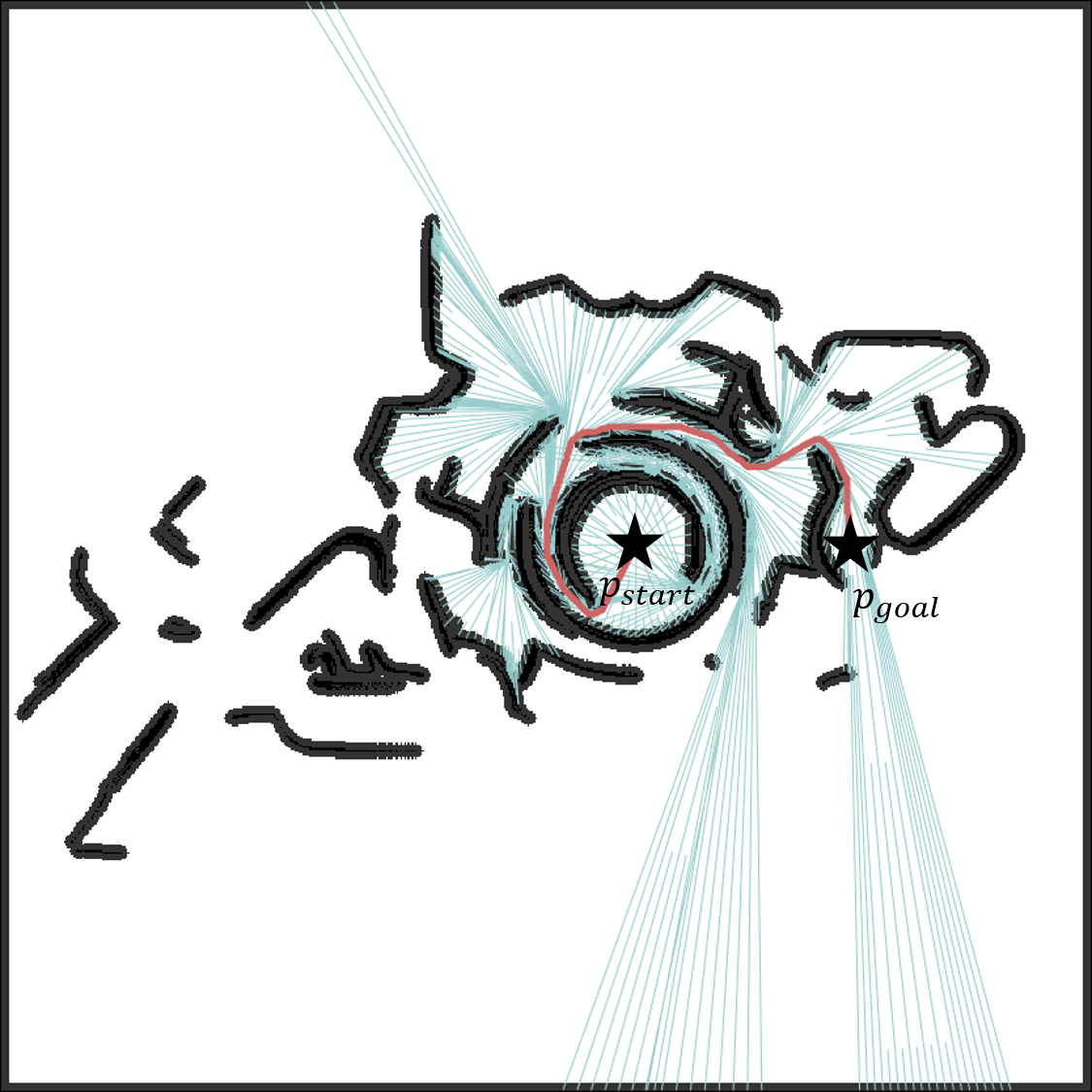}
}
\subfigure[]{
\includegraphics[width=0.22\textwidth]{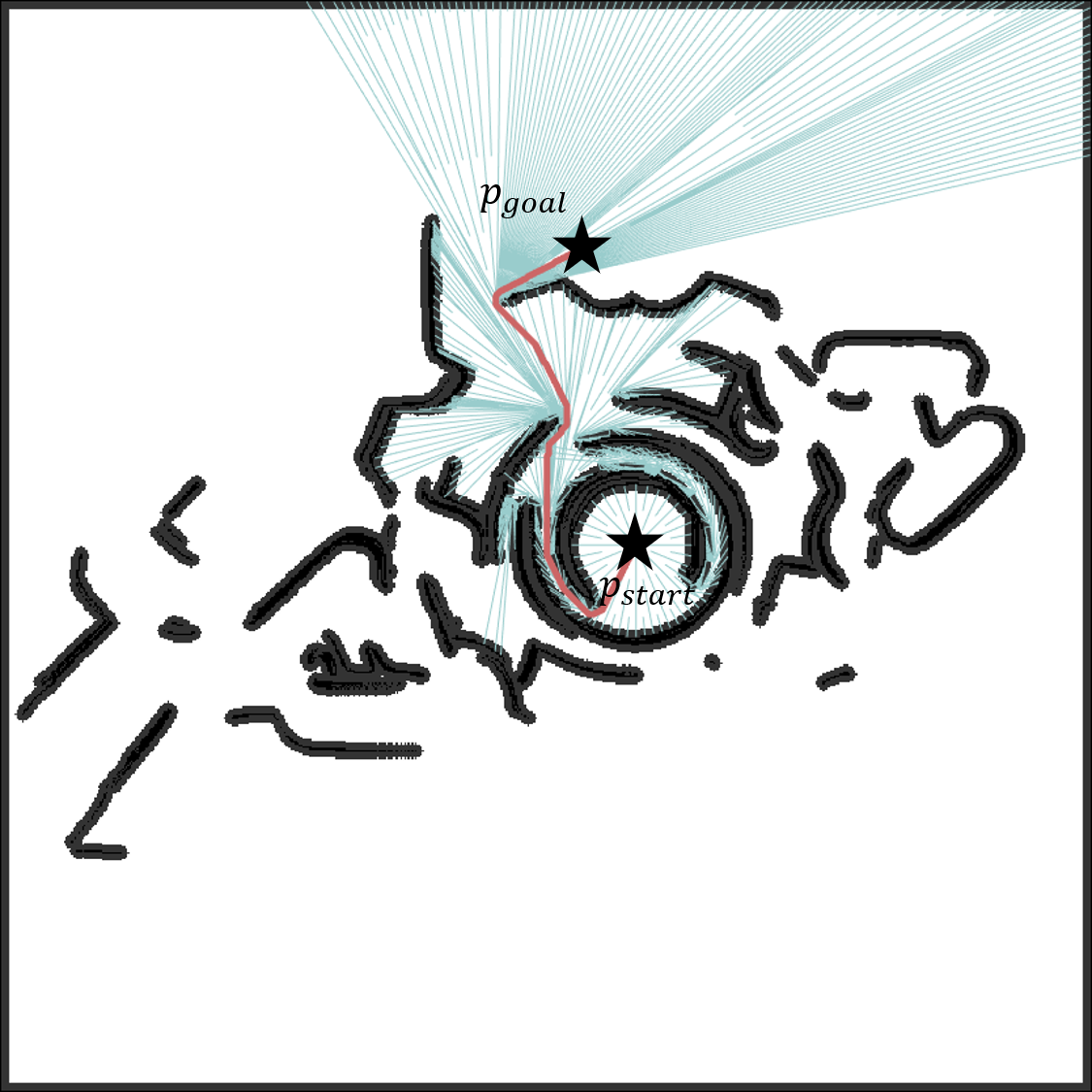}
}
\subfigure[]{
\includegraphics[width=0.22\textwidth]{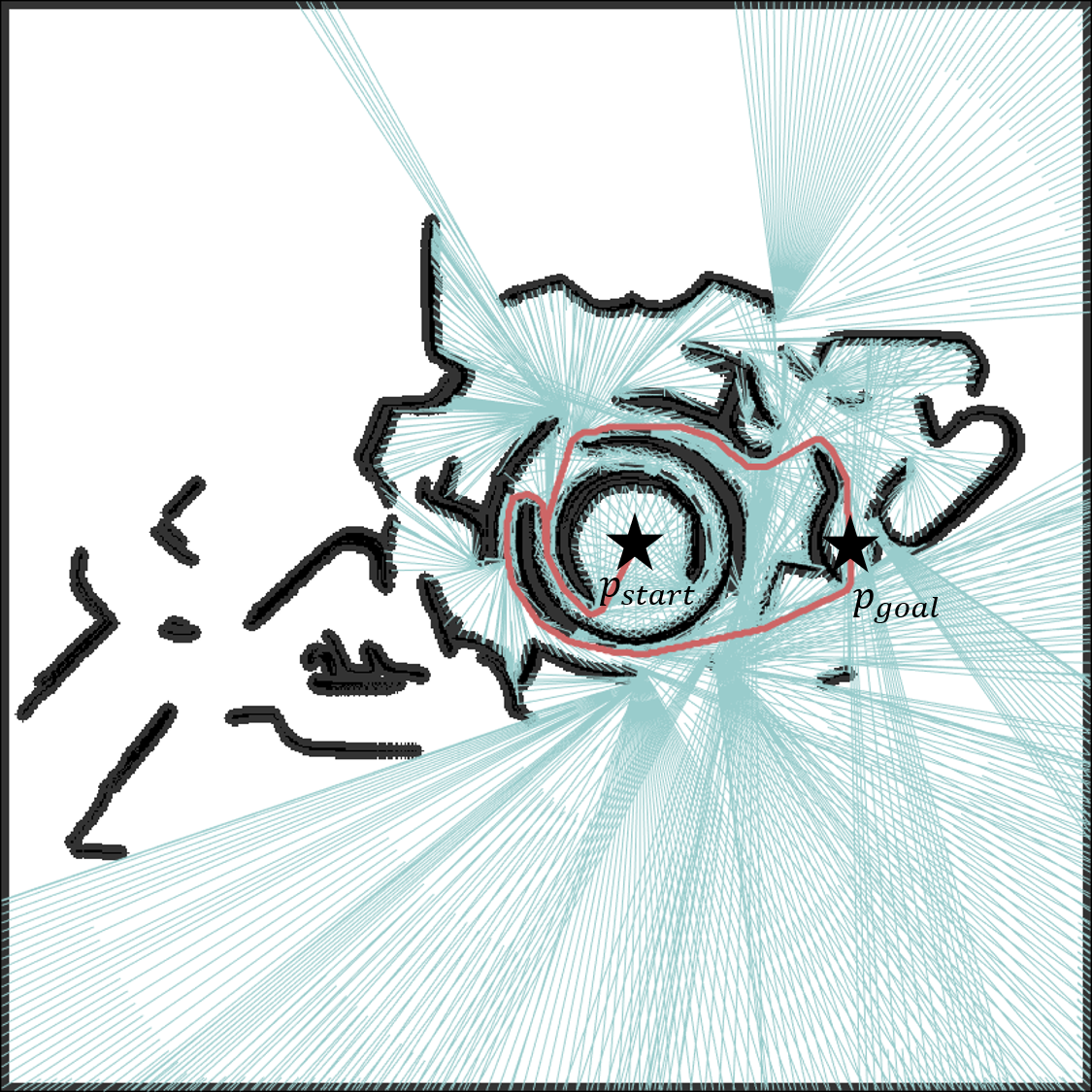}
}
\subfigure[]{
\includegraphics[width=0.22\textwidth]{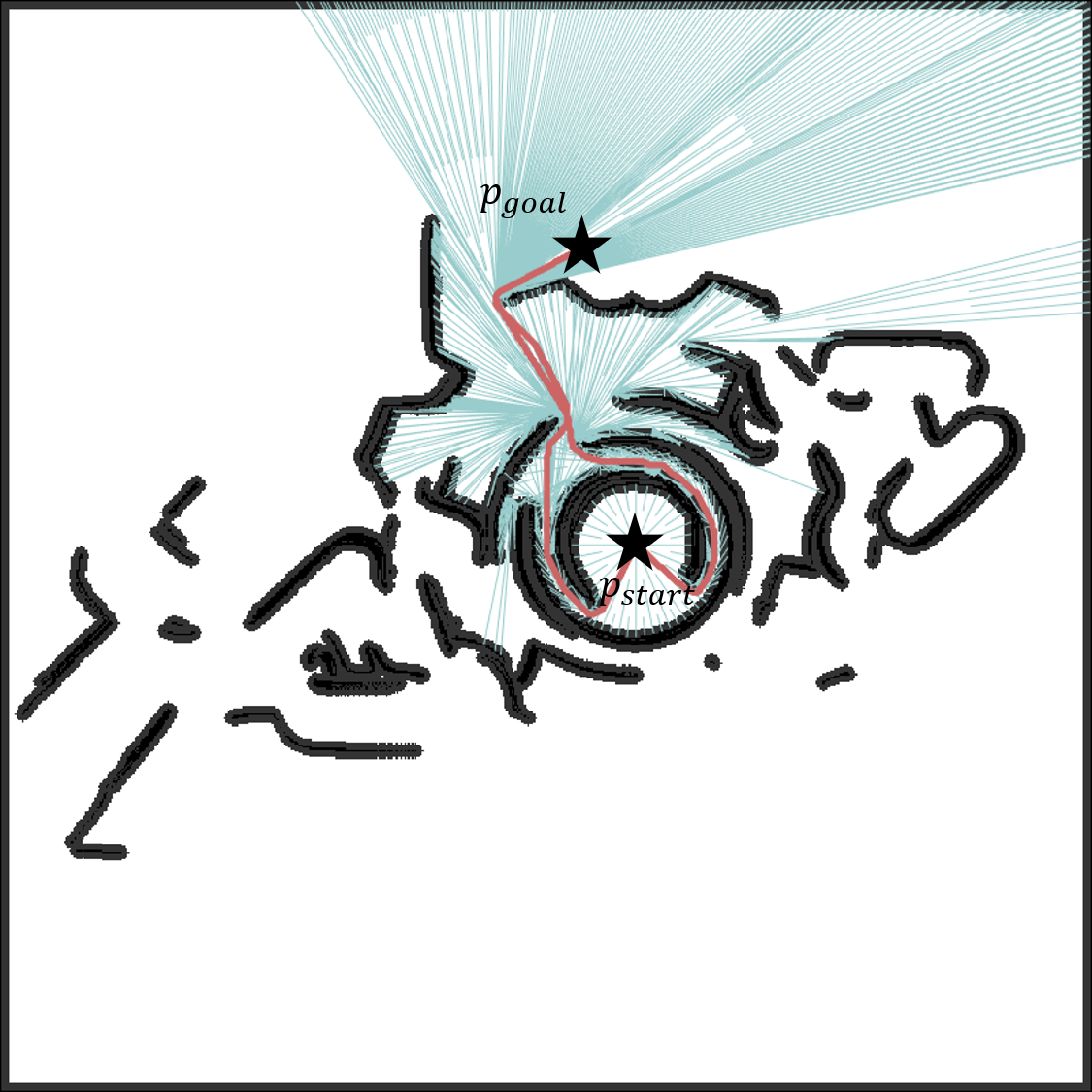}
}
\subfigure[]{
\includegraphics[width=0.22\textwidth]{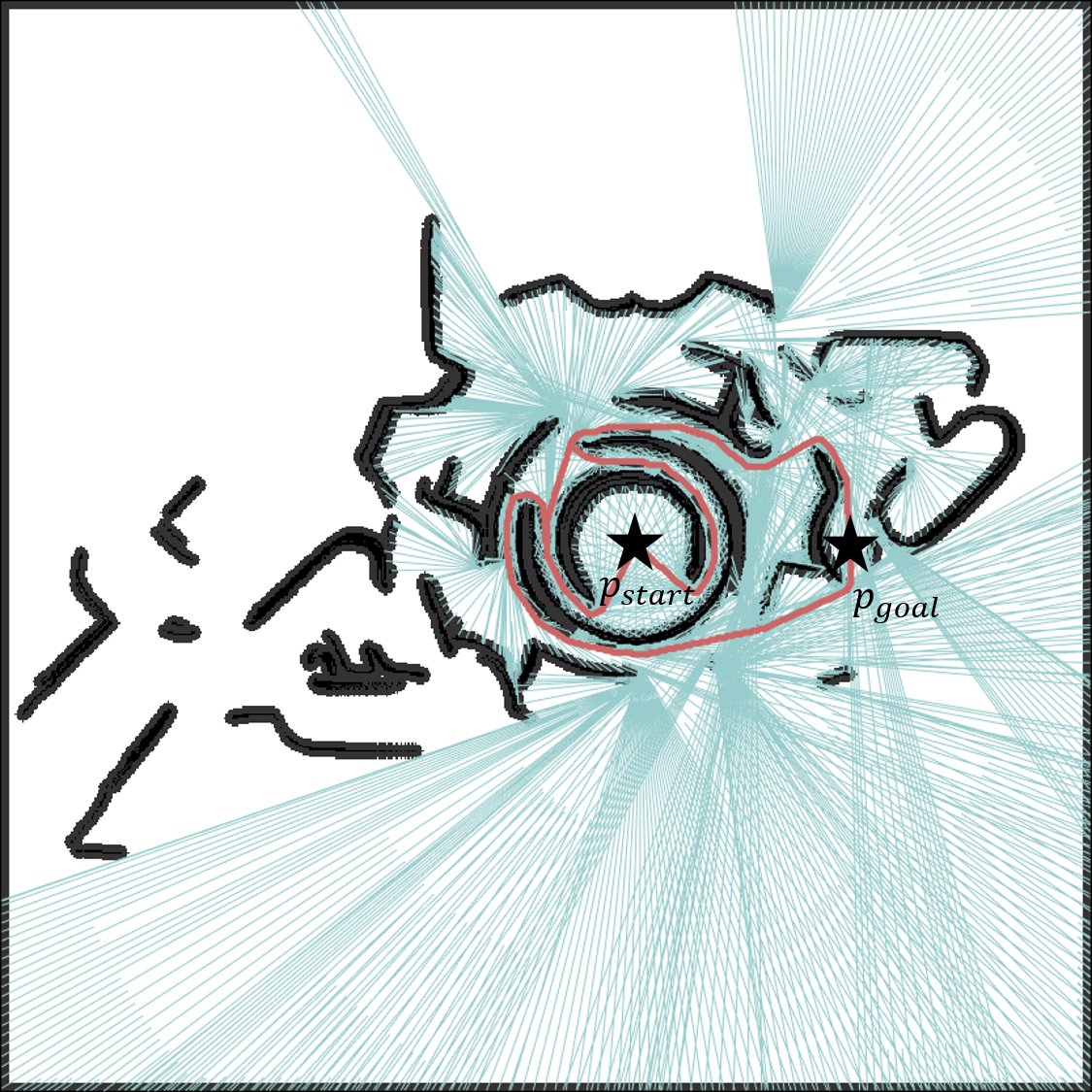}
}
\subfigure[]{
\includegraphics[width=0.22\textwidth]{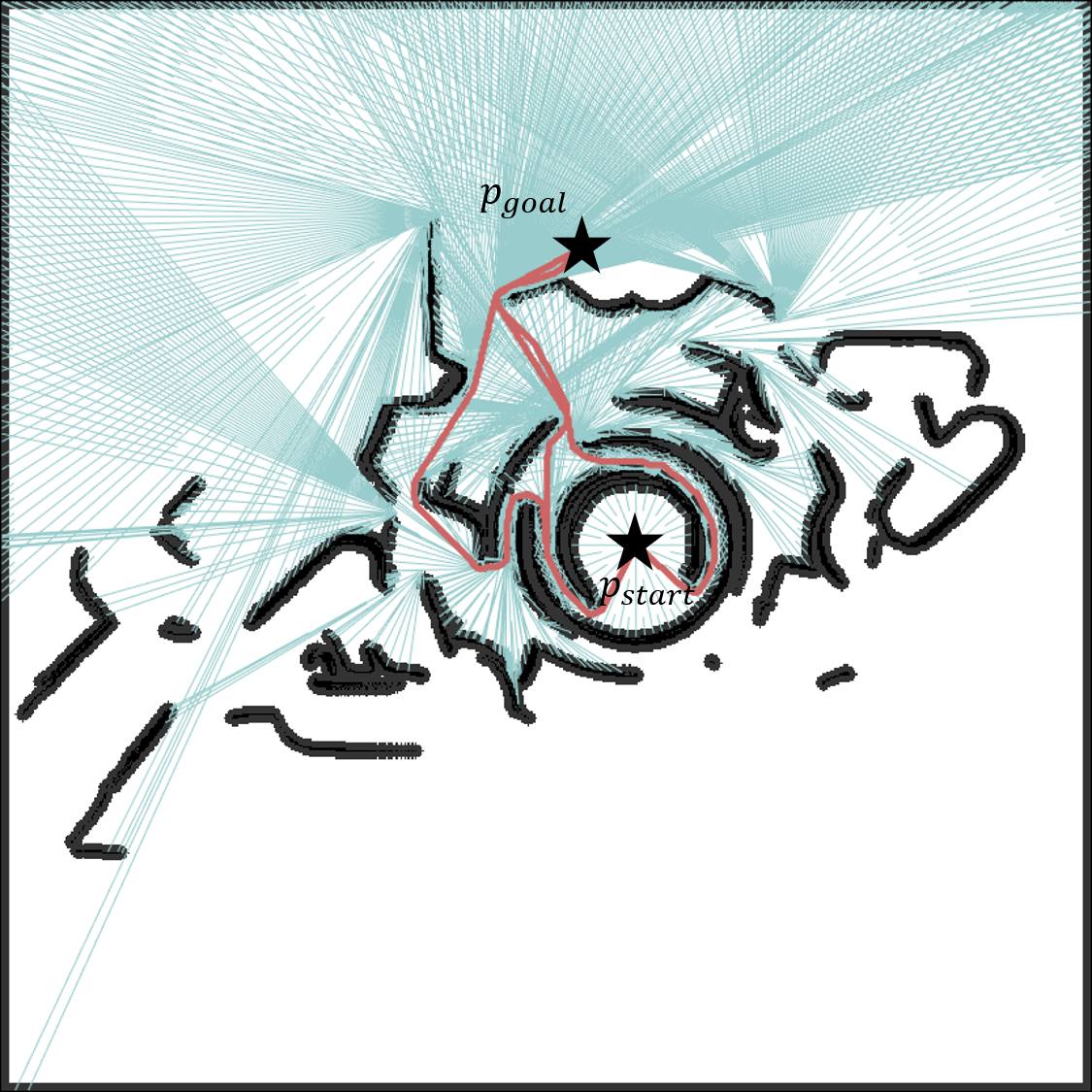}
}
\caption{
Illustration of the algorithm execution. 
Left figures are the (a) $1$-SNPP, (c) $2$-SNPP, and (e) $3$-SNPP solution from $(350, 300)$ to $(467, 299)$. 
Right figures are the (b) $1$-SNPP, (d) $2$-SNPP, and (f) $3$-SNPP solution from $(350, 300)$ to $(320, 463)$. 
The obstacles have been inflated based on the robot's radius, shown in black. 
The rays are visualised in grey, with the resulting paths being shown in red. 
}\label{fig:617_499}\label{fig:470_663}
\end{figure}

\subsection{Case Study}\label{subsection_case_study}
In this experiment, we illustrate the iterative expansion of the proposed hierarchical topological tree by a case study of solving $3$-SNPP implemented in MATLAB. 
See Fig.~\ref{fig:matlab} for illustration. 
The robot's initial location is at the bottom-central of the map, and the goal is set at the top-central. 

Fig.~\ref{fig:matlab}(a)$\sim$(o) are screenshots after node $1\sim 15$ are constructed. 
Black and grey grids represent the physical obstacles and C-space obstacles respectively. 
Tree edges are drawn in red curves, and gap sweepers are drawn in cyan. 
The topological structure intersections that indicate valid relative optimality between distinguished homotopies are marked by $\times$. 
The relatively non-optimal distinguished homotopy will be marked by blue arrows. 
Fig.~\ref{fig:matlab}(p)$\sim$(r) are screenshots when the $1$-st, $2$-nd, and $3$-rd shortest non-homotopic paths are obtained. 

It can be seen that the proposed algorithm can figure out relatively non-optimal topologies even at a very early stage of the planning process. 
Most notably, see Fig.~\ref{fig:matlab}(a), where only two nodes have been constructed, two gap sweepers intersect and a child of the second node is inspected to be relatively non-optimal, and is thus paused because the algorithm is currently looking for the globally shortest path. 
This makes the $1$-st shortest path obtained with very high efficiency (after only $6$ nodes are expanded). 
For the subsequent processes, since the shape of nodes is independent of the sequencing order of nodes, we only need to continue expanding the leaf nodes, no re-expansion is required. 
Finally, the $3$-SNPP problem is solved after $19$ nodes are expanded, as shown in Fig.~\ref{fig:matlab}(r). 

\subsection{Computational Time}\label{subsection_ros}
The algorithm is also implemented in C++~\footnote{Experiments are run on a computer with I7-8700 CPU and 32GB RAM}. 
The radius of the robot is set as $0.8m$. 
The map for testing is a challenging environment given by the RoboCup 2019 Rescue Virtual Robot League (RVRL)~\footnote{\url{https://github.com/RoboCupRescueVirtualRobotLeague/RoboCup2019RVRL\_Demo/wiki}}, as shown in Fig.~\ref{fig:map_and_robot}. 
The size of the map is $120m$, and the grid resolution is $0.2m$. 
So there are $600\times 600$ grids in the map. 
With the existence of concave obstacles, narrow passages, and large obstacle-free regions, the evaluation is persuasive. 
Given the fact that the inflation of obstacles has been effectively implemented and widely utilised in the cost-map module, the collision-checking process in all algorithms is simplified to just reading the value from the cost-map. 
The robot's location is set at the centre, $(350, 300)$ (grid index). 
We compare the computational time between the algorithm proposed in~\cite{Bhattacharya2012Topological} and ours. 
The locally optimal paths of a single topology in a grid-map are not unique, so the solutions of the two algorithms are not precisely the same, as shown in Fig.~\ref{fig:compare}, where however both of them are correct $k$-SNPP solutions. 
The screenshots of the proposed algorithm finding $1$-SNPP, $2$-SNPP and $3$-SNPP towards $(467, 299)$ and $(320, 463)$ have been visualised in Fig.~\ref{fig:617_499}. 
The statistics of the computational time have been summarised in Table.~\ref{tab:time}. 
Note that the time is not incrementally recorded. 
For example, the time for finding two and three shortest non-homotopic paths from $(350, 300)$ to $(445, 428)$ are 211.70ms and 302.02ms, respectively, which means that only 88.19ms has been addictively used to find the third shortest path that is non-homotopic to the shortest two paths. 
Statistics in Table.~\ref{tab:time} show that the proposed mechanism reduces the computational time of solving the $k$-SNPP problem with a proportion of $\geq$93\%. 
This eventually makes $k$-SNPP a computationally reasonable task. 

\begin{table*}[t]\footnotesize
\centering
\caption{Computational Time (Average of 10 Trials)}\label{tab:time}
\begin{tabular}{|c|c|c|c|c|c|c|c|c|c|c|c|c|}
\hline
\multirow{2}{*}{} & \multicolumn{3}{c|}{$(445, 428)$} & \multicolumn{3}{c|}{$(320, 463)$} & \multicolumn{3}{c|}{$(467, 299)$} & \multicolumn{3}{c|}{$(111, 206)$}\\
\cline{2-13}
& Ours & Baseline$^1$ & \%$^2$ & Ours & Baseline$^1$ & \%$^2$ & Ours & Baseline$^1$ & \%$^2$ & Ours & Baseline$^1$ & \%$^2$ \\
\hline
\hline
$1$-SNPP & \textbf{211.70ms} & 8.31s& 2.55\% & \textbf{189.29ms} & 7.17s& 2.64\% & \textbf{272.03ms} & 27.98s& 0.97\% & \textbf{82.38ms} & 27.11s& 0.3\%\\
\hline
$2$-SNPP & \textbf{302.02ms} & 12.45s& 2.43\% & \textbf{773.93ms} & 11.27s& 6.87\% & \textbf{640.77ms} & 53.11s& 1.21\% & \textbf{812.74ms} & 98.44s& 0.83\%\\
\hline
$3$-SNPP & \textbf{320.21ms} & 24.43s& 1.31\% & \textbf{983.92ms} & 134.13s& 0.73\% & \textbf{656.06ms} & 90.00s& 0.73\% & \textbf{866.45ms} & 99.38s& 0.87\%\\
\hline 
$4$-SNPP & \textbf{335.77ms} & 26.18s& 1.28\% & \textbf{998.81ms} & 153.85s& 0.65\% & \textbf{835.78ms} & 123.18s& 0.68\% & \textbf{877.72ms} & 223.93s& 0.39\%\\
\hline
\end{tabular}
\begin{tablenotes}
\item $^1$ Baseline: \cite{Bhattacharya2012Topological}
\item $^2$ \%: The proportion of computational time, Ours /\cite{Bhattacharya2012Topological}$\times 100$\%
\end{tablenotes}
\end{table*}

\section{Related Works}~\label{section_relatedworks}
\subsection{Shortest Path Planning}
The years have witnessed many contributions to the shortest path planning (SPP) tasks in the mobile robot planning area~\cite{Lavalle2006Planning}. 
Early works such as the Visibility graph~\cite{Lozano1979Algorithm} and the Tangent graph~\cite{Liu1992Path} are theoretically sound for shortest path planning in structural environments. 
The Visibility graph~\cite{Lozano1979Algorithm} claimed that locally shortest paths for particle robots always consist of the straight paths connecting the vertices of polygonal obstacles. 
The Tangent graph~\cite{Liu1992Path} generalised the locally shortest path to the ``tangent" of polygonal or curved obstacles. 
Besides the polygonal and the curved map, grid-based maps were also adopted which establish a more flexible representation of the obstacle. 
Regarding the grids in the map as nodes, graph searching-based algorithms were developed for optimal planning, such as the Dijkstra algorithm~\cite{Dijkstra1959Note} and the A* algorithm~\cite{hart1968formal}. 
The SPP problem was also studied by sampling-based approaches~\cite{Kavraki1996Probabilistic}, which do not need a pre-discretisation of the environment, and thus are more flexible.
However, the random selection of collision-free positions as waypoints with distance-based connectivities between waypoints (PRM~\cite{Kavraki1996Probabilistic}, RRT~\cite{Lavalle2001Randomized}) has been shown with zero probability to be optimal~\cite{Karaman2011Sampling}. 
The milestone of the probabilistically optimal sampling-based planners, the RRT* algorithm~\cite{Karaman2011Sampling}, together with its subsequent progresses~\cite{Naderi2015rtrrtstar}~\cite{otte2016rrtx}, proposed a novel mechanism to rewire the node to the best parent in a small neighbourhood if lower movement cost can be found. 

It is observed that a \textit{shortcut mechanism}~\cite{Efrat2006Computing} is embedded in all existing SPP solutions, which is a process of using a new, shorter path to replace an old path with the same endpoints. 
Taking the A* algorithm as an example, once a lower-cost path towards an already-expanded node is constructed, the parent of the node is switched to the lowest-cost neighbour to ensure optimality, which is essentially a shortcut mechanism. 
Also, in sampling-based algorithms such as the RRT*~\cite{Karaman2011Sampling}, the key process to acquiring a distance-optimal path to a node is to look for its best parent node (within a neighbourhood, for an algorithmic complexity consideration). 
Although the graph searching-based planners~\cite{stentz1995focussed}~\cite{Koenig2004Lifelong} and the sampling-based planners~\cite{Naderi2015rtrrtstar}~\cite{otte2016rrtx} have been significantly improved in recent years, the time-costly shortcut mechanism has to be preserved for the distance optimality. 

\subsection{Topological Path Planning}

Compared to the solutions to SPP, solutions to the topological path planning (TPP) tasks are far less mature. 
In a simply-connected environment, all closed paths (the path which starts and terminates at the same point) can continuously shrink to a single point, i.e., homotopic to the single-point path. 
As such it is proven that the locally shortest path must be also the globally shortest path, which however is not true in a multiply-connected environment. 
In other words, the non-trivialness of homotopy classes of paths is intrinsic to the multiply-connected environment.

Early reports on the TPP problem are heuristic. 
Constructing a graph with randomly sampled waypoints as vertices~\cite{Kavraki1996Probabilistic} and generating multiple paths~\cite{Schmitzberger2002Capture} may be valid strategies for obtaining non-homotopic resulting paths, but there is no guarantee for the pairwise non-homotopy of paths. 
The \textit{visibility condition}~\cite{Leonard2008Path} can be utilised for the removal of homotopic resulting paths~\cite{Zhou2020Robust}. 
However, it cannot instruct the planner to find non-homotopic paths. 
So there is still no guarantee for acquiring non-homotopic path results. 
Within a limited computational time, the algorithms might not find the desired number ($k$) of non-homotopic paths. 
In the worst case, all founded paths might be homotopic. 
So early sampling-based strategies~\cite{Zhou2020Robust} were essentially looking for multiple resulting paths, expecting that some of them are non-homotopic. 

It has been noted that path segments in the Voronoi Graph~\cite{Aurenhammer1991Voronoi}~\cite{Pettre2005Navigation}~\cite{Banerjee2006Framework}~\cite{Wang2019Optimal} are unique amidst two obstacles, then different paths found in the Voronoi Diagram guarantee non-homotopy. 
Hence Voronoi-based methods are valid algorithms for TPP. 
However, when an environment is transformed into a Voronoi graph, the $k$ shortest Voronoi paths are not pairwise homotopic to the $k$ shortest non-homotopic paths in the grid-map. 
So locally optimising the $k$-shortest Voronoi paths will not be the correct $k$-SNPP solution. 
This is why the topology-awareness of the Voronoi graph cannot be generalised to solve $k$-SNPP. 
Another disadvantage for Voronoi-based methods is that generating Voronoi Graph is a costly exercise, but the $k$-shortest non-homotopic paths need not be Voronoi. 
This makes generating Voronoi Graph for path planning to be a last-sorted approach for TPP. 

There indeed exist algorithms~\cite{Schmitzberger2002Capture}~\cite{Hernandez2011Path} that adopt topological invariants to distinguish different homotopy classes of paths. 
Typical topological invariants~\cite{Pokorny2016Topological} are the $H$-signature~\cite{Bhattacharya2012Topological} and the winding number~\cite{Pokorny2016High}. 
Such algorithms are essentially carrying out the path planning in not the 2D environment but its \textit{universal covering space} (UCS), or sometimes called \textit{homotopy-augmented graph}. 
Since distance-based optimal path searching in UCS is a valid strategy for solving $k$-SNPP, we discuss it in detail in the next subsection. 

\subsection{Topology-Aware Distance-Optimal Path Planning}

Two kinds of homotopy-aware shortest path planners have been proposed~\cite{Hernandez2015Comparison}: 
One is to find the shortest path in the given path homotopy~\cite{Hernandez2011A}, and the other is to look for $k$ shortest non-homotopic paths among multiple path homotopies~\cite{Bhattacharya2012Topological}. 

For algorithms belonging to the first category~\cite{Hershberger1994Computing}~\cite{Bespamyatnikh2003Computing}~\cite{Efrat2006Computing}, the input data is a non-optimal path or the character of a homotopy class, and the resultant path is the locally shortest path in the given topology. 
The requirement of input topology restricts the algorithm applicability, because there may be no initial path available. 
And such algorithms bypass the problem of selecting the optimal path homotopies among a mammoth set of path homotopies. 
So we omit to survey them further.   

A systematic solution to the $k$-SNPP problem has been proposed by path searching in the \textit{universal covering space}~\cite{Bhattacharya2012Topological} (UCS). 
The UCS of a 2D multiply-connected environment is a manifold such that, the image of non-homotopic paths in the original environment will have different endpoints in the UCS:  
The image of the starting point is still a single configuration in the UCS, but the goal point will have multiple (infinite) image configurations. 
Then the $k$-SNPP problem in the 2D environment is transformed into a multi-goal SPP problem in the UCS, where the desired $k$ paths are those towards the nearest $k$ images of the goal point. 
Although the idea is novel in its topological representation, the pathfinding strategies~\cite{Bhattacharya2012Topological}~\cite{Yi2016Homotopy} and the distance-optimal planners adopted are independent. 
For example, A* may be adopted, whose homotopy awareness is to augment a node from a 2D location to a combination of the 2D location and the topological invariants of its currently shortest paths (e.g., the $H$-signature). 
Only the path-level shortcut mechanism is adopted, i.e., an old path will be replaced only when a newly constructed path has the same $H$-signature and is shorter than it. 
As for the paths that visit the same location but have different $H$-signatures, the algorithms have to preserve them all for topology completeness. 
Similar algorithms have been also proposed for tethered robot applications~\cite{Wang2018Topological}~\cite{Kim2015Path}, whose underlying ideas are essentially equivalent as above. 
This is in contrast to the distance-based topology simplification mechanism proposed in this paper, where our main concentration has been paid to comparing the locally shortest paths belonging to different topologies. 

Please also note that the classic $k$ shortest path planning problem in graph theory is different from the problem discussed in this paper because in graph theory the environment has been modelled into the concatenation of edges, and different paths rendezvous at node (a single point), thus different paths must be non-homotopic paths, and the travelling distance comparison of non-homotopic paths can be easily obtained by the path-level shortcut mechanism. 
In robotics, when the environment is transformed to an abstract graph~\cite{Simeon2000Visibility}~\cite{Schmitzberger2002Capture}~\cite{Blochliger2018Topomap}, the optimality metric changes jointly. 
Short-cutting the $k$ shortest paths obtained from an abstract graph~\cite{Daneshpajouh2011Heuristic} cannot yield the desired $k$-SNPP solution. 
In contrast, the problem tackled in this work appears exactly because of the non-existence of a single point that all paths in two homotopy classes of paths will visit.

\section{Conclusion}\label{section_conclusion}
The main contribution of this work is a systematic mechanism for distance-based topology simplification, which reduces the algorithmic complexity of finding the $k$-shortest non-homotopic paths. 
In a 2D environment with $n$ internal obstacles, $2^n$ different non-self-crossing homotopy classes of paths can be characterised, wherein only $k$ ones are desired and all other $(2^n-k)$ ones are unnecessary. 
All existing $k$-SNPP algorithms are equivalent to an exhaustive exploration process in the configuration space which is mathematically the universal covering space of the 2D environment until $k$ resultant paths are collected. 
This is a time-consuming practice, and removing unnecessary path topologies naturally motivates a comparison between different homotopy classes of paths, where the non-$k$-optimal topologies should be discarded as early as possible whilst planning. 

The main difficulty in carrying out topology simplification is to compare the length of locally shortest paths before knowing their length because the paths have not been constructed at an intermediate state of the planning process. 
To formally solve the problem, we have introduced a novel representation of topologies in an intermediate state of the path planning process, \textit{distinguished homotopy}.
A goal location relaxation strategy has been proposed in Section~\ref{section:goal_relaxation} which we think is necessary for any possible distance-optimal topology simplification mechanism that might be proposed in the future. 
A hierarchical topological tree has been developed in Section~\ref{section_node} and Section~\ref{section_theoretical_analysis}. 
Finally, the distance-based topology simplification mechanism has been built upon the topological tree in Section~\ref{section_related_optimality}. 
A step-by-step illustration of the proposed algorithm has been illustrated in this paper. 
Extensive comparisons for the algorithmic efficiency in C++ have been carried out. 



\bibliographystyle{ieeetr} 
\bibliography{TRO21}

\newpage

\end{document}